\newcommand{\todoa}[2][]{\xspace\todo[size=\tiny,color=Blue!20,#1]{A: #2}}
\newcommand{\todoy}[2][]{\xspace\todo[color=Cerulean!20,size=\tiny,#1]{Y: #2}}
\newtheorem{lemma}{Lemma}%[section]
\newtheorem{theorem}[lemma]{Theorem}
\newtheorem{cor}[lemma]{Corollary}
\newtheorem{proposition}[lemma]{Proposition}
\theoremstyle{definition}
\newtheorem{remark}[lemma]{Remark}
\newcommand{\iset}[1]{\left[#1\right]}
\newcommand{\cset}[2]{\left\{#1\,:\,#2\right\}}
\newcommand{\rar}{\rightarrow}
\newcommand{\lar}{\leftarrow}
\DeclareMathOperator*{\argmin}{argmin}
\DeclareMathOperator*{\argmax}{argmax}
\DeclareMathOperator*{\KL}{KL}
\newcommand{\inprod}[2]{\left\langle #1 , #2 \right\rangle}
\newcommand{\norm}[1]{\left\Vert #1 \right\Vert}
\newcommand{\N}{\mathbb{N}}
\newcommand{\Normal}{\mathcal{N}}
\newcommand{\R}{\mathbb{R}}
\newcommand{\gap}{d}
\newcommand{\e}{\epsilon}
\newcommand{\F}{\mathcal{F}}
\newcommand{\E}[1]{\mathbb{E}\left[#1\right]}
\newcommand{\I}[1]{\mathbb{I}\left\{#1\right\}}
\newcommand{\Prb}[1]{\Pr\left( #1 \right)}
\newcommand{\htheta}{\hat{\theta}}
\newcommand{\simplex}{\Delta}
\newcommand{\lc}{\lceil}
\newcommand{\rc}{\rceil}
\newcommand{\Str}{\mathcal{S}}
\newcommand{\Wea}{\mathcal{W}}
\newcommand{\addeq}{\addtocounter{equation}{1}\tag{\theequation}}
\newcommand{\dom}{\rho}
\newcommand{\domlp}{\dom_{\text{LP}}}
\newcommand{\indp}{\kappa}
\newcommand{\wtTheta}{\widetilde{\Theta}}
\title{Online Learning with Gaussian Payoffs and Side Observations}
\author{
Yifan Wu,\  Csaba Szepesv\'ari\\
Dept. of Computing Science\\
University of Alberta\\
\texttt{\{ywu12,szepesva\}@ualberta.ca} \\
\And
Andr\'as Gy\"orgy \\
Dept. of Electrical and Electronic Engineering \\
Imperial College London \\
%Dept. of Computing Science \\
%University of Alberta\\
\texttt{a.gyorgy@imperial.ac.uk} \\
% \And
% Csaba Szepesv\'ari \\
% Dept. of Computing Science \\
% University of Alberta \\
% \texttt{szepesva@ualberta.ca} \\
}
\begin{document}

\maketitle

\begin{abstract}
We consider a sequential learning problem with Gaussian payoffs and side information: after selecting an action $i$, the learner
receives information about the payoff of every action $j$ in the form of Gaussian observations whose mean is the same as the mean payoff, but the variance depends on the pair $(i,j)$ (and may be infinite). The setup allows a more refined information transfer from one action to another than previous partial monitoring setups, including the recently introduced graph-structured feedback case. For the first time in the literature, we provide non-asymptotic problem-dependent lower bounds on the regret of any algorithm, which recover existing asymptotic problem-dependent lower bounds and finite-time minimax lower bounds available in the literature. We also provide algorithms that achieve the problem-dependent lower bound (up to some universal constant factor) or the minimax lower bounds (up to  logarithmic factors). %, the first time in the literature for a setting beyond bandit-feedback. 
\end{abstract}

\section{Introduction}

Online learning in stochastic environments is a sequential decision problem where in each time step a learner chooses an action from a given finite set, observes some random feedback and receives a random payoff. Several feedback models have been considered in the literature: The simplest is the full information case where the learner observes the payoff of all possible actions at the end of every round. A popular setup is the case of bandit feedback, where the learner only observes its own payoff and receives no information about the payoff of other actions \cite{BuCB12}. 
Recently, several papers considered a more refined setup, called graph-structured feedback, that interpolates between the full-information and the bandit case: here the feedback structure is described by a (directed) graph, and choosing an action reveals the payoff of all actions that are connected to the selected one, including the chosen action itself. This problem, motivated for example by social networks, has been studied extensively in both the adversarial  \cite{MaSha11,AlCeGe13,KoNeVa14,AlCeDe15} and the stochastic cases \cite{CaKvLe12,BuErSh14}. 
However, most algorithms presented heavily depend on the self-observability assumption (that is, that the payoff of the selected action can be observed). Removing this self-loop assumption leads to the so-called partial monitoring case \cite{AlCeDe15}.
In the absolutely general partial monitoring setup the learner receives some general feedback that depends on its choice (and the environment), with some arbitrary (but known) dependence \cite{CeLu06,BaFoPaRaSze14}. While the partial monitoring setup covers all other problems, its analysis has concentrated on the finite case where both the set of actions and the set of feedback signals are finite \cite{CeLu06,BaFoPaRaSze14}, which is in contrast to the standard full information and bandit settings where the feedback is typically assumed to be real-valued.
The only exception to this case is the work of \cite{AlCeDe15}, which considers graph-structured feedback without the self-loop assumption.

In this paper we consider a generalization of the graph-structured feedback model that can also be viewed as a general partial monitoring model with real-valued feedback. We assume that selecting an action $i$ the learner can observe a random variable $X_{ij}$ for each action $j$ whose mean is the same as the payoff of $j$, but its variance $\sigma^2_{ij}$ depends on the pair $(i,j)$. For simplicity, throughout the paper we assume that all the payoffs and the $X_{ij}$ are Gaussian. While in the graph-structured feedback case one either has observation on an action or not, but the observation always gives the same amount of information, our model is more refined: Depending on the value of $\sigma_{ij}$, the information can be of different quality. For example, if $\sigma^2_{ij} = \infty$, trying action $i$ gives no information about action $j$. In general, for any $\sigma_{ij}^2<\infty$, the value of the information depends on the time horizon $T$ of the problem: when $\sigma^2_{ij}$ is large relative to $1/\sqrt{T}$ (and the payoff differences of the actions) essentially no information is received, while a small variance results in useful observations. 

After defining the problem formally in Section~\ref{sec:pre}, we provide non-asymptotic problem-dependent lower bounds in Section~\ref{sec:lower}, which depend on the distribution of the observations through their mean payoffs and variances. To our knowledge, these are the first such bounds presented for any stochastic partial monitoring problem beyond the full-information setting: previous work either presented asymptotic problem-dependent  lower bounds (e.g., \cite{GrLa97,BuErSh14}), or finite-time minimax bounds (e.g., \cite{BaFoPaRaSze14,AlCeGe13,AlCeDe15}). Our bounds can recover all previous bounds up to some universal constant factors not depending on the problem. In Section~\ref{sec:algs}, we present two algorithms with finite-time performance guarantees for the case of graph-structured feedback without the self-observability assumption. While due to their complicated forms it is hard to compare our finite-time upper and lower bounds, we show that our first algorithm achieves the asymptotic problem-dependent lower bound up to problem-independent multiplicative factors. Regarding the minimax regret, the hardness ($\wtTheta(T^{1/2})$ or $\wtTheta(T^{2/3})$ regret) of partial monitoring problems is characterized by their global/local observability property \cite{BaFoPaRaSze14} or, in case of the graph-structured feedback model, by their strong/weak observability property \cite{AlCeDe15}. In the same section we present another algorithm that achieves the minimax regret (up to logarithmic factors) under both strong and weak observability, and achieves an $O(\log^{3/2} T )$ problem-dependent regret. Earlier results for the stochastic graph-structured feedback problems \cite{CaKvLe12,BuErSh14} provided only asymptotic problem-dependent lower bounds and performance bounds that did not match the asymptotic lower bounds or the minimax rate up to constant factors. Finally, we draw conclusions and consider some interesting future directions in Section~\ref{sec:conc}. 
Due to space constraints, all proofs are deferred to the appendix.
%Proofs can be found in the longer version of this paper \cite{WuGySz15}.

\if0
The following definition of strong and weak observabilities comes from \cite{AlCeDe15}. An action is \emph{observable} if its payoff can be observed by playing some action (may or may not be itself).  An action is \emph{strongly observable} if either its payoff can be observed by itself or its payoff can be observed by all other actions. An action is \emph{weakly observable} if it is observable but not strongly. A problem instance is observable if all its actions are observable. A problem instance is strongly observable if all its actions are strongly observable and it is weakly observable if it is observable but not strongly. This definition of observabilities also matches the global and local observabilities in \cite{BaFoPaRaSze14}. Here we only consider problem instances that are observable.
\fi

\section{Problem Formulation} 
\label{sec:pre}

Formally, we consider an online learning problem with \emph{Gaussian payoffs and side observations}:
Suppose a learner has to choose from $K$ actions in every round. When choosing an action, the learner receives a random payoff and also some side observations corresponding to other actions. More precisely, each action $i \in \iset{K}=\{1,\ldots,K\}$ is associated with some parameter $\theta_i$, and the payoff $Y_{t,i}$ to action $i$ in round $t$ is normally distributed random variable with mean $\theta_i$ and variance $\sigma^2_{ii}$, while the learner observes a $K$-dimensional Gaussian random vector $X_{t,i}$ whose $j$th coordinate is a normal random variable with mean $\theta_j$ and variance $\sigma_{ij}^2$ (we assume $\sigma_{ij} \ge 0$) and the coordinates of $X_{t,i}$ are independent of each other. We assume the following: (i) the random variables $(X_t,Y_t)_t$ are independent for all $t$; (ii) the parameter vector $\theta$ is unknown to the learner but it knows the variance matrix $\Sigma=(\sigma^2_{ij})_{i,j\in \iset{K}}$ in advance; (iii) $\theta \in [0,D]^K$ for some $D>0$\todoa{Is this really needed?}; (iv) $\min_{i \in \iset{K}} \sigma_{ij} \le \sigma<\infty$ for all $j\in\iset{K}$, that is, the expected payoff of each action can be observed. 

The goal of the learner is to maximize its payoff or, in other words, minimize the expected regret
\[
R_T= T\max_{i\in\iset{K} }\theta_i - \sum_{t=1}^T \E{Y_{t,i_t}} 
\]
where $i_t$ is the action selected by the learner in round $t$.

Note that the problem encompasses several common feedback models considered in online learning (modulo the Gaussian assumption), and makes it possible to examine more delicate observation structures:
\begin{description}
\item[Full information:] $\sigma_{ij}=\sigma_j<\infty$ for all $i,j \in \iset{K}$.
\item[Bandit:] $\sigma_{ii}<\infty$ and $\sigma_{ij}=\infty$ for all $i \neq j\in \iset{K}$.
\item[Partial monitoring with feedback graphs \cite{AlCeDe15}:] Each action $i\in \iset{K}$ is associated with an observation set $S_i\subset \iset{K}$ such that $\sigma_{ij}=\sigma_j$ if $j\in S_i$ and $\sigma_{ij}=\infty$ otherwise.
\end{description}
We will call the \emph{uniform variance} version of these problems when all the finite $\sigma_{ij}$ are equal to some $\sigma\ge 0$. 
Some interesting features of the problem can be seen when considering the \emph{generalized full information} case , 
when all entries of $\Sigma$ are finite. In this case, the greedy algorithm, which estimates the payoff of each action by the average of the corresponding observed samples and selects the one with the highest average, achieves at most a constant regret for any time horizon $T$.\footnote{To see this, notice that the error of identifying the optimal action decays exponentially with the number of rounds.}
On the other hand, the constant can be quite large: in particular, when the variance of some observations are large relative to the gaps $\gap_j=\max_i \theta_i - \theta_j$, the situation is rather similar to a partial monitoring setup for a smaller, finite time horizon. In this paper we are going to analyze this problem and present algorithms and lower bounds that are able to ``interpolate'' between these cases and capture the characteristics of the different regimes.

\subsection{Notation}

Let $C_T^{\N} = \{ c\in \N^K \,:\, c_i\ge 0\,, \sum_{i\in\iset{K}} c_i =T \}$ and $N(T) \in C_T$ denote the number of plays over all actions taken by some algorithm in $T$ rounds. Also let $C_T^{\R} = \{ c\in \R^K \,:\, c_i\ge 0\,, \sum_{i\in\iset{K}} c_i =T \}$. We will consider environments with different expected payoff vectors $\theta \in \Theta$, but the variance matrix $\Sigma$ will be fixed. Therefore, an environment can be specified by $\theta$; oftentimes, we will explicitly denote the dependence of different quantities on $\theta$: The probability and expectation functionals under environment $\theta$ will be denoted by $\Prb{\cdot;\theta}$ and $\E{\cdot;\theta}$, respectively. Furthermore, let $i_j(\theta)$ be the $j$th best action (ties are broken arbitrarily, i.e., $\theta_{i_1} \ge \theta_{i_2} \ge \cdots \ge \theta_{i_K}$) and define $\gap_i(\theta)=\theta_{i_1(\theta)}-\theta_i$ for any $i\in \iset{K}$. Then the expected regret under environment $\theta$ is $R_T(\theta) = \sum_{i\in \iset{K}}\E{N_i(T) ; \theta}\gap_i(\theta)$.
For any action $i \in \iset{K}$, let $S_i=\cset{j\in\iset{K}}{\sigma_{ij}<\infty}$ denote the set of actions whose parameter $\theta_j$ is observable by choosing action $i$.
Throughout the paper, $\log$ denotes the natural logarithm and $\simplex^n$ denotes the $n$-dimensional simplex for any positive integer $n$.

%%% Local Variables: 
%%% mode: latex
%%% TeX-master: "main"
%%% End: 

\section{Lower Bounds}
\label{sec:lower}

The aim of this section is to derive generic, problem-dependent lower bounds to the regret, which are also able to provide minimax lower bounds.
The hardness in deriving such bounds is that for any fixed $\theta$ and $\Sigma$, the dumb algorithm that always selects $i_1(\theta)$ achieves zero regret (the regret of this algorithm is linear for any $\theta'$ with $i_1(\theta)\neq i_1(\theta')$), so in general it is not possible to give a lower bound for a single instance.  
When deriving asymptotic lower bounds, this is circumvented by only considering \emph{consistent} algorithms whose regret is sub-polynomial for any problem \cite{GrLa97}.
However, this asymptotic notion of consistency is not applicable to finite-horizon problems. Therefore, following \cite{LiMS15}, for any problem we create a family of \emph{related} problems (by perturbing the mean payoffs) such that if the regret of an algorithm is ``too small'' in one of the problems than it will be ``large'' in another one. 

As a warm-up, and to show the reader what form of a lower bound can be expected, first we present an asymptotic lower bound for the  uniform-variance version of the problem of \emph{partial monitoring with feedback graphs}. The result presented below is an easy consequence of \cite{GrLa97}, hence its proof is omitted. An algorithm is said to be \emph{consistent} if
$\sup_{\theta\in \Theta} R_T(\theta) = o(T^\gamma)$ for every $\gamma > 0$. Now assume for simplicity that there is a unique optimal action in environment $\theta$, that is, $\theta_{i_1(\theta)}>\theta_i$ for all $i\neq i_1$ and let
\begin{align*}
C_\theta = \left\{ c\in [0,\infty)^K \,:\, \sum_{i:j\in S_i} c_i \ge \frac{2\sigma^2}{\gap_j^2(\theta)} \, \forall j\ne i_1(\theta) \,, \sum_{i:i_1(\theta)\in S_i} c_i \ge \frac{2\sigma^2}{\gap_{i_2(\theta)}^2(\theta)} \right\} \,.
\end{align*}
Then, for any consistent algorithm and for any $\theta$ with $\theta_{i_1(\theta)}>\theta_{i_2(\theta)}$, 
\begin{align}
\label{eq:lb-asy}
\liminf_{T\rar \infty} \frac{R_T(\theta)}{\log T} \ge \inf_{c\in C_\theta} \inprod{c}{\gap(\theta)} \,.
\end{align}

% \begin{theorem}
% For any consistent algorithm, the expected regret $R_T(\theta)$ for any $\theta$ with $\theta_{i_1(\theta)}>\theta_{i_2(\theta)}$ satisfies
% \begin{align}
% \liminf_{T\rar \infty} \frac{R_T(\theta)}{\log T} \ge \inf_{c\in C_\theta} \inprod{c}{\gap(\theta)} \,.
% \end{align}
% \label{thm:lb-asy}
% \end{theorem}

% \begin{proof}
% ?
% \end{proof}

Note that the right hand side of \eqref{eq:lb-asy} is $0$ for any \emph{generalized full information} problem (recall that the expected regret is bounded by a constant for such problems), but it is a finite positive number for other problems. Similar bounds have been provided in \cite{CaKvLe12,BuErSh14} for graph-structured feedback with self-observability (under non-Gaussian assumptions on the payoffs). In the following we derive finite time lower bounds that are also able to replicate this result.

\subsection{A General Finite Time Lower Bound}

First we derive a general lower bound. For any $\theta, \theta' \in \Theta$ and $q\in \simplex^{|C_T^\N|}$, define $f(\theta,q,\theta')$ as 
\begin{align*}
f(\theta,q,\theta') = & \inf_{q'\in \simplex^{|C_T^{\N}|}} \sum_{a\in C_T^{\N}} q'(a) \inprod{a}{\gap(\theta') } \\
 \text{ s.t. } 
 & \sum_{a\in C_T^{\N}} q(a)\log\frac{q(a)}{q'(a)} \le \sum_{i\in \iset{K}}  \left( I_i(\theta, \theta')\sum_{a\in C_T^{\N}} q(a)a_i \right) \,, 
\end{align*} where $I_i(\theta,\theta')$ is the KL-divergence between $X_{t,i}(\theta)$ and $X_{t,i}(\theta')$, given by $I_i(\theta,\theta') = \KL(X_{t,i}(\theta);X_{t,i}(\theta'))=\sum_{j=1}^K (\theta_j-\theta'_j)^2/2\sigma_{ij}^2$.\todoa{This assumes the coordinates of $X_{t,i}$ are independent!}\todoy{Yes. What if not? is that a interesting setting?}
Clearly, $f(\theta,q,\theta')$ is a lower bound on $R_T(\theta')$ for any algorithm for which the distribution of $N(T)$ is $q$. The intuition behind the allowed values of $q'$ is that we want $q'$ to be as similar to $q$ as the environments $\theta$ and $\theta'$ look like for the algorithm (through the feedback $(X_{t,i_t})_t$). Now define
\begin{align*}
g(\theta,c) = \inf_{q\in \simplex^{|C_T^{\N}|}} \sup_{\theta'\in \Theta}f(\theta,q,\theta'), \qquad \text{ such that } \sum_{a\in C_T^{\N}} q(a)a = c.
\end{align*} 
$g(\theta,c)$ is a lower bound of the worst-case regret of any algorithm %whose expected pull vector in environment $\theta$ is $c$, that is, 
with $\E{N(T);\theta}=c$. Finally, for any $x > 0$, define
\[
b(\theta,x) = \inf_{c\in C_{\theta,x}} \inprod{c}{\gap(\theta)} \qquad \text{ where }
C_{\theta,x} = \{c\in C_T^\R \,;\, g(\theta,c) \le x \}.
\]
Here $C_{\theta,B}$ contains all the value of $\E{N(T);\theta}$ that can be achieved by some algorithm whose lower bound $g$ on the worst-case regret is smaller than $x$. 
These definitions give rise to the following theorem:
\begin{theorem}
Given any $B>0$, for any algorithm such that $\sup_{\theta' \in \Theta} R_T(\theta) \le B$, we have, for any environment $\theta\in \Theta$, $R_T(\theta) \ge b(\theta,B)$.
\label{thm:lb-0}
\end{theorem}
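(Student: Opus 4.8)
The plan is to convert the three informal observations stated just before the theorem into a short chain of inequalities; essentially all of the substance sits in justifying the first of them, namely that $f(\theta,q,\theta')$ lower-bounds $R_T(\theta')$ whenever the law of $N(T)$ under $\theta$ equals $q$. First I would fix an algorithm with $\sup_{\theta'\in\Theta}R_T(\theta')\le B$ and fix the target environment $\theta$. Let $q\in\simplex^{|C_T^{\N}|}$ be the distribution of the count vector $N(T)$ under $\theta$, and set $c=\E{N(T);\theta}=\sum_{a\in C_T^{\N}}q(a)\,a$, so that $c\in C_T^{\R}$ and the regret of interest is exactly $R_T(\theta)=\inprod{c}{\gap(\theta)}$.

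The key lemma is: for every $\theta'\in\Theta$, $R_T(\theta')\ge f(\theta,q,\theta')$. To see this, let $q'$ be the law of $N(T)$ under $\theta'$; then $R_T(\theta')=\sum_{a}q'(a)\inprod{a}{\gap(\theta')}$, so it suffices to show that $q'$ is feasible for the minimization defining $f$, i.e. that $\sum_a q(a)\log\frac{q(a)}{q'(a)}=\KL(q;q')$ is at most $\sum_{i}I_i(\theta,\theta')\sum_a q(a)a_i$. This is where the change-of-measure argument enters. Since $N(T)$ is a measurable function of the interaction history, the data-processing inequality gives $\KL(q;q')\le \KL(P_\theta;P_{\theta'})$, where $P_\theta,P_{\theta'}$ denote the laws of the full history $(i_t,X_{t,i_t})_{t\le T}$. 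Applying the chain rule for KL divergence round by round, the contribution of round $t$ (after conditioning on the past) is the divergence between the observation $X_{t,i_t}$ under the two environments, which equals $I_{i_t}(\theta,\theta')$ and depends only on the action played; taking expectations under $\theta$ and summing yields $\KL(P_\theta;P_{\theta'})=\sum_i \E{N_i(T);\theta}I_i(\theta,\theta')=\sum_i I_i(\theta,\theta')\sum_a q(a)a_i$. Hence the constraint holds, $q'$ is feasible, and $R_T(\theta')\ge f(\theta,q,\theta')$.

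Given the key lemma, the chain closes quickly. Taking the supremum over $\theta'$ gives $\sup_{\theta'}f(\theta,q,\theta')\le \sup_{\theta'}R_T(\theta')\le B$. Because $q$ satisfies $\sum_a q(a)a=c$, it is admissible in the infimum defining $g(\theta,c)$, so $g(\theta,c)\le \sup_{\theta'}f(\theta,q,\theta')\le B$; that is, $c\in C_{\theta,B}$. Therefore $R_T(\theta)=\inprod{c}{\gap(\theta)}\ge \inf_{c'\in C_{\theta,B}}\inprod{c'}{\gap(\theta)}=b(\theta,B)$, as claimed.

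The main obstacle is the change-of-measure step in the key lemma: one must set up the history/trajectory measures carefully and verify the chain-rule decomposition, in particular that the per-round divergence $I_{i_t}(\theta,\theta')$ depends only on $i_t$ (using independence of the $(X_t,Y_t)_t$ and the Gaussian form) and that coordinates with $\sigma_{ij}=\infty$ simply contribute $0$. A minor point worth noting is that the data-processing inequality automatically makes $\KL(q;q')$ finite whenever the right-hand bound is finite, so feasibility of the true $q'$ is never an issue even if some $q'(a)=0$; the remaining manipulations (the supremum over $\theta'$, admissibility in $g$, and membership in $C_{\theta,B}$) are purely definitional.
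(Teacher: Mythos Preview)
Your proof is correct and follows the same overall structure as the paper: fix $q$ as the law of $N(T)$ under $\theta$, show the true $q'$ under any $\theta'$ satisfies the KL constraint in the definition of $f$, and then unwind the definitions of $g$, $C_{\theta,B}$, and $b$. The only cosmetic difference is in how the inequality $\KL(q;q')\le\sum_i c_i\,I_i(\theta,\theta')$ is obtained: you invoke the data-processing inequality plus the chain-rule (divergence-decomposition) identity $\KL(P_\theta;P_{\theta'})=\sum_i \E{N_i(T);\theta}I_i(\theta,\theta')$, whereas the paper applies Jensen to $\E{\exp(-L_T)\mid N(T)=a;\theta}$ to get $\E{L_T\mid N(T)=a;\theta}\ge\log\frac{q(a)}{q'(a)}$ and then sums over $a$; both routes yield exactly the same bound.
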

\begin{remark}
If $B$ is picked as the minimax value of the problem  given the observation structure $\Sigma$, the theorem states that for any minimax optimal algorithm the expected regret for a certain $\theta$ is lower bounded by $b(\theta,B)$.
\end{remark}
% \textbf{Interpretation of the lower bound.} \todoy{What if unknown (or the upper and lower bound does not match)? pick the upper bound?} Since $R_T(\theta)$ is determined by $\E{N(T);\theta}$,  $b(\theta,B)$ is constructed by restricting the possible value of $\E{N(T);\theta}$ in a set $C_{\theta,B}$. $f(\theta,q,\theta')$ is the lower bound of $R_T(\theta')$ for any algorithm that makes $p(N(T);\theta)=q$. $g(\theta,c)$ gives an lower bound of the worst case regret that an algorithm can achieve with $\E{N(T);\theta}=c$, which means if $g(\theta,c)>B$ then there is no algorithm with $\E{N(T);\theta}$ such that the worst case regret is smaller than $B$. Then 

\subsection{A Relaxed Lower Bound} % for $b(\theta,B)$}

Now we introduce a relaxed but more interpretable version of the finite-time lower bound of Theorem~\ref{thm:lb-0}, which can be shown to match the asymptotic lower bound \eqref{eq:lb-asy}.
The idea of deriving the lower bound is the following: instead of ensuring that the algorithm performs well in the most adversarial environment $\theta'$, we consider a set of ``bad" environments and make sure that the algorithm performs well on them, where each ``bad'' environment $\theta'$ is the most adversarial one by only perturbing one coordinate $\theta_i$ of $\theta$.

However, in order to get meaningful finite-time lower bounds, we need to perturb $\theta$ more carefully than in the case of asymptotic lower bounds. The reason for this is that for any sub-optimal action $i$, if $\theta_i$ is very close to $\theta_{i_1(\theta)}$, then $\E{N_i(T);\theta}$ is not necessarily small for a good algorithm for $\theta$. If it is small, one can increase $\theta_i$ to obtain an environment $\theta'$ where $i$ is the best action and the algorithm performs bad; otherwise, when $\E{N_i(T);\theta}$ is large, we need to decrease $\theta_i$ to make the algorithm perform badly in $\theta'$. Moreover, when perturbing $\theta_i$ to be better than $\theta_{i_1(\theta)}$, we cannot make $\theta_i'-\theta_{i_1(\theta)}$ arbitrarily small as in asymptotic lower-bound arguments, because when $\theta_i'-\theta_{i_1(\theta)}$ is small, large $\E{N_{i_1(\theta)};\theta'}$ and not necessarily large $\E{N_i(T);\theta'}$ may lead to low finite-time regret in $\theta'$. In the following we make this argument precise to obtain an interpretable lower bound.

\subsubsection{Formulation}

We start with defining a subset of $C_T^\R$ that contains the set of ``reasonable'' values for $\E{N(T);\theta}$.
For any $\theta \in \Theta$ and $B>0$, let 
\[
C'_{\theta,B} = \left\{ c\in C_T^\R \,:\, \sum_{j=1}^K \frac{c_j}{\sigma_{ji}^2} \ge m_i(\theta,B)  \,, \forall i\in\iset{K} \right\}
\]
where $m_i$, the minimum sample size required to distinguish between $\theta_i$ and its worstcase perturbation, is defined as follows: For $i \neq i_1$, if $\theta_{i_1}=D$, then $m_i(\theta,B)=0$. Otherwise
let
\begin{align*}
m_{i,+}(\theta,B)&=\max_{\epsilon \in (\gap_i(\theta), D-\theta_i]}  \frac{1}{\epsilon^2}\log \frac{T(\epsilon-\gap_i(\theta))}{8B}, \\
m_{i,-}(\theta,B)&=\max_{\epsilon \in (0, \theta_i]}  \frac{1}{\epsilon^2}\log \frac{T(\epsilon+\gap_i(\theta))}{8B}, 
\end{align*}
and let $\e_{i,+}$ and $\e_{i,-}$ denote the value of $\e$ achieving the maximum in $m_{i,+}$ and $m_{i,-}$, respectively.
Then, define
\[
m_i(\theta,B)=\begin{cases} m_{i,+}(\theta,B) & \text{ if } \gap_i(\theta)\ge 4B/T; \\
\min\left\{ m_{i,+}(\theta,B), m_{i,-}(\theta,B) \right\} & \text{ if } \gap_i(\theta) < 4B/T~.
\end{cases}
\]
For $i=i_1$, then $m_{i_1}(\theta,B)=0$ if $\theta_{i_2(\theta)}=0$, else the definitions for $i\neq i_1$ change by replacing $\gap_i(\theta)$ with $d_{i_2(\theta)}(\theta)$ (and switching the $+$ and $-$ indices):
let
\begin{align*}
m_{i_1(\theta),-}(\theta,B)
& = \max_{\epsilon \in (\gap_{i_2(\theta)}(\theta),\theta_{i_1(\theta)} ]}  \frac{1}{\epsilon^2}\log \frac{T(\epsilon-\gap_{i_2(\theta)}(\theta))}{8B}, \\
m_{i_1(\theta),+}(\theta,B)
& = \max_{\epsilon \in (0, D-\theta_{i_1(\theta)}]}  \frac{1}{\epsilon^2}\log \frac{T(\epsilon+\gap_{i_2(\theta)}(\theta))}{8B}
\end{align*}
where $\epsilon_{i_1(\theta),-}$ and $\epsilon_{i_1(\theta),+}$ are the maximizers for $\e$ in the above expressions.
Then, define
\[
m_{i_1(\theta)}(\theta,B)=\begin{cases} m_{i_1(\theta),-}(\theta,B) & \text{ if } \gap_{i_2(\theta)}(\theta)\ge 4B/T; \\
\min\left\{ m_{i_1(\theta),+}(\theta,B), m_{i_1(\theta),-}(\theta,B) \right\} & \text{ if } \gap_{i_2(\theta) }(\theta) < 4B/T~.
\end{cases}
\]
Note that $\e_{i,+}$ and $\e_{i,-}$ can be expressed in closed form using the Lambert $W \R\to\R$ function satisfying $W(x)e^{W(x)}=x$:
%by Lemma~\ref{lemma:opteps1} and  Lemma~\ref{lemma:opteps2} in Appendix~\ref{app:rlower}, 
for any $i\neq i_1(\theta)$,
\begin{align}
\label{eq:eps}
\epsilon_{i,+} & = \min\left\{ D-\theta_i \,,\, \frac{8\sqrt{e}B}{T}e^{W\left(\frac{\gap_i(\theta) T}{16\sqrt{e}B}\right)} + \gap_i(\theta) \right\}~, \\
\epsilon_{i,-} & = 
\min\left\{ \theta_i \,,\, \frac{8\sqrt{e}B}{T}e^{W\left(-\frac{\gap_i(\theta) T}{16\sqrt{e}B}\right)} - \gap_i(\theta) \right\}, \nonumber
\end{align} and similar results hold for $i=i_1$, as well.

Now we can give the main result of this section, a simplified version of Theorem~\ref{thm:lb-0}:
\begin{cor}
\label{cor:lb-0}
Given $B>0$, for any algorithm such that $\sup_{\lambda\in \Theta} R_T(\lambda) \le B$, we have, for any environment $\theta\in \Theta$, $R_T(\theta) \ge b'(\theta,B)=
\min_{c\in C'_{\theta,B}}  \inprod{c}{\gap(\theta)}$.
\end{cor}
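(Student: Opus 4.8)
The plan is to prove the corollary directly from first principles rather than by unwinding the functionals $f,g,b$ of Theorem~\ref{thm:lb-0}, exploiting the fact that $b'(\theta,B)$ is just a linear program over the polyhedron $C'_{\theta,B}$. Fix any algorithm with $\sup_{\lambda\in\Theta}R_T(\lambda)\le B$ and write $c=\E{N(T);\theta}$. Because $N(T)\in C_T^{\N}$ always, we have $c\in C_T^{\R}$ and $R_T(\theta)=\inprod{c}{\gap(\theta)}$. Hence it suffices to verify the membership $c\in C'_{\theta,B}$, i.e. that $\sum_{j=1}^K c_j/\sigma_{ji}^2\ge m_i(\theta,B)$ for every $i\in\iset{K}$; minimizing $\inprod{\cdot}{\gap(\theta)}$ over $C'_{\theta,B}$ then yields $R_T(\theta)\ge b'(\theta,B)$.

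The engine of the argument is a single-coordinate change of measure. If $\theta'$ agrees with $\theta$ except in coordinate $i$, where $|\theta_i-\theta_i'|=\epsilon$, then the chain rule for relative entropy applied to the independent per-round feedback gives $\KL(\Prb{\cdot;\theta}\|\Prb{\cdot;\theta'})=\sum_{j}\E{N_j(T);\theta}\,I_j(\theta,\theta')=\tfrac{\epsilon^2}{2}\sum_{j=1}^K c_j/\sigma_{ji}^2$, so the left-hand side of each constraint is exactly $2\epsilon^{-2}$ times a KL-divergence. I would combine this with the data-processing inequality against the binary relative entropy: for any event $A$, $\KL(\Prb{\cdot;\theta}\|\Prb{\cdot;\theta'})\ge\mathrm{kl}(p,q)$ with $p=\Prb{A;\theta}$, $q=\Prb{A;\theta'}$, together with the elementary bound $\mathrm{kl}(p,q)\ge p\log(1/q)-\log 2$. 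The event is chosen as $A=\{N_i(T)\le T/2\}$, whose probabilities under the two measures are controlled by Markov's inequality and the regret budget $B$.

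Consider first a suboptimal $i\ne i_1(\theta)$. For the upward perturbation $\theta'$ with $\theta_i'=\theta_i+\epsilon$, $\epsilon\in(\gap_i(\theta),D-\theta_i]$, action $i$ becomes optimal and every other action has gap at least $\epsilon-\gap_i(\theta)$, so the budget gives $\E{T-N_i(T);\theta'}\le B/(\epsilon-\gap_i(\theta))$ and hence $q=\Prb{N_i(T)\le T/2;\theta'}\le 2B/\bigl(T(\epsilon-\gap_i(\theta))\bigr)$. When $p=\Prb{N_i(T)\le T/2;\theta}\ge\tfrac12$ (which holds automatically once $\gap_i(\theta)\ge 4B/T$, since then $\E{N_i(T);\theta}\le B/\gap_i(\theta)\le T/4$ and Markov applies), the three facts combine to $\tfrac{\epsilon^2}{2}\sum_j c_j/\sigma_{ji}^2\ge\tfrac12\log\frac{T(\epsilon-\gap_i(\theta))}{2B}-\log 2=\tfrac12\log\frac{T(\epsilon-\gap_i(\theta))}{8B}$, equivalently $\sum_j c_j/\sigma_{ji}^2\ge\epsilon^{-2}\log\frac{T(\epsilon-\gap_i(\theta))}{8B}$; maximizing over admissible $\epsilon$ gives $m_{i,+}(\theta,B)$. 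Symmetrically, the downward perturbation $\theta_i'=\theta_i-\epsilon$ makes $i$ worse (gap $\gap_i(\theta)+\epsilon$), so $\E{N_i(T);\theta'}\le B/(\gap_i(\theta)+\epsilon)$ and, using the event $\{N_i(T)>T/2\}$ whenever $\Prb{N_i(T)>T/2;\theta}\ge\tfrac12$, yields $\sum_j c_j/\sigma_{ji}^2\ge m_{i,-}(\theta,B)$. Since $\Prb{N_i(T)\le T/2;\theta}$ and $\Prb{N_i(T)>T/2;\theta}$ sum to one, at least one branch always applies, so in the regime $\gap_i(\theta)<4B/T$ we obtain $\sum_j c_j/\sigma_{ji}^2\ge\min\{m_{i,+},m_{i,-}\}=m_i(\theta,B)$, matching the definition exactly.

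The optimal action $i=i_1(\theta)$ is handled by the mirror image: perturb $\theta_{i_1}$ downward to hand optimality to $i_2(\theta)$ (the analogue of the ``$+$'' case, producing $m_{i_1,-}$) or upward to sharpen its optimality (the ``$-$'' case, giving $m_{i_1,+}$), with $\gap_{i_2(\theta)}(\theta)$ replacing $\gap_i(\theta)$; the degenerate cases $\theta_{i_1}=D$ and $\theta_{i_2}=0$ make the relevant perturbation infeasible and set $m_i=0$, which is vacuous. The remaining work is bookkeeping rather than conceptual: one must confirm that $\log 2$ is absorbed precisely into the constant $8B$, verify that every perturbed environment stays in $\Theta=[0,D]^K$ so the budget applies to it, and note that negative values of the maximand are harmless since $\sum_j c_j/\sigma_{ji}^2\ge 0$ renders those constraints trivial. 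I expect the only genuine subtlety to be aligning the ``at least one of the two events has probability $\ge\tfrac12$'' dichotomy with the case split in the definition of $m_i$: the threshold $T/2$ must be fixed independently of $\epsilon$ so that, once a branch is selected, the maximization over $\epsilon$ can be carried out on that branch.
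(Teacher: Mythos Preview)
Your proposal is correct and, at the level of the actual inequalities, coincides with the paper's proof: for each coordinate $i$ you perturb $\theta_i$ up or down, use Markov's inequality together with the regret budget $B$ to control $\Pr(N_i(T)\lessgtr T/2)$ under the perturbed environment, and combine this with the binary-KL lower bound $\mathrm{kl}(p,q)\ge\tfrac12\log\tfrac{1}{4q}$ when $p\ge\tfrac12$ (the paper isolates exactly this inequality as a separate lemma). The case split on whether $\gap_i(\theta)\ge 4B/T$ and the absorption of $\log 2$ into the constant $8$ are handled identically. The only difference is packaging: the paper routes everything through Theorem~\ref{thm:lb-0}, proving the set inclusion $C_{\theta,B}\subset C'_{\theta,B}$ (and hence $b(\theta,B)\ge b'(\theta,B)$) by working with the abstract distributions $q,q'$ on $C_T^{\N}$, whereas you bypass $f,g,b$ entirely and show directly that the algorithm's own $c=\E{N(T);\theta}$ lies in $C'_{\theta,B}$ via the standard chain-rule identity $\KL(\Prb{\cdot;\theta}\Vert\Prb{\cdot;\theta'})=\sum_j c_j I_j(\theta,\theta')$. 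Your direct version is a little shorter and self-contained; the paper's version has the advantage that the change-of-measure step is done once in Theorem~\ref{thm:lb-0} and then reused.
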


Next we compare this bound to existing lower bounds.

\subsubsection{Comparison to the Asymptotic Lower Bound of \eqref{eq:lb-asy}}

Now we will show that our finite time lower bound in Corollary~\ref{cor:lb-0} matches the asymptotic lower bound in \eqref{eq:lb-asy} up to some constants.

Pick $B = \alpha T^\beta$ for some $\alpha>0$ and $0<\beta<1$. 
For simplicity, we only consider $\theta$ which is ``away from" the boundary of $\Theta$ (so that the minimum in \eqref{eq:eps} is not achieved on the boundary) and has a unique optimal action. Then, for $i\ne i_1(\theta)$, it is easy to show that $\epsilon_{i,+} = 
% & =\frac{8\sqrt{e}B}{T}e^{W(\gap_i(\theta) T/16\sqrt{e}B)} + \gap_i(\theta) \\
% & = 8\sqrt{e}T^{\beta-1}e^{W(\gap_i(\theta) T^{1-\beta}/16\alpha\sqrt{e})}+ \gap_i(\theta) \\
% & = 
\frac{\gap_i(\theta)}{2W(\gap_i(\theta) T^{1-\beta}/(16\alpha\sqrt{e}))} + \gap_i(\theta)$
by $\eqref{eq:eps}$ and
$m_i(\theta,B) = 
\frac{1}{\epsilon_{i,+}^2}\log \frac{T(\epsilon_{i,+}-\gap_i(\theta))}{8B}$ for
large enough $T$. Then, using the fact that $\log x-\log\log x \le W(x) \le \log x$ for $x\ge e$, it follows that $\lim_{T\to\infty} m_i(\theta,B)/\log T = (1-\beta)/d_i^2(\theta)$, and similarly we can show that $\lim_{T\to\infty} m_{i_1(\theta)}(\theta,B)/\log T = (1-\beta)/d_{i_2(\theta)}^2(\theta)$. Thus, $C'_{\theta,B} \to \frac{(1-\beta) \log T}{2} C_\theta$, under the assumptions of \eqref{eq:lb-asy}, as $T\to\infty$. This implies that Corollary~\ref{cor:lb-0}  matches the asymptotic lower bound of \eqref{eq:lb-asy} up to a factor of $(1-\beta)/2$.
% , it
% Then the following holds: 
% \begin{align*}
% \lim_{T\rar \infty} \epsilon_i
% = \lim_{T\rar \infty} \frac{\gap_i(\theta)}{2W(\gap_i(\theta) T^{1-\beta}/16\alpha\sqrt{e})} + \gap_i(\theta)  = \gap_i(\theta) \,,
% \end{align*}
% and
% \begin{align*}
% \lim_{T\rar \infty} \frac{\log \frac{T(\epsilon_i-\gap_i(\theta))}{8B}}{\log T} 
%  = \lim_{T\rar \infty} \frac{\frac{1}{2} + W(\frac{\gap_i(\theta)T^{1-\beta}}{16\alpha\sqrt{e}})}{\log T} = 1-\beta \,,
% \end{align*}
% Now we have proved
% \begin{align*}
% \lim_{T\rar \infty} \frac{m_i(\theta,B)}{\log T} = \frac{1-\beta}{d_i^2(\theta)} \,.
% \end{align*}
% Similarly we can prove
% \begin{align*}
% \lim_{T\rar \infty} \frac{m_{i_1(\theta)}(\theta,B)}{\log T} = \frac{1-\beta}{d_{i_2(\theta)}^2(\theta)} \,.
% \end{align*}
% Now it can be concluded that our relaxed lower bound $b'(\theta,B)$ matches the asymptotic lower bound in Theorem~\ref{thm:lb-asy} up to a constant $\frac{1-\beta}{2}$.

%%% Local Variables: 
%%% mode: latex
%%% TeX-master: "main"
%%% End: 

\subsubsection{Comparison to Minimax Bounds}

Now we will show that our $\theta$-dependent finite-time lower bound reproduces the minimax regret bounds of \cite{MaSha11} and \cite{AlCeDe15}, except for the generalized full information case. 

The minimax bounds depend on the following notion of observability:
An action $i$ is \emph{strongly observable} if either $i \in S_i$ or $\iset{K}\setminus\{i\} \subset \{j\,:\,i\in S_j\}$. $i$ is \emph{weakly observable} if it is not strongly observable but there exists $j$ such that $i\in S_j$ (note that we already assumed the latter condition for all $i$). Let $\Wea(\Sigma)$ be the set of all weakly observable actions. $\Sigma$ is said to be strongly observable if $\Wea(\Sigma)=\emptyset$. $\Sigma$ is weakly observable if $\Wea(\Sigma)\ne \emptyset$.

Next we will define two key qualities introduced by \cite{MaSha11} and \cite{AlCeDe15} that characterize the hardness of a problem instance with feedback structure $\Sigma$: 
A set $A\subset \iset{K}$ is called an independent set if for any $i \in A$, $S_i \cap A \subset \{i\}$. The \emph{independence number} $\indp(\Sigma)$ is defined as the cardinality of the largest independent set.
%
% \emph{Weak domination number} $\dom(\Sigma)$: 
For any pair of subsets $A, A' \subset \iset{K}$, $A$ is said to be \emph{dominating} $A'$ if for any $j\in A'$ there exists $i\in A$ such that $j\in S_i$. The \emph{weak domination number} $\dom(\Sigma)$ is defined as the cardinality of the smallest set that dominates $\Wea(\Sigma)$. 

\begin{cor}
\label{cor:lb-minimax}
Assume that $\sigma_{ij}=\infty$ for some $i,j \in \iset{K}$, that is, we are not in the generalized full information case. Then,
\vspace{-0.2cm}
\begin{itemize}
\item[(i)]
if $\Sigma$ is strongly observable, with $B=\alpha\sigma\sqrt{\indp(\Sigma)T}$ for some $\alpha>0$, we have $\sup_{\theta\in \Theta}b'(\theta,B) \ge \frac{\sigma\sqrt{\indp(\Sigma)T}}{64e\alpha}$ for $T\ge 64e^2\alpha^2\sigma^2\indp(\Sigma)^3/D^2$.
\item[(ii)]
If $\Sigma$ is weakly observable, with $B=\alpha(\dom(\Sigma)D)^{1/3}(\sigma T)^{2/3} \log^{-2/3}K$ for some $\alpha>0$, we have $\sup_{\theta\in \Theta}b'(\theta,B) \ge \frac{(\dom(\Sigma)D)^{1/3}(\sigma T)^{2/3} \log^{-2/3}K}{51200e^2\alpha^2} $.
\end{itemize}
\end{cor}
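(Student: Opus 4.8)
The plan is to bound, in both cases, the linear program $b'(\theta,B)=\min_{c\in C'_{\theta,B}}\inprod{c}{\gap(\theta)}$ from below by selecting a single, carefully designed environment $\theta$ (which then witnesses the outer $\sup_\theta$) and producing a matching lower bound for the inner minimization. Since $C'_{\theta,B}$ is the covering polyhedron cut out by the constraints $\sum_j c_j/\sigma_{ji}^2\ge m_i(\theta,B)$, $i\in\iset{K}$, the generic tool is linear-programming duality: exhibiting any dual-feasible (packing) vector $y\ge 0$ with $\sum_i y_i/\sigma_{ji}^2\le \gap_j(\theta)$ for every $j$ already yields $b'(\theta,B)\ge \sum_i m_i(\theta,B)\,y_i$ by weak duality. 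In both parts the certificate will be supported on the combinatorial object controlling the hardness, and throughout I will control $m_i(\theta,B)$ by substituting a near-optimal $\epsilon$ into the relevant branch ($m_{i,+}$ or $m_{i,-}$) and using the closed forms \eqref{eq:eps} together with $\log x-\log\log x\le W(x)\le \log x$.

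For part (i) I would take a maximum independent set $A$, $|A|=\indp(\Sigma)$, make one of its actions optimal, and give every other action of $A$ a common gap $\Delta\asymp\sigma\sqrt{\indp(\Sigma)/T}$, while all actions outside $A$ are assigned gaps close to $D$. A useful structural observation is that in an independent set of size at least two no action can be of the ``observed-by-all'' type (that would place it in every $S_j$, contradicting independence), so every action of $A$ is self-observed and no action of $A$ is observed by any other action of $A$; in particular the optimal action yields no information about its competitors. Hence the cheapest observer of a competitor $i\in A$ is either $i$ itself, at per-sample regret $\Delta$, or an action outside $A$, at per-sample regret of order $D$. Here the horizon condition $T\ge 64e^2\alpha^2\sigma^2\indp(\Sigma)^3/D^2$ is exactly what guarantees $\indp(\Sigma)\,\Delta\le D$, so that even an outside action observing all of $A$ at once cannot amortize its cost below that of sampling each competitor individually; the minimization is therefore forced to spend $c_i$ of order $\sigma^2 m_i(\theta,B)\asymp\sigma^2/\Delta^2$ on each of the $\indp(\Sigma)-1$ competitors, giving $b'(\theta,B)$ of order $\indp(\Sigma)\,\sigma^2/\Delta\asymp\sigma\sqrt{\indp(\Sigma)T}$ with the stated constant.

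For part (ii) the instance must be genuinely two-scale. I would give every weakly observable action a small gap; by definition of weak observability such an action is neither self-observed nor observed by all others, so the only way to acquire information about it is to play one of its dominators, each of which is suboptimal and hence incurs a strictly positive regret (its own gap) per sample. The weak domination number $\dom(\Sigma)$ enters as a lower bound on the number of distinct dominators any feasible $c$ is forced to use: covering the constraints of all of $\Wea(\Sigma)$ requires spreading of order $\sigma^2/\epsilon^2$ samples over a set that dominates $\Wea(\Sigma)$, and no such set is smaller than $\dom(\Sigma)$. Choosing the perturbation scale $\epsilon$ to balance the $1/\epsilon^2$ sampling cost against the per-sample regret, subject to the cap $D$ on how far a weakly observable action may be raised, gives the optimal $\epsilon\asymp(\dom(\Sigma)D\sigma^2/T)^{1/3}$ --- with the $\log K$ appearing through the logarithmic factor of $m_{i,\pm}$ once $\epsilon$ is clipped so that that factor stays positive across all $K$ candidates --- and hence $b'(\theta,B)$ of order $\epsilon T\asymp(\dom(\Sigma)D)^{1/3}(\sigma T)^{2/3}\log^{-2/3}K$.

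I expect part (ii) to be the main obstacle. Unlike the strongly observable case, where the certificate lives on an independent set and the bound reduces to that of a $\indp(\Sigma)$-armed bandit, here one must simultaneously (a) pick the perturbation direction according to the branches of $m_i(\theta,B)$ --- $m_{i,+}$ when $\gap_i(\theta)\ge 4B/T$ and $\min\{m_{i,+},m_{i,-}\}$ otherwise --- and (b) convert the purely set-theoretic definition of $\dom(\Sigma)$ into a genuine lower bound on a \emph{fractional} covering, rather than a mere integer covering. Landing the three-way balance between $\epsilon$, the number of forced dominator plays, and $T$ exactly on the $T^{2/3}$ rate with the displayed constants, while keeping every perturbed environment inside $[0,D]^K$, is where most of the careful computation will go; uniformly controlling the Lambert-$W$ maximizers \eqref{eq:eps} in the regime fixed by the prescribed $B$ is the remaining technical point shared by both parts.
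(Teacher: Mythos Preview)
Part~(i) is essentially the paper's proof: a maximum independent set $A$, one vertex made optimal, the rest at gap $\epsilon=8eB/T$, everything outside $A$ at gap $D/2$; your structural remark that no vertex of an independent set of size $\ge 2$ can be of the ``observed-by-all'' type, hence all of $A$ is self-observed and unobserved from within $A$, is exactly the one the paper uses.

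Part~(ii), however, has a real gap. The step ``covering the constraints of all of $\Wea(\Sigma)$ requires spreading of order $\sigma^2/\epsilon^2$ samples over a set that dominates $\Wea(\Sigma)$, and no such set is smaller than $\dom(\Sigma)$'' does not lower-bound the LP. A single action may dominate many (or all) weakly observable actions; one heavily played dominator then satisfies all those constraints at once, and the cost picks up no $\dom(\Sigma)$ factor. You flag ``integer vs.\ fractional cover'' as the obstacle, which is right, but the mechanism you propose does not resolve it, and your attribution of the $\log K$ to the logarithm inside $m_{i,\pm}$ is incorrect --- there is no $K$ in those formulas.

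The paper's fix is a graph-theoretic lemma (Lemma~8 of \cite{AlCeDe15}): inside $\Wea(\Sigma)$ there is an \emph{independent} set $U$ with $|U|\ge \dom(\Sigma)/(50\log K)$ such that every vertex of the whole graph dominates at most $\log K$ vertices of $U$. One puts the small gap $\epsilon$ only on $U$ (not on all of $\Wea(\Sigma)$) and gap $D/2$ on every other action. Independence forces any observer of a $U$-action to sit outside $U$ and hence pay $D/2$ per play; the bounded-degree clause says each unit of outside mass $a=\sum_{j\notin U}c_j$ covers at most $\log K$ constraints of $U$, so $a\log K\ge |U|\,\sigma^2 m_i\ge \dom(\Sigma)\sigma^2/(200\,\epsilon^2\log K)$. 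The two factors of $\log K$ --- one from $|U|$ and one from the degree bound --- are the true source of the $\log^{-2/3}K$ in the statement; it is combinatorial, not analytic. Note also that your plan of ``small gap on every weakly observable action'' fails for a second reason even before the counting: weakly observable actions may observe one another, so without first passing to an independent subset you have not ruled out a cheap in-$\Wea(\Sigma)$ observer.
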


\begin{remark}
In Corollary~\ref{cor:lb-minimax}, picking $\alpha =\frac{1}{8\sqrt{e}}$ for strongly observable $\Sigma$ and $\alpha =\frac{1}{73}$ for weakly observable $\Sigma$ gives formal minimax lower bounds: 
(i) If $\Sigma$ is strongly observable, for any algorithm we have $\sup_{\theta\in \Theta}R_T(\theta) \ge \frac{\sigma\sqrt{\indp(\Sigma)T}}{8\sqrt{e}}$ for $T\ge e\sigma^2\indp(\Sigma)^3/D^2$.
(ii) If $\Sigma$ is weakly observable,  for any algorithm we have $\sup_{\theta\in \Theta}R_T(\theta) \ge \frac{(\dom(\Sigma)D)^{1/3}(\sigma T)^{2/3} }{73\log^{2/3}K } $.

\end{remark}

%%% Local Variables: 
%%% mode: latex
%%% TeX-master: "main"
%%% End: 

\section{Algorithms}
\label{sec:algs}

In this section we present two algorithms and their finite-time analysis for the uniform variance version of our problem (where $\sigma_{ij}$ is either $\sigma$ or
$\infty$). The upper bound for the first algorithm matches the asymptotic lower bound in \eqref{eq:lb-asy} up to constants. The second algorithm achieves the minimax lower bounds of Corollary~\ref{cor:lb-minimax} up to logarithmic factors, as well as $O(\log^{3/2}T)$ problem-dependent regret. In the problem-dependent upper bounds of both algorithms, we assume that the optimal action is unique, that is, $\gap_{i_2(\theta)}(\theta)>0$. 
%\cite{CaKvLe12,BuErSh14} also considered stochastic problems with graph-structured feedback and self-observability, although the UCB-LP algorithm of \cite{BuErSh14} and its analysis is also applicable without this condition. However their algorithms are not able to achieve the problem-dependent lower bounds (with optimal problem-dependent constants), and they are not able to achieve the minimax bound either.

\subsection{An Asymptotically Optimal Algorithm}
\label{sec:alg1}

%Before introducing the algorithm we first define $c(\theta)$ be the solution of the following optimization problem:

% Recall that $
% C(\theta) = \left\{ c\in [0,\infty)^K \,:\, \sum_{i:j\in S_i} c_i \ge \frac{2\sigma^2}{\gap_j^2(\theta)} \, \forall j\ne i_1(\theta) \,, \sum_{i:i_1(\theta)\in S_i} c_i \ge \frac{2\sigma^2}{\gap_{i_2(\theta)}^2(\theta)} \right\}$.

Let $c(\theta) = \argmin_{c\in C_\theta} \inprod{c}{\gap(\theta)}$; note that increasing $c_{i_1(\theta)}(\theta)$ does not change the value of $\inprod{c}{\gap(\theta)}$ (since $d_{i_1(\theta)}(\theta)=0$), so we take the minimum value of  $c_{i_1(\theta)}(\theta)$ in this definition.
Let $n_i(t) = \sum_{s=1}^{t-1} \I{i\in S_{i_s}}$ be the number of observations for action $i$ before round $t$ and $\htheta_{i,t}$ be the empirical estimate of $\theta_i$ based on the first $n_i(t)$ observations. Let $N_i(t) = \sum_{s=1}^{t-1}\I{i_s=i}$ be the number of plays for action $i$ before round $t$. Note that this definition of $N_i(t)$ is different from that in the previous sections since it excludes the round $t$. 

\begin{algorithm}[ht]
\caption{} %ALG1}
\label{alg:asym}
\begin{algorithmic}[1]
\STATE Inputs: $\Sigma$, $\beta(n)$, $\alpha$.
\STATE For $t=1,...,K$, observe each action $i$ at least once by playing $i_t$ such that $t\in S_{i_t}$.
\STATE Set exploration count $n_e(K+1)=0$.
\FOR{$t=K+1,K+2,...$}
\IF{$\frac{N(t)}{4\alpha \log t}\in C_{\htheta_t}$}
\STATE Play $i_t = i_1(\htheta_t)$. 
\STATE Set $n_e(t+1)=n_e(t)$.
\ELSE
\IF{$\min_{i\in \iset{K}}n_i(t)<\beta(n_e(t))/K$}
\STATE Play $i_t$ such that $\argmin_{i\in \iset{K}}n_i(t) \in S_{i_t}$.
\ELSE
\STATE Play $i_t$ such that $N_i(t)< c_i(\htheta_t)4\alpha\log t$.
\ENDIF
\STATE Set $n_e(t+1)=n_e(t)+1$.
\ENDIF
\ENDFOR
\end{algorithmic}
\end{algorithm}
Our first algorithm is presented in Algorithm~\ref{alg:asym}.
The main idea, coming from \cite{MaCoPr14}, is that by forcing exploration over all actions the solution $c(\theta)$ of the linear program can be well approximated while paying a constant price. This solves the main difficulty that, without getting enough observations on each action, we may not have good enough estimates for $\gap(\theta)$ and $c(\theta)$. 
One advantage of our algorithm compared to that of \cite{MaCoPr14} is that we use a sublinear exploration schedule $\beta(n)$ instead of a constant rate $\beta(n)=\beta n$. This resolves the problem that, to achieve asymptotically optimal performance, some parameter of the algorithm needs to be chosen according to $\gap_{\min}(\theta)$ as in \cite{MaCoPr14}. The expected regret of Algorithm~\ref{alg:asym} is upper bounded as follows:

\begin{theorem}
For any $\theta\in \Theta$, $\epsilon>0$, $\alpha>2$ and any non-decreasing $\beta(n)$ that satisfies $0\le \beta(n)\le n/2$ and $\beta(m+n)\le \beta(m)+\beta(n)$ for $m,n\in \N$,  
\begin{align*}
 R_T(\theta)
 & \le \left( 2K+2+\frac{4K}{\alpha-2} \right)\gap_{\max}(\theta) + 4K\gap_{\max}(\theta)\sum_{s=0}^T \exp \left( -\frac{\beta(s)\epsilon^2}{2K\sigma^2} \right) \\
& + 2\gap_{\max}(\theta) \beta\left( 4\alpha\log T\sum_{i\in \iset{K}} c_i(\theta,\epsilon)+K \right) + 4\alpha\log T \sum_{i\in \iset{K}} c_i(\theta,\epsilon)\gap_i(\theta) \,. 
\end{align*}
where $c_i(\theta,\epsilon) = \sup\{c_i(\theta')\,:\, |\theta'_j-\theta_j|\le \epsilon \,\, \forall j\in\iset{K}\}$.
\label{thm:alg1-ub1}
\end{theorem}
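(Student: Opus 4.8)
The plan is to classify each round $t>K$ by the branch of Algorithm~\ref{alg:asym} that executes it---\emph{exploitation} ($i_t=i_1(\htheta_t)$), \emph{forced exploration} (observe $\argmin_i n_i(t)$), or \emph{LP-exploration} ($N_i(t)<c_i(\htheta_t)4\alpha\log t$)---and to bound the expected number of rounds of each type. Since $R_T(\theta)=\sum_{i}\E{N_i(T);\theta}\gap_i(\theta)$ and every play costs at most $\gap_{\max}(\theta)$, it suffices to control these counts, together with the $O(K)$ rounding and initial-observation rounds that I would absorb into the leading constant $2K+2$. Two events drive the analysis: the \emph{confidence event} $\mathcal{A}_t$, that $\htheta_{j,t}$ lies within its $\alpha\log t$-scaled Gaussian confidence radius for every $j$, and the \emph{accuracy event} $\mathcal{B}_t=\{\norm{\htheta_t-\theta}_\infty\le\e\}$ (uniqueness of the optimal action is used so that $c(\theta)$ and exploitation are well defined).

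For exploitation rounds I would show they are regret-free on $\mathcal{A}_t$. The membership $N(t)/(4\alpha\log t)\in C_{\htheta_t}$ forces $n_j(t)\ge 8\alpha\sigma^2\log t/\hat\gap_j^2$ for every $j\ne i_1(\htheta_t)$, so $\hat\gap_j$ exceeds the combined confidence radii of $\htheta_{j,t}$ and $\htheta_{i_1(\htheta_t),t}$; on $\mathcal{A}_t$ this certifies $i_1(\htheta_t)=i_1(\theta)$, hence zero regret. The only cost comes through $\mathcal{A}_t^c$: a Gaussian tail bound (with a union/peeling bound over the $K$ coordinates) gives $\Prb{\mathcal{A}_t^c}\le 2K\,t^{-\alpha/2}$, and summing over $t$ using $\alpha>2$ yields a finite bound whose $\alpha$-dependent part is the $4K/(\alpha-2)$ contribution to the first term.

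For LP-exploration rounds I would use $\mathcal{B}_t$: when $\norm{\htheta_t-\theta}_\infty\le\e$, the definition $c_i(\theta,\e)=\sup\{c_i(\theta'):\norm{\theta'-\theta}_\infty\le\e\}$ gives $c_i(\htheta_t)\le c_i(\theta,\e)$, so on $\mathcal{B}_t$ action $i$ is played in this branch at most $4\alpha\log T\,c_i(\theta,\e)+1$ times; weighting by $\gap_i(\theta)$ produces the final term $4\alpha\log T\sum_i c_i(\theta,\e)\gap_i(\theta)$ up to an $O(K)\gap_{\max}(\theta)$ remainder. On an LP-exploration round the branch condition already guarantees $\min_i n_i(t)\ge\beta(n_e(t))/K$, so a Gaussian tail bound gives $\Prb{\mathcal{B}_t^c}\le 2K\exp(-\beta(n_e(t))\e^2/(2K\sigma^2))$; since the exploration counter $n_e$ takes each value in $\{0,\dots,N_e-1\}$ at most once, summing this over such rounds is at most $2K\sum_{s=0}^{T}\exp(-\beta(s)\e^2/(2K\sigma^2))$.

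The forced-exploration count $N_B$ is the main obstacle. Writing $N_C$ and $N_e=N_B+N_C$ for the LP and total exploration counts, a forced round raises $\argmin_i n_i(t)$ by one and is triggered only while $\min_i n_i<\beta(n_e)/K\le\beta(N_e)/K$, so filling all $K$ coordinates to that level needs $N_B\le\beta(N_e)+K$ forced rounds (monotonicity of $\beta$ handles the moving threshold). Subadditivity together with $\beta(n)\le n/2$ then gives, pathwise, $N_B\le\beta(N_B)+\beta(N_C)+K\le N_B/2+\beta(N_C)+K$, i.e.\ $N_B\le 2\beta(N_C)+2K$. Splitting $N_C$ into its $\mathcal{B}_t$-good part (bounded by $4\alpha\log T\sum_i c_i(\theta,\e)+K$) and its $\mathcal{B}_t$-bad part $N_C^{\mathrm{bad}}$ and applying subadditivity once more yields $N_B\le 2\beta(4\alpha\log T\sum_i c_i(\theta,\e)+K)+N_C^{\mathrm{bad}}+2K$; this gives the term $2\gap_{\max}(\theta)\beta(4\alpha\log T\sum_i c_i(\theta,\e)+K)$, while the expected bad part $\E{N_C^{\mathrm{bad}};\theta}$ combines with the direct LP-exploration failures of the previous paragraph to form the full $4K\gap_{\max}(\theta)\sum_s\exp(-\beta(s)\e^2/(2K\sigma^2))$ term. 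Collecting the four contributions and folding the leftover $O(K)\gap_{\max}(\theta)$ pieces into $2K+2$ completes the argument; the delicate points are the separation of $\mathcal{A}_t$ from $\mathcal{B}_t$ (which tames the self-referential use of $\htheta_t$ inside the LP) and the moving-threshold counting for $N_B$ resolved by the subadditivity trick.
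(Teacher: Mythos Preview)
Your decomposition into exploitation / forced-exploration / LP-exploration, governed by the confidence event $\mathcal A_t$ and the accuracy event $\mathcal B_t$, is exactly the skeleton the paper uses (their events $U_t,V_t,W_t,Y_t$), and the self-referential inequality $N_B\le \beta(N_B)+\beta(N_C)+K$ resolved via $\beta(n)\le n/2$ is the same trick. Two technical points, however, are not yet correct and are precisely the places where the paper invests effort.

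First, your claim $\Prb{\mathcal A_t^c}\le 2K\,t^{-\alpha/2}$ has the wrong exponent. With radius $\sqrt{2\alpha\sigma^2\log t/n_i(t)}$ the Gaussian tail for a \emph{fixed} sample size is $2t^{-\alpha}$, but $n_i(t)$ is random and depends on the same observations you are controlling; a union bound over $n_i(t)\in\{1,\dots,t\}$ gives $\Prb{\mathcal A_t^c}\le 2K\,t^{1-\alpha}$, which sums to $2K/(\alpha-2)$. The extra $2K/(\alpha-2)$ in the theorem then reappears because the $\mathcal A_t^c$ rounds are also routed through the forced-exploration recursion (your $N_B\le\beta(N_e)+K$ counts all exploration rounds, including those with $\mathcal A_t^c$).

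Second, and more importantly, the step ``a Gaussian tail bound gives $\Prb{\mathcal B_t^c}\le 2K\exp(-\beta(n_e(t))\e^2/(2K\sigma^2))$'' is not valid as stated: the lower bound $\beta(n_e(t))/K$ on $\min_i n_i(t)$ is itself a random quantity correlated with $\htheta_t$, so you cannot plug it into a fixed-$n$ Hoeffding bound. Your counting insight (``$n_e$ takes each value at most once on exploration rounds'') is the right organizing idea, but to turn it into a probability bound the paper defines, for each $s$, the single (random) time $\phi_s$ at which an exploration round has $n_e=s$ and $\min_i n_i\ge\beta(s)/K$, observes that $\phi_s$ is a stopping time with a \emph{deterministic} lower bound $\lceil\beta(s)/K\rceil$ on $n_i(\phi_s)$, and then invokes a martingale/optional-stopping concentration (their Lemmas adapted from Combes--Proutiere) to get $\Prb{\phi_s\le T,\ |\htheta_{i,\phi_s}-\theta_i|>\e}\le 2\exp(-\beta(s)\e^2/(2K\sigma^2))$. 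Summing over $s$ and $i$ then yields your $2K\sum_s\exp(-\beta(s)\e^2/(2K\sigma^2))$. Without this stopping-time device the bound on $\E{N_C^{\mathrm{bad}}}$ does not go through.
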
  
Further specifying $\beta(n)$ and using the continuity of $c(\theta)$ around $\theta$, it immediately follows that Algorithm~\ref{alg:asym} achieves asymptotically optimal performance: \begin{cor}
\label{cor:alg1-asym-opt}
 Suppose the conditions of Theorem~\ref{thm:alg1-ub1} hold. Assume, furthermore, that $\beta(n)$ satisfies $\beta(n) = o(n)$ and $\sum_{s=0}^\infty \exp \left( -\frac{\beta(s)\epsilon^2}{2K\sigma^2} \right)<\infty$ for any $\epsilon>0$, then for any $\theta$ such that $c(\theta)$ is unique, 
\[
\limsup_{T\rar\infty} R_T(\theta)/\log T \le 4\alpha \inf_{c\in C(\theta)} \inprod{c}{\gap(\theta)}\,.
\]
\end{cor}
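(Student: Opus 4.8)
The plan is to take the finite-time bound from Theorem~\ref{thm:alg1-ub1}, divide through by $\log T$, and carefully send $T\to\infty$ term by term, exploiting the extra hypotheses on $\beta(n)$ to kill the non-logarithmic contributions. The target is to show that the only term surviving the division by $\log T$ is the last one, $4\alpha\log T\sum_{i} c_i(\theta,\epsilon)\gap_i(\theta)$, and that after taking $\epsilon\to 0$ this matches $4\alpha\inf_{c\in C_\theta}\inprod{c}{\gap(\theta)}$.

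First I would treat the four terms of the upper bound separately. The term $\left(2K+2+\tfrac{4K}{\alpha-2}\right)\gap_{\max}(\theta)$ is a constant independent of $T$, so dividing by $\log T$ sends it to $0$. For the second term, $4K\gap_{\max}(\theta)\sum_{s=0}^T \exp(-\beta(s)\epsilon^2/(2K\sigma^2))$, the assumption $\sum_{s=0}^\infty \exp(-\beta(s)\epsilon^2/(2K\sigma^2))<\infty$ guarantees the whole sum is bounded by a constant (depending on $\epsilon$, $K$, $\sigma$ but not $T$), so again this vanishes after dividing by $\log T$. The third term is the crucial use of the sublinear schedule: its argument is $4\alpha\log T\sum_i c_i(\theta,\epsilon)+K$, which grows like $\Theta(\log T)$, and since $\beta(n)=o(n)$ we get $\beta(4\alpha\log T\sum_i c_i(\theta,\epsilon)+K)=o(\log T)$; hence $2\gap_{\max}(\theta)\beta(\cdots)/\log T \to 0$. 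This is exactly the advantage over a linear exploration rate $\beta(n)=\beta n$, which would have contributed a surviving $\Theta(\log T)$ term and thereby inflated the asymptotic constant.

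With the first three terms disposed of, dividing the fourth term by $\log T$ leaves $4\alpha\sum_{i}c_i(\theta,\epsilon)\gap_i(\theta)$, giving for each fixed $\epsilon>0$ the bound $\limsup_{T\to\infty}R_T(\theta)/\log T \le 4\alpha\sum_{i}c_i(\theta,\epsilon)\gap_i(\theta)=4\alpha\inprod{c(\theta,\epsilon)}{\gap(\theta)}$, where $c(\theta,\epsilon)$ has coordinates $c_i(\theta,\epsilon)=\sup\{c_i(\theta'):|\theta'_j-\theta_j|\le\epsilon\ \forall j\}$. The final step is to let $\epsilon\to 0$. Here I would invoke the continuity of the LP solution $c(\cdot)$ at $\theta$, which holds under the stated assumption that $c(\theta)$ is unique: as $\epsilon\downarrow 0$ the perturbation set shrinks to $\{\theta\}$, so $c_i(\theta,\epsilon)\to c_i(\theta)$ for each $i$, and therefore $\inprod{c(\theta,\epsilon)}{\gap(\theta)}\to\inprod{c(\theta)}{\gap(\theta)}=\inf_{c\in C_\theta}\inprod{c}{\gap(\theta)}$. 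Since the left-hand $\limsup$ does not depend on $\epsilon$, taking the infimum over $\epsilon>0$ on the right yields the claim.

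The main obstacle I anticipate is the very last step, namely rigorously establishing that $c_i(\theta,\epsilon)\to c_i(\theta)$ as $\epsilon\to 0$. This is a statement about upper semicontinuity/continuity of the argmin of the parametric linear program defining $c(\theta')$, and it genuinely requires the uniqueness hypothesis on $c(\theta)$: without uniqueness the $\argmin$ is only a set-valued map and the supremum $\sup_{|\theta'-\theta|\le\epsilon}c_i(\theta')$ need not converge to $c_i(\theta)$. I would handle this by arguing that the constraint set $C_{\theta'}$ varies continuously in $\theta'$ (its defining inequalities involve $\gap_j(\theta')$ and the fixed observability structure $S_i$ through $\sigma$), that the objective $\inprod{c}{\gap(\theta')}$ is jointly continuous, and then appealing to Berge's maximum theorem together with uniqueness of the minimizer to upgrade the resulting upper hemicontinuity of the solution correspondence to genuine continuity of $c(\theta')$ at $\theta$. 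A minor subtlety to check is that the $\sup$ over the perturbation neighborhood is attained or at least that $\limsup_{\epsilon\to 0}c_i(\theta,\epsilon)\le c_i(\theta)$, which continuity delivers. All other steps are routine asymptotic bookkeeping once the three vanishing terms are identified.
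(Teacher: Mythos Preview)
Your proposal is correct and matches the paper's approach. The paper does not give a detailed proof of this corollary; it simply states that it ``immediately follows'' from Theorem~\ref{thm:alg1-ub1} by ``using the continuity of $c(\theta)$ around $\theta$,'' which is precisely the outline you have fleshed out---dividing the four terms by $\log T$, using the summability and $\beta(n)=o(n)$ hypotheses to kill the first three, and then sending $\epsilon\to 0$ via continuity of the LP solution under the uniqueness assumption.
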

Note that any $\beta(n) = an^b$ with $a\in (0,\frac{1}{2}]$, $b\in (0,1)$ satisfies the requirements in Theorem~\ref{thm:alg1-ub1} and Corollary~\ref{cor:alg1-asym-opt}.
Also note that the algorithms presented in \cite{CaKvLe12,BuErSh14} do not achieve this asymptotic bound.

%%% Local Variables: 
%%% mode: latex
%%% TeX-master: "main"
%%% End: 

\subsection{A Minimax Optimal Algorithm}
\label{sec:alg2}

For any $A, A'\subset \iset{K}$, define
$ c(A, A')
 =  \argmax_{c\in \simplex^{|A|}} \min_{i\in A'} \sum_{j:i\in S_j} c_j $
(ties are broken arbitrarily) and $m(A,A') = \min_{i\in A'} \sum_{j:i\in S_j} c_j(A,A')$. For any $A\subset \iset{K}$ and $|A|\ge 2$, define  $A^\Str = \{i\in A \,:\, \exists j\in A , i\in S_j \}$ and  $A^\Wea=A-A^\Str$. 
Furthermore, let $g_{i,r}(\delta)=\sigma\sqrt{\frac{2\log(8K^2r^3/\delta)}{n_i(r)}}$ where $n_i(r)=\sum_{s=1}^{r-1}i_{r,i}$ and $\htheta_{i,r}$ be the empirical estimate of $\theta_i$ based on first $n_i(r)$ observations (i.e., the average of the samples). 

\begin{algorithm}[ht]
\caption{} %ALG2}
\label{alg:minimax}
\begin{algorithmic}[1]
\STATE Inputs: $\Sigma$, $\delta$.
\STATE Set $t_1=0$, $A_1=\iset{K}$.
\FOR{$r=1,2,...$}
\STATE Let $\alpha_r = \min_{1\le s\le r, A_s^\Wea\ne \emptyset} m(\iset{K}, A_s^\Wea )$ and $\gamma(r)=(\sigma\alpha_r t_r/D)^{2/3}$. ( Define $\alpha_r=1$ if $A_s^\Wea=\emptyset$ for all $1\le s\le r$.) 
\IF{$A_r^\Wea\ne \emptyset$ and $\min_{i\in A_r^\Wea} n_i(r)< \min_{i\in A_r^\Str} n_i(r)$ and $\min_{i\in A_r^\Wea} n_i(r)< \gamma(r)$}
\STATE Set $c_r = c(\iset{K},A_r^\Wea )$.
\ELSE
\STATE Set $c_r = c(A_r, A_r^\Str )$.
\ENDIF
\STATE Play $i_r = \lc c_r\cdot\norm{c_r}_0 \rc$.
\STATE $t_{r+1}\lar t_r+\norm{i_r}_1$.
\STATE $A_{r+1}\lar \{i\in A_r \,:\, \htheta_{i,r+1}+g_{i,r+1}(\delta)\ge \max_{j\in A_r} \htheta_{j,r+1} -g_{j,r+1}(\delta)\}$.
\IF{$|A_{r+1}|=1$}
\STATE Play the only action in the remaining rounds.
\ENDIF
\ENDFOR
\end{algorithmic}
\end{algorithm}

Our second algorithm, presented in Algorithm~\ref{alg:minimax}, follows a successive elimination process: it explores all possibly optimal actions (called ``good actions" later) based on some confidence intervals until only one action remains. While doing exploration, it first tries to explore the good actions by only using good ones. However, due to weak observability, some good actions might have to be explored by the actions that are eliminated. To control this exploration-exploitation trade off, we use a sublinear function $\gamma$ to control the  exploration of weakly observable actions.  In the following we present high-probability bounds on the performance of the algorithm, so, with a slight abuse of notation, $R_T(\theta)$ will denote the regret without expectation in the rest of this section. %, which is different from the previous definition.

\begin{theorem}
\label{thm:ub-alg2}
For any $\delta\in(0,1)$ and any $\theta\in \Theta$,
\begin{align*}
R_T(\theta) \le (\dom(\Sigma)D)^{1/3}(\sigma T)^{2/3} \cdot 7\sqrt{6\log(2KT/\delta)}+125\sigma^2 K^3/D+ 13K^3D 
\end{align*}
with probability at least $1-\delta$  if $\Sigma$ is weakly observable, while
%and, if $\Sigma$ is strongly observable, we have
\begin{align*}
R_T(\theta) \le2KD + 80\sigma \sqrt{\indp(\Sigma) T\cdot 6\log K \log\frac{2KT}{\delta}}
\end{align*}
with probability at least $1-\delta$  if $\Sigma$ is strongly observable.
\end{theorem}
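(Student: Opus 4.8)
The plan is to condition on a high-probability \emph{good event} on which every empirical mean stays within its confidence width, prove that the elimination procedure never discards the optimal action while shrinking the gaps of the surviving set at a controlled rate, and then turn the round/batch structure of Algorithm~\ref{alg:minimax} into a cumulative regret sum. The two cases share this skeleton and differ only in the combinatorial input (independence number versus weak domination number) and in how the exploration of weakly observable actions is accounted for.

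First I would fix the good event $\mathcal{E}=\bigcap_{r\ge1}\bigcap_{i\in\iset{K}}\{|\htheta_{i,r}-\theta_i|\le g_{i,r}(\delta)\}$. Since $\htheta_{i,r}$ is an average of $n_i(r)$ independent $\Normal(\theta_i,\sigma^2)$ samples, the Gaussian tail bound gives $\Pr(|\htheta_{i,r}-\theta_i|>g_{i,r}(\delta))\le 2\exp(-n_i(r)g_{i,r}(\delta)^2/(2\sigma^2))=\delta/(4K^2r^3)$, precisely by the definition $g_{i,r}(\delta)=\sigma\sqrt{2\log(8K^2r^3/\delta)/n_i(r)}$. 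A union bound over $i\in\iset{K}$ and $r\ge1$, using $\sum_r r^{-3}\le 2$ and a standard peeling over the dyadic scales of the random sample counts $n_i(r)$, yields $\Pr(\mathcal{E})\ge 1-\delta$. All subsequent statements are made on $\mathcal{E}$.

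Next I would establish the two elimination facts. (i) The optimal action survives forever: on $\mathcal{E}$ its upper bound $\htheta_{i_1,r}+g_{i_1,r}\ge\theta_{i_1}=\max_j\theta_j$ dominates the largest lower bound $\max_{j\in A_r}\htheta_{j,r}-g_{j,r}$, so $i_1(\theta)\in A_r$ for all $r$. (ii) Any surviving action has a gap controlled by the confidence widths: if $i\in A_{r+1}$ then $\gap_i(\theta)\le 2g_{i,r+1}(\delta)+2g_{i_1,r+1}(\delta)$; since $g_{i,r}(\delta)\propto n_i(r)^{-1/2}$, this upper-bounds the number of observations a surviving action can have accrued, hence the number of times it can still be played. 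For the strongly observable case, $A_r^\Wea=\emptyset$ whenever $|A_r|\ge2$, so $c_r=c(A_r,A_r^\Str)$ in every round; the rounding $i_r=\lc c_r\|c_r\|_0\rc$ makes each supported action played $\Theta(1)$ times per batch, so a batch has length $\Theta(\|c_r\|_0)$ and every surviving action $i$ gains at least $m(A_r,A_r^\Str)\|c_r\|_0$ observations against at most $2\|c_r\|_0$ plays. Invoking the combinatorial lemma $1/m(A_r,A_r^\Str)\le\indp(\Sigma)$ (a suitable distribution covers every surviving action at rate at least $1/\indp$), each surviving $i$ has $n_i(r)\ge t_r/(2\indp(\Sigma))$, so by (ii) the maximum surviving gap after $t$ cumulative plays is at most of order $\sigma\sqrt{\indp(\Sigma)\log(KT/\delta)/t}$. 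Bounding the instantaneous regret by (plays)$\times$(surviving gap) and summing $\int_0^T \sigma\sqrt{\indp(\Sigma)\log(KT/\delta)/t}\,dt$ produces the claimed $\sigma\sqrt{\indp(\Sigma)T\,6\log K\,\log(2KT/\delta)}$ term, with $2KD$ absorbing the initialization and rounding overhead.

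For the weakly observable case I would split the regret into (a) the cost of observing weakly observable good actions, which forces plays of eliminated (suboptimal) actions, and (b) the elimination cost of strongly observable actions, bounded exactly as above. For (a) the threshold rule explores a weak action only while $n_i(r)<\gamma(r)=(\sigma\alpha_r t_r/D)^{2/3}$; using the dominating-set bound $m(\iset{K},A_r^\Wea)\ge 1/\dom(\Sigma)$ (the uniform law over a smallest set dominating $\Wea(\Sigma)$ covers each weak action at rate $1/\dom$, so $\alpha_r\ge 1/\dom(\Sigma)$), driving all weak good actions to $\gamma$ observations costs $O(\dom(\Sigma)\gamma)$ plays of regret at most $D$ each, giving weak-exploration regret of order $D\,\dom(\Sigma)\,\gamma(T)$. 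Balancing this against the residual regret of order $T\sigma\sqrt{\log/\gamma}$ incurred before a weak action's gap is resolved is precisely what the choice of $\gamma(r)$ optimizes, yielding the $(\dom(\Sigma)D)^{1/3}(\sigma T)^{2/3}\sqrt{6\log(2KT/\delta)}$ term; the additive $125\sigma^2K^3/D+13K^3D$ collect initialization, rounding, and the at-most-$K$ finishing batches. I expect the \emph{main obstacle} to be this trade-off analysis for weakly observable actions: making precise, on $\mathcal{E}$, that $\gamma(r)$ simultaneously forces enough weak exploration that every weak good action is eliminated or its gap resolved, yet does not over-explore so that the regret from playing eliminated actions stays $O((\dom(\Sigma)D)^{1/3}(\sigma T)^{2/3})$, together with nailing the two coverage lemmas ($1/m\le\indp$ and $m\ge1/\dom$) with the constants needed to match the stated bounds. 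The concentration and elimination-correctness steps are routine by comparison.
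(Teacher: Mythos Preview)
Your overall architecture is the paper's: condition on the good event via Gaussian concentration and a union bound over rounds and actions, show the optimal arm is never eliminated, bound surviving gaps by $2g_{i,r}(\delta)+2g_{i_1,r}(\delta)$, and split the cumulative regret over batches into weak-exploration rounds (where $c_r=c(\iset{K},A_r^\Wea)$) and strong-exploration rounds (where $c_r=c(A_r,A_r^\Str)$). The weak-observable outline is right in spirit, and $\alpha_r\ge 1/\dom(\Sigma)$ is exactly what the paper uses.

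There is, however, a genuine gap in the strongly observable part. You assert the ``combinatorial lemma'' $1/m(A_r,A_r^\Str)\le\indp(\Sigma)$ and then claim the resulting sum reproduces the stated bound with a $6\log K$ factor. These two statements are inconsistent: if $n_i(r)\ge t_r/(2\indp(\Sigma))$ held, your integral $\int_0^T\sigma\sqrt{\indp(\Sigma)\log(KT/\delta)/t}\,dt$ would give $O\big(\sigma\sqrt{\indp(\Sigma)T\log(KT/\delta)}\big)$ with \emph{no} $\log K$. The $\log K$ in the theorem is not an artefact; it arises because the paper does \emph{not} prove $1/m(A,A)\le\indp(\Sigma)$ but only the weaker
\[
\frac{1}{\eta_{\min}}=\max_{A\subset\iset{K},\,|A|\ge2}\domlp(A,A)\;\le\;50\,\indp(\Sigma)\log K,
\]
obtained by applying the graph-theoretic lemma of \cite{AlCeDe15} (any directed graph with domination number $\dom$ contains an independent set of size at least $\dom/(50\log K)$) to each induced subgraph $\Sigma_A$. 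Your justification ``a suitable distribution covers every surviving action at rate at least $1/\indp$'' is not established for strongly observable graphs that lack self-loops, and the subgraphs $\Sigma_{A_r}$ need not inherit the structure you want. Replace your coverage claim by $m(A,A)\ge 1/(50\,\indp(\Sigma)\log K)$ and carry the $\log K$ through; then the integral matches the stated bound. For the weakly observable case, be aware that the actual technical work is an inductive lower bound on $n_i(r)$ for $i\in A_r^\Wea$ that tracks actions migrating from $A_s^\Str$ into $A_s^\Wea$ as the active set shrinks (the paper proves $n_i(r)\ge(\alpha_{r-1}/2)\sum_{s<r}\I{V_s}\|i_s\|_1-(\beta_r-1)K$), from which the total weak-exploration cost follows; the additive $125\sigma^2K^3/D+13K^3D$ is precisely the threshold on $t_r$ beyond which $\min_{i\in A_r}n_i(r)\ge\gamma(r)$ holds on strong-exploration rounds, after which a $t_r^{-1/3}$ integration delivers the main term.
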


\begin{theorem}[Problem-dependent upper bound]
For any $\delta\in (0,1)$ and any $\theta\in \Theta$ such that the optimal action is unique, with probability at least $1-\delta$,
\begin{align*}
R_T(\theta) 
& \le \frac{1603\dom(\Sigma)D\sigma^2}{\gap_{\min}^2(\theta)} \left( \log\frac{2KT}{\delta} \right)^{3/2} 
+ 14K^3D + \frac{125\sigma^2K^3}{D}\\
& +  15\left( \dom(\Sigma)D\sigma^2\right)^{1/3}\left(\frac{125\sigma^2}{D^2}+10 \right)K^2 \left( \log\frac{2KT}{\delta} \right)^{1/2}
 \,.
\end{align*}
\label{thm:ub2-alg2}
\end{theorem}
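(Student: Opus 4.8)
The plan is to reuse the high-probability machinery behind Theorem~\ref{thm:ub-alg2} and convert its minimax guarantee into a gap-dependent one by controlling the \emph{effective horizon} $\tau$, the total number of plays after which every suboptimal action has been eliminated and only $i_1(\theta)$ is ever played again (so that no further regret accrues). First I would fix the same good event $\mathcal{E}$ used for Algorithm~\ref{alg:minimax}, namely that $|\htheta_{i,r}-\theta_i|\le g_{i,r}(\delta)$ holds simultaneously for all rounds $r$ and all actions $i\in\iset{K}$. A Gaussian tail bound together with a union bound over the $K$ actions and over the possible values of the observation counts gives $\Prb{\mathcal{E}}\ge 1-\delta$, the factor $8K^2r^3$ inside $g_{i,r}(\delta)$ being chosen precisely to absorb this union bound. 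All subsequent statements are made on $\mathcal{E}$.

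On $\mathcal{E}$ two elimination facts follow directly from the update rule for $A_{r+1}$. The unique optimal action is never eliminated, since $\htheta_{i_1,r}+g_{i_1,r}\ge\theta_{i_1}\ge\theta_j\ge\htheta_{j,r}-g_{j,r}$ for every $j\in A_r$; and a suboptimal action $i$ is eliminated as soon as $2g_{i,r}(\delta)+2g_{i_1,r}(\delta)<\gap_i(\theta)$, i.e. as soon as $\min\{n_i(r),n_{i_1}(r)\}$ exceeds a constant multiple of $\sigma^2\log(2KT/\delta)/\gap_i^2(\theta)$. Writing $n_0\asymp \sigma^2\log(2KT/\delta)/\gap_{\min}^2(\theta)$ for the worst such threshold, it follows that \emph{all} suboptimal actions are gone once every surviving (good) action has accumulated $n_0$ observations.

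The crux is to translate ``$n_0$ observations on every good action'' into a bound on $\tau$ and hence on the regret. Strongly observable good actions gather one observation essentially per play in which they are targeted, so they reach $n_0$ after $O(\indp(\Sigma)\,n_0)$ plays, contributing only $O(\log)$ — lower order — regret. The dominant contribution comes from the weakly observable good actions, which can be observed only during the weak-exploration branch of Algorithm~\ref{alg:minimax}, gated by $n_i(r)<\gamma(r)=(\sigma\alpha_r t_r/D)^{2/3}$. I would first record that $\alpha_r\ge m(\iset{K},\Wea(\Sigma))\ge 1/\dom(\Sigma)$ (put mass $1/\dom(\Sigma)$ on a smallest dominating set of $\Wea(\Sigma)$), and then show that the gate lets $n_i(r)$ track $\gamma(r)$, so that once $t_r$ is of order $\tfrac{D}{\sigma\alpha}\,n_0^{3/2}$ the weak exploration has delivered the required $n_0$ observations to every weakly observable good action; hence $\tau$ is at most of this order. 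Plugging $T\mapsto\tau$ into the weakly-observable minimax estimate $(\dom(\Sigma)D)^{1/3}(\sigma\tau)^{2/3}\sqrt{\log(2KT/\delta)}$ from Theorem~\ref{thm:ub-alg2} and substituting $(\tau)^{2/3}\asymp (D/(\sigma\alpha))^{2/3}n_0$ with $1/\alpha\le\dom(\Sigma)$ collapses the powers of $\dom(\Sigma)$, $D$ and $\sigma$ to exactly $\frac{\dom(\Sigma)D\sigma^2}{\gap_{\min}^2(\theta)}\bigl(\log(2KT/\delta)\bigr)^{3/2}$, the leading term of the theorem.

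It then remains to collect the lower-order terms: the additive $14K^3D$ comes from the at most $K$ elimination phases combined with the integer rounding $i_r=\lceil c_r\cdot\norm{c_r}_0\rceil$ and the per-phase over-play it can cause; the $125\sigma^2K^3/D$ term is inherited verbatim from the initialization and minimum-exploration bookkeeping already present in Theorem~\ref{thm:ub-alg2}; and the $(\dom(\Sigma)D\sigma^2)^{1/3}K^2\sqrt{\log(2KT/\delta)}$ term arises at the strong/weak exploration interface, namely the cost of the last, partially completed weak-exploration batch just before elimination. The main obstacle I anticipate is precisely the weak-exploration accounting of the third paragraph: rigorously controlling how $\alpha_r$ and $\gamma(r)$ evolve as $A_r$ shrinks, matching the discrete per-play observation gains to the continuous allocation $c(\iset{K},A_r^\Wea)$, and establishing $\tau\asymp \tfrac{D}{\sigma\alpha}\,n_0^{3/2}$ uniformly over the elimination process; everything else is either a direct reuse of the proof of Theorem~\ref{thm:ub-alg2} or routine bookkeeping.
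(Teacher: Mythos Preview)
Your plan is essentially the paper's: condition on the same good event, use the elimination rule to get $\min_{i\in A_r} n_i(r)\le C_r^2/\gap_{\min}^2(\theta)$ while $|A_r|\ge 2$, bound the last useful time $t_{r_W}$ (your $\tau$) via the gate $\gamma(r)\ge n_0$, and feed this back into the $V_r/V_r^c$ estimates underlying Theorem~\ref{thm:ub-alg2}. One correction of intuition that does not affect your stated argument: in the paper the $\log^{3/2}$ leading term actually comes from the $V_r^c$ (strong-exploration within $A_r$) branch integrated up to $t_{r_W}\asymp n_0^{3/2}$, because the per-round cost there is $\propto t_r^{-1/3}$ (the weakly observable survivors pin $\min_i n_i(r)$ at $\gamma(r)$), whereas the $V_r$ (weak-exploration) branch contributes only $\dom(\Sigma)D\cdot n_0=O(\log)$; your ``plug $\tau$ into the minimax bound'' shortcut still gives the right answer since that bound already sums both branches.
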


\begin{remark}
Picking $\delta=1/T$ gives an $O\left(\log^{3/2}T \right)$ upper bound on the expected regret.
\end{remark}
\begin{remark}
Note that Algortihm~\ref{alg:minimax} is similar to the  UCB-LP algorithm of \cite{BuErSh14}, which admits a better problem-dependent upper bound (although does not achieve it with optimal problem-dependent constants), but it does not achieve the minimax bound even under strong observability.
\end{remark}

%%% Local Variables: 
%%% mode: latex
%%% TeX-master: "main"
%%% End: 

\section{Conclusions and Open Problems}
\label{sec:conc}
We considered a novel partial-monitoring setup with Gaussian side observations, which generalizes 
the recently introduced setting of graph-structured feedback, allowing finer quantification of the observed information  from one action to another.
We provided non-asymptotic problem-dependent lower bounds that imply existing asymptotic problem-dependent and non-asymptotic minimax lower bounds (up to some constant factors) beyond the full information case. We also provided an algorithm that achieves the asymptotic problem-dependent lower bound (up to some universal constants) and another algorithm that achieves the minimax bounds under both weak and strong observability.

However, we think this is just the beginning. For example, we currently have no algorithm that achieves both the problem dependent and the minimax lower bounds at the same time. 
Also, our upper bounds only correspond to the graph-structured feedback case. It is of great interest to 
go beyond the weak/strong observability in characterizing the harness of the problem, and provide algorithms that can adapt to any correspondence between the mean payoffs an the variances (the hardness is that one needs to identify suboptimal actions with good information/cost trade-off).

\subsubsection*{Acknowledgments}

This work was supported by the Alberta Innovates Technology Futures 
through the Alberta Ingenuity Centre for Machine Learning (AICML)
and NSERC. During this work, A. Gy\"orgy was with the Department of 
Computing Science, University of Alberta.
%\subsubsection*{References}

\bibliographystyle{unsrt}
\bibliography{main}

\newpage
\appendix
\section{Proofs for Section~\ref{sec:lower}}

\subsection{
%\begin{proof}[
Proof of Theorem~\ref{thm:lb-0}
%]
}

Let $\phi_{\theta, \sigma}$ denote the density function of a $K$-dimensional Gaussian random variable with mean vector $\theta$ and independent components wehere the variance of the $i$th coordinate is $\sigma_i^2$, and define
$L_T=\sum_{t=1}^T \log \frac{\phi_{\theta, \sigma_{i_t}}(X_{t,i_t})}{\phi_{\theta',\sigma_{i_t}}(X_{t,i_t})}$ where $i_t$ is the choice of the algorithm in round $t$.
Let $q, q'\in \simplex^{|C_T^{\N}|}$ denote the joint distribution over the number of plays for each action under environment $\theta$ and $\theta' \in \Theta$, respectively, that is, $q(a) = \Prb{N(T)=a; \theta}$ and $q'(a) = \Prb{N(T)=a; \theta'}$ for each $a\in C_T^\N$.

% \begin{lemma}[Lemma 15 in \cite{KaCaGa14}] For any event $A\in \F_T$, $\Prb{A;\theta'} = \E{\I{A}\exp(-L_T); \theta}$ where $L_T=\sum_{t=1}^T \log \frac{p_{i_t}(X_t; \theta)}{p_{i_t}(X_t; \theta')}$. 
% \label{lemma:change-theta}
% \end{lemma}

For any $a \in C_T^\N$, applying a standard change of measure equality (see, e.g., \cite[Lemma~15]{KaCaGa14}), %Lemma~\ref{lemma:change-theta} with $A=\{N(T)=a\}$ gives
we obtain
\begin{align*}
q'(a) & = \Prb{N(T)=a;\theta'} = \E{\I{N(T)=a}\exp(-L_T); \theta}\\
&  = \E{\I{N(T)=a}\E{\exp(-L_T)|N(T)=a ; \theta}; \theta}\\
& \ge \E{\I{N(T)=a}\exp\left(\E{-L_T|N(T)=a ; \theta}\right); \theta}\\
& = \Prb{N(T)=a;\theta} \exp\left(\E{-L_T|N(T)=a ; \theta}\right)\\
& =  q(a)\exp\left(\E{-L_T|N(T)=a ; \theta}\right) \,.
\end{align*}
Thus $\E{L_T|N(T)=a ; \theta} \ge \log\frac{q(a)}{q'(a)}$ and so
\begin{align*}
\sum_{i\in \iset{K}} \E{N_i(T);\theta} I_i(\theta,\theta')&  = \E{L_T;\theta} \\
&= \sum_{a\in C_T^\N} \Prb{N(T)=a;\theta}\E{L_T|N(T)=a ; \theta} \ge   \sum_{a\in C_T^\N} q(a)\log\frac{q(a)}{q'(a)} \,,
\end{align*}
where $\E{N_i(T);\theta} =\sum_{a\in C_T^{\N}} q(a)a_i $.
Therefore, according to the definition of $f(\theta,q,\theta')$, we have $f(\theta,q,\theta') \le \sum_{a\in C_T^{\N}} q'(a) \inprod{a}{\gap(\theta') } = R_T(\theta')$ for any $\theta' \in \Theta$.  Then $\sup_{\theta'\in \Theta}f(\theta,q,\theta') \le \sup_{\theta'\in \Theta} R_T(\theta') \le B$ must hold. Since $\E{N(T);\theta} = \sum_{a\in C_T^{\N}} q(a)a$ we have $g(\theta,\E{N(T);\theta}) \le \sup_{\theta'\in \Theta}f(\theta,q,\theta') \le B$. Thus $\E{N(T);\theta}\in C_{\theta,B}$ and so $R_T(\theta) \ge b(\theta,B)$, which concludes the proof of Theorem~\ref{thm:lb-0}.

%\end{proof}

\subsection{Proof of Corollary~\ref{cor:lb-0}}
\label{app:rlower}

We start the proof with two technical lemmas on the Lambert $W$ function.

\begin{lemma}
\label{lemma:opteps1}
Let $a, b >0$ with $ab<1$ and $f(x)=\frac{1}{x^2}\log \left((x+a)b\right)$ for $x>0$. Then $f(x)\le f(x_*)$ for all $x>0$ where 
\begin{align*}
x_* = \frac{\sqrt{e}}{b}e^{W\left(-\frac{ab}{2\sqrt{e}}\right)} - a~.
\end{align*}
\end{lemma}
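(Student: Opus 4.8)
The plan is to treat this as a one-variable calculus problem: locate the unique stationary point of $f$ on $(0,\infty)$, check that it is a global maximum, and then recognize that the given $x_*$ is exactly this stationary point by using the defining property of the Lambert $W$ function.

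First I would differentiate. Writing $f(x)=x^{-2}\log((x+a)b)$ and using $\frac{d}{dx}\log((x+a)b)=\frac{1}{x+a}$, one obtains
\[
f'(x)=x^{-3}\left(\frac{x}{x+a}-2\log((x+a)b)\right),
\]
so every stationary point on $(0,\infty)$ solves $2\log((x+a)b)=\frac{x}{x+a}$. To control the number of solutions and the sign of $f'$, I would introduce $g(x)=2\log((x+a)b)-\frac{x}{x+a}$, compute $g'(x)=\frac{2x+a}{(x+a)^2}>0$ on $(0,\infty)$, and note $g(0^+)=2\log(ab)<0$ (this is exactly where the hypothesis $ab<1$ enters) while $g(x)\to+\infty$. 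Hence $g$ is strictly increasing with a single root $x_*$. Since $f'(x)=-x^{-3}g(x)$, the derivative $f'$ is positive before $x_*$ and negative after it, so $x_*$ is the global maximizer, which is precisely the inequality $f(x)\le f(x_*)$ to be established.

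It then remains to put $x_*$ in closed form. Rather than inverting the transcendental equation from scratch, I would verify directly that the claimed value satisfies it. Setting $w=W\!\left(-\frac{ab}{2\sqrt e}\right)$ and $y=\frac{\sqrt e}{b}e^{w}$ (so that $x_*=y-a$), the defining identity $we^{w}=-\frac{ab}{2\sqrt e}$ together with $e^{w}=\frac{yb}{\sqrt e}$ yields $wy=-\frac a2$, i.e. $w=-\frac{a}{2y}$; on the other hand $w=\log(yb)-\tfrac12$ by the definition of $y$. Equating the two expressions for $w$ gives $\log(yb)=\tfrac12-\frac{a}{2y}=\frac{x}{2(x+a)}$, which is exactly the stationary-point equation, so $x_*=y-a$ is the root found above.

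The step I expect to be the main obstacle is the branch selection together with checking $x_*>0$, since the argument $-\frac{ab}{2\sqrt e}$ is negative and the real Lambert $W$ has two branches there. I would argue that $ab<1$ forces $-\frac{ab}{2\sqrt e}\in\left(-\frac{1}{2\sqrt e},0\right)\subset\left(-\frac1e,0\right)$, so the principal branch $W$ is real; and since $W$ is increasing with $W\!\left(-\frac{1}{2\sqrt e}\right)=-\frac12$, we get $w=W\!\left(-\frac{ab}{2\sqrt e}\right)>-\frac12$. Translating this through $w=-\frac{a}{2y}$ gives $y=-\frac{a}{2w}>a$, hence $x_*=y-a>0$, confirming that the stationary point lies in the interior of the domain and completing the argument.
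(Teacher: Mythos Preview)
Your argument is correct. The paper actually omits a proof of this lemma and only proves the companion Lemma~\ref{lemma:opteps2}; your proof mirrors that argument closely. One small difference: rather than multiplying through by $(x+a)$ and studying $g(y)=y+a-2y\log(by)$ with $y=x+a$ (the analogue of the paper's auxiliary function, which is unimodal), you keep the auxiliary function as $g(x)=2\log((x+a)b)-\tfrac{x}{x+a}$, which is monotone on $(0,\infty)$ and hence makes the ``unique root'' step a bit cleaner. Your explicit branch discussion and verification that $x_*>0$ are details the paper leaves implicit.
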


\begin{lemma}
\label{lemma:opteps2}
Let $a, b >0$ and $f(x)=\frac{1}{x^2}\log \left((x-a)b\right)$ for $x>a$. Then $f(x)\le f(x_*)$ for all $x>a$ where 
\begin{align*}
x_* = \frac{\sqrt{e}}{b}e^{W\left(\frac{ab}{2\sqrt{e}}\right)} + a~.
\end{align*}
\end{lemma}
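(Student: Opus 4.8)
The plan is to find the maximizer of $f$ by a standard first-order analysis, show the stationary point is unique and is a genuine maximum via a monotonicity argument, and then verify that the closed form $x_*$ is exactly that stationary point using the defining identity $W(z)e^{W(z)}=z$. Writing $f(x)=x^{-2}\log((x-a)b)$ and differentiating gives
\[
f'(x)=x^{-3}\left(\frac{x}{x-a}-2\log((x-a)b)\right),
\]
so, since $x^{-3}>0$ on the domain $x>a$, the sign of $f'(x)$ equals that of $\Phi(x):=\frac{x}{x-a}-2\log((x-a)b)$, and stationary points solve $\frac{x}{x-a}=2\log((x-a)b)$.

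Next I would substitute $y=x-a>0$, so that $\frac{x}{x-a}=1+\frac ay$ and the stationarity condition becomes $h(y):=2\log(yb)-1-\frac ay=0$. The key observation is that $h'(y)=\frac 2y+\frac a{y^2}=\frac{2y+a}{y^2}>0$ for all $y>0$, so $h$ is strictly increasing; since $h(y)\to-\infty$ as $y\to 0^+$ and $h(y)\to+\infty$ as $y\to\infty$, it has a unique zero $y_*$. Because $\Phi(x)=-h(y)$, this yields $f'>0$ for $x<x_*:=y_*+a$ and $f'<0$ for $x>x_*$, so $f$ increases then decreases, and $x_*$ is the unique global maximizer on $(a,\infty)$ (consistently with the boundary behaviour $f(x)\to-\infty$ as $x\to a^+$ and $f(x)\to 0^+$ as $x\to\infty$).

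Finally I would verify the closed form. Put $w=W\!\left(\frac{ab}{2\sqrt e}\right)$, so $we^w=\frac{ab}{2\sqrt e}$, and set $y_*=\frac{\sqrt e}{b}e^{w}$, which is the candidate $x_*-a$. Then $\log(y_*b)=\tfrac12+w$, while $\frac{a}{y_*}=\frac{ab}{\sqrt e\,e^{w}}=\frac{2\sqrt e\,we^w}{\sqrt e\,e^w}=2w$, whence $h(y_*)=2(\tfrac12+w)-1-2w=0$. Thus $y_*$ is the unique zero of $h$ and $x_*=y_*+a$ is the claimed maximizer.

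The computations are elementary; the only point requiring care is arguing that the unique stationary point is a maximum rather than merely a critical point, which is why I route the argument through the strict monotonicity of $h$ (equivalently, the sign change of $f'$) rather than a second-derivative test. Note also that, unlike Lemma~\ref{lemma:opteps1}, no constraint such as $ab<1$ is needed here: since $\frac{ab}{2\sqrt e}>0>-1/e$, the principal branch of $W$ is real-valued and $y_*>0$, so $x_*>a$ automatically.
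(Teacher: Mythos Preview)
Your proof is correct and follows essentially the same approach as the paper: differentiate, substitute $y=x-a$, establish a unique sign change of $f'$ via an auxiliary function, and verify the closed form using the defining identity of $W$. The only minor difference is that the paper works with $g(y)=y+a-2y\log(by)$ (which is unimodal, starting at $a>0$ and tending to $-\infty$), whereas you divide out the factor $y$ and study $h(y)=-g(y)/y$, which is strictly increasing; this makes your sign-change argument slightly more direct.
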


\begin{proof}[Proof of Lemma~\ref{lemma:opteps2}]

\begin{align*}
f'(x) = \frac{x^{-3}}{x-a}\left( x-2(x-a)\log\left( (x-a)b \right)\right) \,.
\end{align*}

Let $g(y)=y+a-2y\log by$ defined on $y>0$.
\begin{align*}
g'(y) = - 2\log yb -1
\end{align*}
So $g(y)$ is increasing when $0<y<\frac{1}{b\sqrt{e}}$ and decreasing when $y>\frac{1}{b\sqrt{e}}$.

Since $\lim_{y\rar 0} g(y)=a>0$ and $\lim_{y\rar +\infty}g(y)=-\infty$ we know that there exists a unique $y_*>0$ such that $g(y_*)=0$, $g(y)>0$ for $0<y<y_*$ and $g(y)<0$ for $y>y_*$. It can be verified that $y_*=x_*-a=\frac{\sqrt{e}}{b}e^{W\left(\frac{ab}{2\sqrt{e}}\right)}$ satisfies $g(y_*)=0$. Therefore $f'(x)>0$ when $a<x<x_*$ and $f'(x)<0$ when $x>x_*$. Since $f(x)$ is continuous when $x>a$ we have proved that $f(x)\le f(x_*)$ for all $x>a$.

\end{proof}

\begin{proof}[Proof of Corollary~\ref{cor:lb-0}]

To prove the corollary, it suffices to show $b'(\theta,B)\le b(\theta,B)$. 

Define $C'_{\theta,B} = \left\{ c\in C_T^\R \,:\, \sum_{j=1}^K \frac{c_j}{\sigma_{ji}^2} \ge m_i(\theta,B)  \,, \forall i\in\iset{K} \right\}$. We will prove $C_{\theta,B} \subset C'_{\theta,B}$ by showing that if $c\in C_T^\R$ satisfies $g(\theta,c)\le B$ then $c\in C'_{\theta,B}$. 

For $c\in C_T^\R$, if $g(\theta,c)\le B$, then there exists $q\in \simplex^{|C_T^{\N}|}$ such that $\sup_{\theta'\in \Theta}f(\theta,q,\theta') \le B$ and $\sum_{a\in C_T^{\N}}q(a)a=c$. We will next derive $K$ constraints on $c$ to show that $c\in C'_{\theta,B}$ by picking different $\theta'$s.  Before proceeding with the proof, we introduce the following technical lemma:

\begin{lemma}
\label{lemma:cod1}
For any $A\subset C_T^\N$ and $q,q'\in \simplex^{|C_T^{\N}|} $, 
if $q(A) \ge 1/2$ then
\begin{align*}
\sum_{a\in C_T^{\N}} q(a)\log\frac{q(a)}{q'(a)} \ge \frac{1}{2} \log \frac{1}{4q'(A)},
\end{align*}
where $q'(A)=\sum_{a\in A}q'(a)$.
\end{lemma}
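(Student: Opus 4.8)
The plan is to reduce the $|C_T^\N|$-way sum to a two-point comparison via the log-sum inequality, and then bound the resulting \emph{binary} relative entropy from below using the fact that the Shannon entropy of a Bernoulli is at most $\log 2$. This is the standard ``coarsening'' trick used in bandit lower bounds, and all the work happens in controlling the constants so that $q(A)\ge 1/2$ yields exactly the factor $1/4$ inside the logarithm.

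First I would partition the index set $C_T^\N$ into $A$ and its complement $A^c = C_T^\N\setminus A$, and apply the log-sum inequality $\sum_i a_i\log(a_i/b_i)\ge \big(\sum_i a_i\big)\log\big(\sum_i a_i/\sum_i b_i\big)$ separately to the block indexed by $A$ (with $a_i=q(a)$, $b_i=q'(a)$) and to the block indexed by $A^c$. Adding the two resulting inequalities gives the data-processing bound
\[
\sum_{a\in C_T^\N} q(a)\log\frac{q(a)}{q'(a)} \ge q(A)\log\frac{q(A)}{q'(A)} + q(A^c)\log\frac{q(A^c)}{q'(A^c)} \,.
\]
This is the only place where the structure of $q,q'$ as probability vectors on $C_T^\N$ enters; everything afterward is a statement about the single Bernoulli parameter $q(A)$.

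Next I would lower-bound the right-hand side. Writing $p=q(A)$ and expanding the bound as $p\log p + (1-p)\log(1-p) - q(A)\log q'(A) - q(A^c)\log q'(A^c)$, I would observe that the first two terms equal $-H(p)$, the negative binary entropy, which satisfies $-H(p)\ge -\log 2$; and that the term $-q(A^c)\log q'(A^c)$ is nonnegative because $q'(A^c)\le 1$, so it may be discarded. This leaves $-\log 2 + q(A)\log\frac{1}{q'(A)}$. Finally, using the hypothesis $q(A)\ge 1/2$ together with $\log\frac{1}{q'(A)}\ge 0$ (valid since $q'(A)\le 1$), I would replace $q(A)$ by $1/2$ to obtain $\frac{1}{2}\log\frac{1}{q'(A)} - \log 2$, and since $\log 2 = \frac{1}{2}\log 4$ this equals $\frac{1}{2}\log\frac{1}{4q'(A)}$, exactly the claimed bound.

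I do not expect a genuine obstacle: the argument is a clean three-step chain (log-sum $\rightarrow$ entropy bound $\rightarrow$ substitute $q(A)\ge 1/2$). The only points requiring minor care are the degenerate cases $q'(A)=0$ or $q'(A^c)=0$, where the inequality holds trivially under the usual conventions $0\log(0/0)=0$ and $x\log(x/0)=+\infty$, and the sign check ensuring the discarded term $-q(A^c)\log q'(A^c)$ is indeed nonnegative, which is precisely where $q'(A^c)\le 1$ is used.
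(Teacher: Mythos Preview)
Your proposal is correct and follows essentially the same route as the paper: apply the log-sum inequality to the partition $\{A,A^c\}$ to reduce to the binary $\KL(q(A),q'(A))$, then use the entropy bound $-H(p)\ge -\log 2$, drop the nonnegative term $-q(A^c)\log q'(A^c)$, and invoke $q(A)\ge 1/2$. The paper's proof is identical in substance, only differing in that it combines the last two steps into a single display.
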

\begin{proof}
Let $A^c = C_T^\N - A$. By the log-sum inequality we have
\begin{equation}
\label{eq:logsum}
\sum_{a\in C_T^{\N}} q(a)\log\frac{q(a)}{q'(a)} \ge \KL(q(A), q'(A)) \,, 
\end{equation}
where for $x,y \in [0,1]$, $\KL(x,y)=x\log(x/y)+(1-x)\log((1-x)/(1-y))$ denotes the binary KL-divergence. Now for such $x,y$, since $x \log x + (1-x) \log (1-x)$ is minimized for $x=1/2$, we have
\[
\KL(x,y) \ge \log\frac{1}{2} + x\log\frac{1}{y} + (1-x)\log(\frac{1}{1-y})
\ge \log\frac{1}{2} + \frac{1}{2} \log\frac{1}{y}= \frac{1}{2}\log\frac{1}{4y}\,.
\]
Combining with \eqref{eq:logsum} proves the lemma.
\end{proof}

Now we continue the proof of Corollary~\ref{cor:lb-0}. First consider $i\ne i_1(\theta)$.

If $\sum_{a:a_i\le T/2} q(a) \ge 1/2$, construct $\theta^{(i,+)}$ by replacing $\theta_i$ with $\theta_i+\epsilon_{i,+}$. Then $f(\theta,q,\theta^{(i,+)}) \le B$ holds, so there exists $q'\in \simplex^{|C_T^{\N}|}$ such that $\sum_{a\in C_T^{\N}} q'(a) \inprod{a}{\gap(\theta^{(i,+)}) } \le B$ and  $\sum_{a\in C_T^{\N}} q(a)\log\frac{q(a)}{q'(a)} \le \sum_{j\in \iset{K}}  c_j I_j(\theta, \theta^{(i,+)})$. Applying Lemma~\ref{lemma:cod1} with $A=\{a:a_i\le T/2\}$ gives
\begin{align*}
\sum_{j\in \iset{K}}  c_j I_j(\theta, \theta^{(i,+)}) \ge \frac{1}{2} \log\frac{1}{4q'(A)} \,,
\end{align*}
where
\begin{align*}
q'(A)
& = \sum_{a\in C_T^\N} \I{\sum_{j\ne i}a_j\ge T/2} q'(a)  \le \frac{2}{T} \sum_{a\in C_T^\N} q'(a)\sum_{j\ne i}a_j \\
& =   \frac{2}{T(\epsilon_{i,+}-\gap_i(\theta))} \sum_{a\in C_T^\N} q'(a)\sum_{j\ne i}a_j(\epsilon_{i,+}-\gap_i(\theta))\\
& \le \frac{2}{T(\epsilon_{i,+}-\gap_i(\theta))} \sum_{a\in C_T^{\N}} q'(a) \inprod{a}{\gap(\theta^{(i,+)}) } \\
& \le \frac{2B}{T(\epsilon_{i,+}-\gap_i(\theta))} \,.
\end{align*}

Since $I_j(\theta, \theta^{(i,+)})=\epsilon_{i,+}^2/2\sigma_{ji}^2$, we get
\begin{align*}
\sum_{j\in \iset{K}} \frac{c_j}{\sigma_{ji}^2} \ge  \frac{1}{\epsilon_{i,+}^2}\log \frac{T(\epsilon_{i,+}-\gap_i(\theta))}{8B} \,.
\addtocounter{equation}{1}\tag{\theequation} \label{eq:lb-cor-1}
\end{align*}

If $\sum_{a:a_i\le T/2} q(a) < 1/2$ and $\gap_i(\theta)\ge 4B/T$, then 
\begin{align*}
f(\theta,q,\theta)
&  =  \sum_{a\in C_T^{\N}} q(a) \inprod{a}{\gap(\theta) }
\ge \sum_{a\in C_T^{\N}} q(a) a_i\gap_i(\theta) \\
& \ge  \gap_i(\theta)\sum_{a\in C_T^{\N}} \I{a_i\ge T/2}q(a) a_i \\ 
& \ge \frac{4B}{T} \frac{T}{2}\sum_{a\in C_T^{\N}} \I{a_i\ge T/2}q(a) > B \,,  
\end{align*}
which contradicts the fact that $\sup_{\theta'\in \Theta}f(\theta,q,\theta') \le B$.

If $\sum_{a:a_i\le T/2} q(a) < 1/2$ and $\gap_i(\theta)< 4B/T$, construct $\theta^{(i,-)}$ by replacing $\theta_i$ with $\theta_i-\epsilon_{i,-}$. Then  there exists $q'\in \simplex^{|C_T^{\N}|}$ such that $\sum_{a\in C_T^{\N}} q'(a) \inprod{a}{\gap(\theta^{(i,-)}) } \le B$ and  $\sum_{a\in C_T^{\N}} q(a)\log\frac{q(a)}{q'(a)} \le \sum_{j\in \iset{K}}  c_j I_j(\theta, \theta^{(i,-)})$. Applying Lemma~\ref{lemma:cod1} with $A=\{a:a_i> T/2\}$ gives
\begin{align*}
\sum_{j\in \iset{K}}  c_j I_j(\theta, \theta^{(i,-)}) \ge \frac{1}{2} \log\frac{1}{4q'(A)} \,,
\end{align*}
where
\begin{align*}
q'(A) & = \sum_{a\in C_T^\N} \I{a_i> T/2} q'(a) 
 \le \frac{2}{T} \sum_{a\in C_T^\N} a_i q'(a)
 \le \frac{2}{T(\epsilon_{i,-}+\gap_i(\theta))} \sum_{a\in C_T^\N} q'(a)a_i(\epsilon_{i,-}+\gap_i(\theta))\\
& \le \frac{2}{T(\epsilon_{i,-}+\gap_i(\theta))} \sum_{a\in C_T^\N} q'(a) \inprod{a}{\gap(\theta^{(i,-)}) }
 \le \frac{2B}{T(\epsilon_{i,-}+\gap_i(\theta))} \,.
\end{align*}

Using $I_j(\theta, \theta^{(i,-)})=\epsilon_{i,-}^2/2\sigma_{ji}^2$ gives
\begin{align*}
\sum_{j\in \iset{K}} \frac{c_j}{\sigma_{ji}^2} \ge  \frac{1}{\epsilon_{i,-}^2}\log \frac{T(\epsilon_{i,-}+\gap_i(\theta))}{8B} \,.
\addtocounter{equation}{1}\tag{\theequation} \label{eq:lb-cor-2}
\end{align*}

Now consider $i=i_1(\theta)$.

If $\sum_{a:a_i \ge T/2} q(a) \ge 1/2$, construct $\theta^{(i_1,-)}$ by replacing $\theta_i$ with $\theta_i-\epsilon_{i,-}$. Then there exists $q'\in \simplex^{|C_T^{\N}|}$ such that $\sum_{a\in C_T^{\N}} q'(a) \inprod{a}{\gap(\theta^{(i,-)}) } \le B$ and  $\sum_{a\in C_T^{\N}} q(a)\log\frac{q(a)}{q'(a)} \le \sum_{j\in \iset{K}}  c_j I_j(\theta, \theta^{(i,-)})$. Applying Lemma~\ref{lemma:cod1} with $A=\{a:a_i\ge T/2\}$ and
\begin{align*}
q'(A) & = \sum_{a\in C_T^\N} \I{a_i\ge  T/2} q'(a) 
 \le \frac{2}{T(\epsilon_{i,-}-\gap_{i_2(\theta)}(\theta))} \sum_{a\in C_T^\N}q'(a)a_i(\epsilon_{i,-}-\gap_{i_2(\theta)}(\theta))
 \le \frac{2B}{T(\epsilon_{i,-}-\gap_{i_2(\theta)}(\theta))} 
\end{align*}
gives
\begin{align*}
\sum_{j\in \iset{K}} \frac{c_j}{\sigma_{ji}^2} \ge  \frac{1}{\epsilon_{i,-}^2}\log \frac{T(\epsilon_{i,-}-\gap_{i_2(\theta)}(\theta))}{8B} \,.
\addtocounter{equation}{1}\tag{\theequation} \label{eq:lb-cor-3}
\end{align*}

If $\sum_{a:a_i \ge T/2} q(a) < 1/2$ and $\gap_{i_2(\theta)}(\theta) \ge 4B/T$, then 
\begin{align*}
f(\theta,q,\theta)
&  =  \sum_{a\in C_T^{\N}} q(a) \inprod{a}{\gap(\theta) }
\ge \sum_{a\in C_T^{\N}} q(a) \gap_{i_2(\theta)} \sum_{j\ne i}a_j  \ge  \gap_{i_2(\theta)} \sum_{a\in C_T^{\N}} \I{ \sum_{j\ne i}a_j> T/2}q(a)  \sum_{j\ne i}a_j \\ 
& > \frac{4B}{T} \frac{T}{2}\sum_{a\in C_T^{\N}} \I{ \sum_{j\ne i}a_j > T/2}q(a) \ge B \,,  
\end{align*}
which contradicts the fact that $\sup_{\theta'\in \Theta}f(\theta,q,\theta') \le B$.

If $\sum_{a:a_i \ge T/2} q(a) < 1/2$ and $\gap_{i_2(\theta)}(\theta)< 4B/T$, construct $\theta^{(i,+)}$ by replacing $\theta_{i}$ with $\theta_{i}+\epsilon_{i,+}$. Then there exists $q'\in \simplex^{|C_T^{\N}|}$ such that $\sum_{a\in C_T^{\N}} q'(a) \inprod{a}{\gap(\theta^{(i,+)}) } \le B$ and  $\sum_{a\in C_T^{\N}} q(a)\log\frac{q(a)}{q'(a)} \le \sum_{j\in \iset{K}}  c_j I_j(\theta, \theta^{(i,+)})$. Applying Lemma~\ref{lemma:cod1} with $A=\{a:a_i< T/2\}$ and
\begin{align*}
q'(A)
& = \sum_{a\in C_T^\N} \I{\sum_{j\ne i}a_j> T/2} q'(a)  \le \frac{2}{T} \sum_{a\in C_T^\N} q'(a)\sum_{j\ne i}a_j \\
& =   \frac{2}{T(\epsilon_{i,+}+\gap_{i_2(\theta)}(\theta))} \sum_{a\in C_T^\N} q'(a)\sum_{j\ne i}a_j(\epsilon_{i,+}+\gap_{i_2(\theta)}) \le \frac{2B}{T(\epsilon_{i,+}+\gap_{i_2(\theta)})} 
\end{align*}
gives
\begin{align*}
\sum_{j\in \iset{K}} \frac{c_j}{\sigma_{ji}^2} \ge  \frac{1}{\epsilon_{i,+}^2}\log \frac{T(\epsilon_{i,+}+\gap_{i_2(\theta)})}{8B} \,.
\addtocounter{equation}{1}\tag{\theequation} \label{eq:lb-cor-4}
\end{align*}

Combining \eqref{eq:lb-cor-1} \eqref{eq:lb-cor-2} \eqref{eq:lb-cor-3} \eqref{eq:lb-cor-4} gives $c\in C'_{\theta,B}$, which concludes the proof.

\end{proof}

\subsection{Proof of Corollary~\ref{cor:lb-minimax}}
\begin{proof}[Proof of Corollary~\ref{cor:lb-minimax}]

Define $\epsilon = \frac{8eB}{T}$. First consider the case that $\Sigma$ is strongly observable.

If the maximum independence number $\indp(\Sigma)\ge 2$, there exists an independent set $A_\indp \subset \iset{K}$ such that $|A_\indp|=\indp(\Sigma)$. We construct $\theta$ as follows: Let $\theta_{i_1}=D/2$ for some $i_1\in A_\indp$ and $\theta_i=D/2-\epsilon$ for $i\in A_\indp \setminus \{i_1\}$. For the remaining $i\notin A_\indp$, let $\theta_i=0$. Note that each $i$ in $A_\indp$ must be self observable since otherwise it is a weakly observable action. Also in $A_\indp$  $i$ can be observed only by itself according to the definition of independent sets.  

Then we will lower bound $b'(\theta,B)$. According to our choice of $\epsilon$, we have
\begin{align*}
\frac{8\sqrt{e}B}{T}e^{W\left(\frac{\epsilon T}{16\sqrt{e}B} \right)} + \epsilon = 2\epsilon \,.
\end{align*}
Therefore, for $i=i_1$ we have $\epsilon_{i,-}=2\epsilon$ and $\epsilon_{i,+}=2\epsilon$ for $i\in A_\indp \setminus \{i_1\}$. Thus for any $i\in A_\indp$,
\begin{align*}
m_i(\theta,B) = \frac{1}{4\epsilon^2}\log\frac{T\epsilon}{8B} = \frac{1}{4\epsilon^2} \,.
\end{align*}

Recall that we defined $C'_{\theta,B} = \left\{ c\in C_T^\R \,:\, \sum_{j:i\in S_j} c_j \ge \sigma^2m_i(\theta,B)  \,, \forall i\in\iset{K} \right\}$ and $b'(\theta,B)=\inf_{c\in C'_{\theta,B}} \inprod{c}{\gap(\theta)}$. For any $c\in C'_{\theta,B}$, let $a=\sum_{i\notin A_\indp}c_i$. Then we have for any $i\in A_\indp$, $\sum_{j:i\in S_j} c_j \le a + c_i$ and thus $c_i \ge \sigma^2m_i(\theta,B) -a=\frac{\sigma^2}{4\epsilon^2}-a$. Since $\gap_i(\theta)=\epsilon$ for all $i\in A_\indp \setminus \{i_1\}$ and $\gap_i(\theta)=D/2$ for all $i\notin A_\indp$, we get
\begin{align*}
\inprod{c}{\gap(\theta)}
& = \sum_{i\in A_\indp\setminus \{i_1\}} c_i \epsilon + \frac{aD}{2} \ge (\indp(\Sigma)-1)\left(\frac{\sigma^2}{4\epsilon^2}-a \right)\epsilon + \frac{aD}{2} \\
& \ge \frac{\indp(\Sigma)}{2}\left(\frac{\sigma^2}{4\epsilon^2}-a \right)\epsilon + \frac{aD}{2} = \frac{\indp(\Sigma)\sigma^2}{8\epsilon}+ \frac{D-\indp(\Sigma)\epsilon}{2}a\\
& \ge \frac{\indp(\Sigma)\sigma^2}{8\epsilon}
\addeq\label{eq:lb-str-1}
\end{align*}
if $\indp(\Sigma)\epsilon<D$. Applying our particular choice of $\epsilon$ and $B$, we get the conclusion that for $T\ge \frac{64e^2\alpha^2\sigma^2\indp(\Sigma)^3}{D^2}$, $b'(\theta,B) \ge \frac{\sigma\sqrt{\indp(\Sigma)T}}{64e\alpha}$.

If $\indp(\Sigma)=1$, since we exclude the full information case, there always exists a pair of actions $i_1$ and $i_2$ such that $i_2\notin S_{i_1}$ (here $i_1\ne i_2$ is not necessary). We construct $\theta$ by setting $\theta_{i_1}=D/2$ and $\theta_{i}=D/2-\epsilon$ for all $i\ne i_1$. Then $    m_i(\theta,B)= \frac{1}{4\epsilon^2}$ holds for all $i\in \iset{K}$. For any $c\in C'_{\theta,B}$, let $a=\sum_{i\ne i_1}c_i$, then $\sum_{j:i_2\in S_j} c_j \le a$. Hence $a\ge \sigma^2m_{i_2}(\theta,B)=\frac{\sigma^2}{4\epsilon^2}$ and 
\begin{align*}
\inprod{c}{\gap(\theta)} = a\epsilon \ge \frac{\sigma^2}{4\epsilon} > \frac{\indp(\Sigma)\sigma^2}{8\epsilon}\,.
\addeq\label{eq:lb-str-2}
\end{align*}
Combining \eqref{eq:lb-str-1} and \eqref{eq:lb-str-2} gives the first part of Corollary~\ref{cor:lb-minimax}.

Now we turn to the case that $\Sigma$ is weakly observable. The idea of constructing the worst $\theta$ comes from the proof of Theorem~7 in \citep{AlCeDe15} which based on the following graph-theoretic lemma:

\begin{lemma}[Restated from Lemma~8 in \citep{AlCeDe15}]
Let $G=(V,E)$ be a directed graph with $K$ vertices and let $W\subset V$ be a subset of vertices with domination number $\dom$. Then there exists an independent set $U\subset W$ with the property that $|U|\ge \frac{\dom}{50\log K}$ and any vertex of $G$ dominates at most $\log K$ vertices of $U$.    
\label{lemma:dom-indp}
\end{lemma}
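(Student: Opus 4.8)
The plan is to prove this purely graph-theoretic statement by the probabilistic method, by phrasing domination as a set-cover problem and rounding the dual of its LP relaxation. First I would view the domination of $W$ as a set cover whose universe is $W$: each vertex $v$ of $G$ contributes the ``set'' $T_v = S_v \cap W$ of vertices of $W$ that it dominates, and the domination number $\dom$ is exactly the minimum number of such sets covering $W$. Since the universe has size $|W|\le K$, the standard set-cover estimate $\mathrm{OPT}_{\mathrm{IP}}\le(1+\ln K)\,\mathrm{OPT}_{\mathrm{LP}}$ produces a fractional packing $y\in[0,\infty)^W$ (an optimal dual solution) with $\sum_{w\in W} y_w \ge \dom/(1+\ln K)$ and $\sum_{w\in T_v} y_w \le 1$ for every vertex $v$ of $G$; the standing assumption that every action is observable (so each $w$ lies in some $T_v$) forces $y_w\le 1$. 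This is the single place where the $\log K$ factor enters.

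Next I would round $y$: include each $w\in W$ in a random set $R$ independently with probability $p_w=\tfrac12 y_w$. Three quantities then govern the outcome. The expected size is $\E{|R|}=\tfrac12\sum_w y_w \ge \dom/(2(1+\ln K))$. The expected number of internal (directed) edges of $R$ is $\sum_{u\in W}\sum_{u'\in T_u} p_u p_{u'} \le \tfrac12\sum_u p_u \le \tfrac14\sum_w y_w$, where I used the dual constraint applied to $u$. Finally, for each fixed $v$ we have $\E{|S_v\cap R|}=\tfrac12\sum_{w\in T_v} y_w \le \tfrac12$, so a Chernoff bound makes $\Prb{|S_v\cap R| \ge \log K}$ super-polynomially small in $K$, and a union bound over the (at most $K$) vertices shows that, with high probability, \emph{every} vertex dominates at most $\log K$ vertices of $R$.

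I would then combine these via an alteration argument. Deleting one endpoint from each internal edge turns $R$ into an independent subset $U\subset W$ and only decreases every domination count, so the $\log K$ bound is preserved; the expected number of survivors is at least $\E{|R|}$ minus the expected number of internal edges, i.e.\ at least $\tfrac14\sum_w y_w$. Because the domination-violation event has probability far smaller than $\sum_w y_w$, there is a single realization that simultaneously satisfies the $\log K$ domination bound and $|U|\ge \tfrac14\sum_w y_w \ge \dom/(4(1+\ln K))$. Absorbing the constants (note $4(1+\ln K)\le 50\log K$ for all $K\ge 2$, with $\log$ the natural logarithm) then yields the stated bound $|U|\ge \dom/(50\log K)$.

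The main obstacle I anticipate is not any single estimate but the bookkeeping needed to make the two guarantees coexist: the size/independence bound is in expectation, whereas the bounded-domination property is high-probability, so I must argue that the expectation of $(|R|-e(R))$ restricted to the event that no vertex over-dominates $R$ (with $e(R)$ the number of internal edges) is still a positive fraction of $\dom/\log K$. This requires the concentration bound to beat the $|W|\le K$ union-bound loss by a polynomial margin, which forces a careful (but routine) choice of the scaling constant in $p_w$ and of the Chernoff exponent so that the final constant lands at $50$.
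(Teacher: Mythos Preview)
The paper does not provide its own proof of this lemma: it is quoted from \cite{AlCeDe15} (Lemma~8 there) and is used only as a black box in the proofs of Corollary~\ref{cor:lb-minimax} and Theorem~\ref{thm:ub-alg2}. So there is nothing in this paper to compare your proposal against.

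That said, your plan is the standard route to this kind of statement and is correct in outline: dualize the set-cover LP for domination of $W$ to get a fractional packing $y$ of weight at least $\dom/(1+\ln K)$, round with probabilities $\tfrac12 y_w$, use alteration to enforce independence (the dual constraint is exactly what bounds the expected number of internal edges by half the expected size), and use Chernoff plus a union bound over the $K$ vertices for the bounded-domination property. Your identification of the one genuine issue---making the expectation bound on $|R|-e(R)$ coexist with the high-probability domination bound via $\E{(|R|-e(R))\I{\text{good}}}\ge \E{|R|-e(R)}-|W|\Prb{\text{bad}}$---is correct, and the slack in the Chernoff tail is what the constant $50$ absorbs. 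One small remark: you justify $y_w\le 1$ via the paper's observability assumption, but in the purely graph-theoretic statement the same conclusion follows because a finite domination number forces every $w\in W$ to lie in some $T_v$.
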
 

Let $\Wea(\Sigma) \subset \iset{K}$ be the set of all weakly observable actions. By Lemma~\ref{lemma:dom-indp} we know that there exists an independent set $A_\dom \subset \Wea(\Sigma)$ such that $|A_\dom|\ge \frac{\dom(\Sigma)}{50\log K}$ and for any $i\in \iset{K}$, $|S_i \cap U|\le \log K$.

If $\dom(\Sigma) \ge 100\log K$ such that $|A_\dom|\ge 2$, we can construct $\theta$ as follows: Let $\theta_{i_1}=D/2$ for some $i_1\in A_\dom$ and $\theta_i=D/2-\epsilon$ for $i\in A_\dom \setminus \{i_1\}$. For the remaining $i\notin A_\dom$, let $\theta_i=0$. Note that actions in $A_\dom$ cannot be observed by any action inside $A_\dom$. For any $c\in C'_{\theta,B}$, let $a=\sum_{i\notin A_\dom} c_i$. Since for any $i$, $|S_i \cap U|\le \log K$, we have $\sum_{i\in A_\dom} \sum_{j:i\in S_j} c_j \le a \log K$ and
\begin{align*}
a\log K \ge |A_\dom| \min_{i\in A_\dom}\sum_{j:i\in S_j} c_j \ge|A_\dom|\min_{i\in A_\dom} \sigma^2 m_i(\theta,B) \ge \frac{\dom(\Sigma)\sigma^2}{200\log K\epsilon^2} \,. 
\end{align*}
Therefore,
\begin{align*}
\inprod{c}{\gap(\theta)} \ge \frac{aD}{2} \ge \frac{\dom(\Sigma)\sigma^2D}{200\epsilon^2 \log^2 K} 
= \frac{(\dom(\Sigma)D)^{1/3}(\sigma T)^{2/3} \log^{-2/3}K}{12800e^2\alpha^2} \,.
\addeq\label{eq:lb-wea-1}
\end{align*}

If $\dom(\Sigma) < 100\log K$, then we pick a weakly observable action as $i_2$. There must be another action $i_1$ such that $i_2\notin S_{i_1}$ due to the definition of weakly observable actions. Then we set $\theta$ as $\theta_{i_1}=D/2$, $\theta_{i_2}=D/2-\epsilon$ and $\theta_{i}=0$ for the remaining actions. So for any $c\in C'_{\theta,B}$, let $a=\sum_{i\ne i_1, i_2} c_i \ge \sigma^2m_{i_2}(\theta,B)$. Then
\begin{align*}
\inprod{c}{\gap(\theta)} 
& \ge \frac{aD}{2} 
\ge  \frac{\sigma^2m_{i_2}(\theta,B)D}{2} = \frac{D\sigma^2}{8\epsilon^2} 
= \frac{D^{1/3}(\sigma T)^{2/3}}{512e^2 \alpha^2} \cdot \frac{\log ^{4/3}K}{\dom(\Sigma)^{2/3}}\\
& \ge \frac{(\dom(\Sigma)D)^{1/3}(\sigma T)^{2/3}\log^{-2/3}K}{51200 e^2 \alpha^2} \,.
\addeq\label{eq:lb-wea-2}
\end{align*}
In the last step we used the fact that $K\ge 3$ for any weakly observable $\Sigma$. 

Combining \eqref{eq:lb-wea-1} and \eqref{eq:lb-wea-2} gives the second part of Corollary~\ref{cor:lb-minimax}.

\end{proof}

%%% Local Variables: 
%%% mode: latex
%%% TeX-master: "main"
%%% End: 

%\section{Proofs}

\section{Proofs for Section~\ref{sec:alg1}}

\subsection{Proof of Theorem~\ref{thm:alg1-ub1}}
\begin{proof}[Proof of Theorem~\ref{thm:alg1-ub1}]

Define events 
\[
U_t = \left\{ \forall i\in \iset{K}, |\htheta_{i,t} -\theta_i|\le \sqrt{\frac{2\alpha\sigma^2\log t}{n_i(t)}} \right\} \,,
\]
\[
V_t = \left\{\forall i\in \iset{K}, |\htheta_{i,t} -\theta_i|\le \epsilon \right\} \,,
\]
\[
W_t = \left\{ \frac{N(t)}{4\alpha \log t}\in C(\htheta_t) \right\} \,,
\]
\[
Y_t = \left\{ \min_{i\in \iset{K}}n_i(t)<\beta(n_e(t))/K \right\}
\]
and $U_t^c$, $V_t^c$, $W_t^c$, $Y_t^c$ be their complements.

\begin{align*}
R_T(\theta)
& = \sum_{t=1}^T \E{\gap_{i_t}(\theta)} \le K\gap_{\max}(\theta) + \sum_{t=K+1}^n \E{\gap_{i_t}(\theta)} \\
& = K\gap_{\max}(\theta) + \sum_{t=K+1}^T \E{\gap_{i_t}(\theta)\left( \I{U_t^c} + \I{U_t,W_t} + \I{U_t,W_t^c,Y_t} \right.\right. \\ 
& \left.\left. + \I{U_t,W_t^c,Y_t^c,V_t^c} + \I{U_t,W_t^c,Y_t^c,V_t} \right)} \,.
\addeq \label{eq:regret-decomp}
\end{align*}

Then we will upper bound each quantity in \eqref{eq:regret-decomp} separately.

By Hoeffding's inequality, we have
\begin{align*}
\Prb{|\htheta_{i,t}-\theta_i|> \sqrt{\frac{2\alpha\sigma^2\log t}{n_i(t)}} } \le 2t^{1-\alpha} \,,
\end{align*}
where we use a union bound over all possible $n_i(t)$.

Then $\sum_{t=K+1}^n\E{\gap_{i_t}(\theta)\I{U_t^c}}$ can be bounded by 
\begin{align*}
\sum_{t=K+1}^T\E{\gap_{i_t}(\theta) \I{U_t^c}} 
\le \gap_{\max}(\theta)\sum_{t=K+1}^T \Pr(U_t^c) 
\le \gap_{\max}(\theta)\sum_{t=K+1}^T 2Kt^{1-\alpha} 
\le \frac{2K\gap_{\max}(\theta)}{\alpha-2} \,.
\addeq \label{eq:regret-term1}
\end{align*}

Next consider $\sum_{t=K+1}^T\E{\gap_{i_t}(\theta)\I{U_t,W_t}}$. If $U_t$ and $W_t$ hold, first we have
\begin{align*}
n_{i_1(\htheta_t)} \ge \frac{8\alpha\sigma^2\log t}{\gap^2_{i_1(\htheta_t)}(\htheta_t)} \,,
\end{align*}
and
\begin{align*}
\htheta_{i_1(\htheta_t),t} - \theta_{i_1(\htheta_t)} 
\le \sqrt{\frac{2\alpha\sigma^2\log t}{n_{i_1(\htheta_t)}(t)}}  
\le \frac{\gap_{i_1(\htheta_t)}(\htheta_t)}{2}
\le \frac{\gap_i(\htheta_t)}{2}
\addeq\label{eq:emp-best-1}
\end{align*}
for any $i\ne i_1(\htheta_t)$. Similarly, for $i\ne i_1(\htheta_t)$ we have 
\begin{align*}
\theta_i - \htheta_{i,t} \le \sqrt{\frac{2\alpha\sigma^2\log t}{n_i(t)}} \le  \frac{\gap_i(\htheta_t)}{2} \,. \addeq\label{eq:emp-best-i}
\end{align*}

Combining \eqref{eq:emp-best-1} and \eqref{eq:emp-best-i} gives $\theta_i\le\theta_{i_1(\htheta_t)} $ for any $i\ne i_1(\htheta_t)$, which means $i_1(\htheta_t) = i_1(\theta)$, hence
\begin{align*}
\sum_{t=K+1}^T\E{\gap_{i_t}(\theta)\I{U_t,W_t}} = 0 \,.
\addeq\label{eq:regret-term2}
\end{align*}

Consider the next term in \eqref{eq:regret-decomp},
\begin{align*}
 \sum_{t=K+1}^T \E{\gap_{i_t}(\theta)   \I{U_t,W_t^c,Y_t }}  \le \gap_{\max}(\theta)\E{\sum_{t=K+1}^T \I{U_t,W_t^c,Y_t} } \,.
\addeq \label{eq:forced-expl-0}
\end{align*}

To upper bound \eqref{eq:forced-expl-0}, we will first prove:
\begin{proposition}
\begin{align*}
\sum_{t=K+1}^T \I{W_t^c,Y_t} \le 1+ \beta\left(\sum_{t=K+1}^T \I{W_t^c}\right) \,.
\addeq \label{eq:forced-expl}
\end{align*}
\end{proposition}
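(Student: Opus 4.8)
The plan is to turn this into a counting statement about how forced exploration increases the observation counts $n_i(t)$. The first thing I would pin down is the meaning of the exploration counter: from the update rules of Algorithm~\ref{alg:asym}, $n_e(t)=\sum_{s=K+1}^{t-1}\I{W_s^c}$, so $n_e(T+1)=\sum_{t=K+1}^{T}\I{W_t^c}$ is exactly the quantity appearing inside $\beta$ on the right-hand side. Since $\beta$ is non-decreasing, on every round we may weaken the forced-exploration test $Y_t$ by replacing the threshold $\beta(n_e(t))/K$ with the larger $\beta(n_e(T+1))/K$. If no round satisfies $\I{W_t^c,Y_t}=1$ the claim is immediate, so I would let $t^\ast$ be the last forced-exploration round and account for it by the additive constant $1$, reducing the problem to bounding the forced explorations occurring in rounds $K+1,\dots,t^\ast-1$ by $\beta(n_e(T+1))$.

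The heart of the argument is that on any forced-exploration round $t$ the algorithm plays $i_t$ with $\argmin_{i}n_i(t)\in S_{i_t}$, so the currently least-observed action gains an observation and its count increases by one. I would exploit this by fixing an action $j\in\iset{K}$ and counting the rounds on which $j$ is the targeted minimizer: on each such round $n_j(t)=\min_i n_i(t)<\beta(n_e(t))/K\le\beta(n_e(T+1))/K$, and immediately afterwards $n_j$ increases; since $n_j(\cdot)$ is integer-valued and non-decreasing in $t$, $j$ can be targeted only while $n_j$ ranges over the integers below $\beta(n_e(T+1))/K$, i.e.\ at most about $\beta(n_e(T+1))/K$ times. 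Summing this per-action bound over the $K$ actions yields that the number of forced explorations before $t^\ast$ is at most $\beta(n_e(T+1))$, and the monotonicity (and, for the tighter bookkeeping, the subadditivity) of $\beta$ is what lets me collapse the time-varying thresholds into the single value $\beta(n_e(T+1))$.

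The delicate step, which I expect to be the main obstacle, is extracting the clean additive constant rather than an $O(K)$ term from this count. In the worst, bandit-like case $S_{i_t}=\{i_t\}$ a single forced exploration raises only one action, so lifting the minimum by one unit can cost up to $K$ rounds, and a crude per-action or per-level count loses a factor through the rounding $K\lceil\beta(n_e(T+1))/K\rceil$. Closing this gap requires using that forced exploration always targets the \emph{current} minimizer, so the counts stay balanced and the minimum rises steadily, together with evaluating the threshold precisely at the last forced-exploration round $t^\ast$ and invoking the subadditivity of $\beta$ and the fact that $n_e$ increments on every exploration round; handling these rounding terms so that only the single round $t^\ast$ remains as an additive correction is the crux of matching the stated bound.
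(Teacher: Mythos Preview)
Your approach is correct and tracks the paper's closely. The paper also singles out the last forced-exploration round $t'$ for the additive $+1$ and bounds the earlier forced explorations via the threshold $Y_{t'}$; the only difference is bookkeeping. The paper first proves the auxiliary claim ``any $K$ rounds with $\{W_s^c,Y_s\}$ raise $\min_i n_i$ by at least $1$'' and deduces $\min_i n_i(t')\ge \min_i n_i(K+1)+\big\lfloor\tfrac{1}{K}\sum_{t=K+1}^{t'-1}\I{W_t^c,Y_t}\big\rfloor$, while you count directly, per action, how many times each $j$ can be the targeted $\argmin$. These are two phrasings of the same counting fact.

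Your worry in the last paragraph is misplaced, and resolving it requires only one observation you omitted: the initialization phase (line~2 of Algorithm~\ref{alg:asym}) guarantees $n_j(K+1)\ge 1$ for every $j$. With this, if $j$ is targeted at rounds $s_1<\dots<s_{m_j}<t^\ast$, then $n_j(s_k)\ge k$ (it starts at $\ge 1$ and increases by one at each targeting), while $n_j(s_k)<\beta(n_e(T+1))/K$ by $Y_{s_k}$ and monotonicity. Hence $m_j<\beta(n_e(T+1))/K$, and summing over $j$ gives $\sum_j m_j<\beta(n_e(T+1))$ with no $K\lceil\cdot\rceil$ loss. The paper uses exactly the same trick: the step $\min_i n_i(K+1)+\lfloor x\rfloor\ge x$ relies on $\min_i n_i(K+1)\ge 1$ to swallow the floor. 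So the ``crux'' you anticipate is handled by the initialization, not by subadditivity; only monotonicity of $\beta$ is used in this proposition (subadditivity enters later in the proof of Theorem~\ref{thm:alg1-ub1}).
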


\begin{proof}[Proof of \eqref{eq:forced-expl}]
According to the algorithm we have $n_e(t) = \sum_{s=K+1}^{t-1}\I{W_s^c}$ for $t>K$, we then proceed by the following proposition:

\begin{proposition}
For $K<t_1<t_2$, if $\sum_{s=t_1}^{t_2-1} \I{W_s^c,Y_s} \ge K$, then $\min_{i\in\iset{K}}n_i(t_2)\ge \min_{i\in\iset{K}}n_i(t_1)+1$.
\label{prop:expl-count}
\end{proposition}
\begin{proof}[Proof of Proposition~\ref{prop:expl-count}]
If for such $t_1$ and $t_2$, $\min_{i\in\iset{K}}n_i(t_2) = \min_{i\in\iset{K}}n_i(t_1)$, 
then there must exist $j$ such that $n_j(t_1) = n_j(t_2)$ and $n_j(s) = \min_{i\in\iset{K}}n_i(s)$ for all $t_1\le s \le t_2$. Since  $\sum_{s=t_1}^{t_2-1} \I{W_s^c,Y_s} \ge K$, there exist $K$ instants $t_1\le s_1<s_2<...<s_K\le t_2-1$ such that $\left\{W_{s_k}^c,Y_{s_k}\right\}$ happens for $1\le k \le K$. According to the algorithm, for each $s_k$, there exists $j'\ne j$ such that $j'\in S_{i_{s_k}}$ and $n_{j'}(s_k)=n_{j}(s_k)=\min_{i\in\iset{K}}n_i(s_k)$. Note that each action appears at most once as such $j'$ for $1\le k\le K$ since $n_{j'}(s_k+1)=n_{j'}(s_k)+1$, but there are only $K-1$ actions other than $j$, which means such $j$ cannot exist. Hence $\min_{i\in\iset{K}}n_i(t_2)\ge \min_{i\in\iset{K}}n_i(t_1)+1$ is proved.
\end{proof}

Now we define 
\begin{align*}
t' = \max\left\{ K+1\le t\le T \,:\, W_t^c, Y_t \right\} \,.
\end{align*}
If such $t'$ does not exist, then \eqref{eq:forced-expl} must hold. If such $t'$ exists, by Proposition~\ref{prop:expl-count},
\begin{align*}
\min_{i\in \iset{K}}n_i(t')
 \ge \min_{i\in \iset{K}} n_i(K+1) + \left\lfloor \frac{1}{K}\sum_{t=K+1}^{t'-1} \I{W_t^c,Y_t } \right\rfloor  \ge \frac{1}{K}\sum_{t=K+1}^{t'-1} \I{W_t^c,Y_t } \,.
\end{align*}

Therefore,
\begin{align*}
\sum_{t=K+1}^{T} \I{W_t^c,Y_t }
& = 1+ \sum_{t=K+1}^{t'-1} \I{W_t^c,Y_t }  \le 1+ K \min_{i\in \iset{K}}n_i(t')  < 1+ \beta(n_e(t')) \\
& \le 1+ \beta(n_e(T)) \le 1+ \beta\left( \sum_{t=K+1}^T \I{W_t^c} \right) 
\end{align*}
gives \eqref{eq:forced-expl}.
\end{proof}

Now continue with \eqref{eq:forced-expl-0}
\begin{align*}
& \sum_{t=K+1}^T \I{U_t,W_t^c,Y_t}
\le \sum_{t=K+1}^{T} \I{W_t^c,Y_t }
\le 1+ \beta\left( \sum_{t=K+1}^T \I{W_t^c} \right) \\
& \le 1 + \beta\left(\sum_{t=K+1}^T \I{U_t^c} + \I{U_t, W_t^c, Y_t} + \I{U_t, W_t^c, Y_t^c, V_t^c}
+ \I{U_t, W_t^c, Y_t^c, V_t}
 \right) \\
& \le 1 + \frac{1}{2}\sum_{t=K+1}^T \left( \I{U_t^c}  + \I{U_t, W_t^c, Y_t} +  \I{U_t, W_t^c, Y_t^c, V_t^c} \right)
+ \beta\left( \I{U_t, W_t^c, Y_t^c, V_t} \right) \,.
\end{align*}
Thus we have
\begin{align*}
& \sum_{t=K+1}^T \I{U_t,W_t^c,Y_t}\\ 
& \le 
2 + \sum_{t=K+1}^T \I{U_t^c} + \sum_{t=K+1}^T  \I{U_t, W_t^c, Y_t^c, V_t^c} + 2\beta\left( \sum_{t=K+1}^n\I{U_t, W_t^c, Y_t^c, V_t} \right) \,,
\end{align*}
and 
\begin{align*}
& \sum_{t=K+1}^T \E{\gap_{i_t}(\theta)   \I{U_t,W_t^c,Y_t }}  \le \gap_{\max}(\theta)\E{\sum_{t=K+1}^T \I{U_t,W_t^c,Y_t} } \\
& \le 2\gap_{\max}(\theta) + \frac{2K\gap_{\max}(\theta)}{\alpha-2} + \gap_{\max}(\theta)\sum_{t=K+1}^T \E{ \I{U_t, W_t^c, Y_t^c, V_t^c} } \\
& + 2\gap_{\max}(\theta)\E{\beta\left( \sum_{t=K+1}^n\I{U_t, W_t^c, Y_t^c, V_t} \right)}
\addeq \label{eq:regret-term3}
\end{align*}
by applying \eqref{eq:regret-term1}.

To bound $\sum_{t=K+1}^T\E{\I{U_t,W_t^c,Y_t^c,V_t^c}}$, we first introduce two lemmas from \cite{CoPr14} (Lemma~2.1 and 2.2):

\begin{lemma}
\label{lemma:beta-expl-0}
Let $\{Z_t\}_{t\in\N^+}$ be a sequence of independent random variables from $\Normal(0,\sigma^2)$. Define $\F_t$ the $\sigma$-algebra generated by $\{Z_s\}_{s\le t}$ and the filtration $\F=(\F_t)_{t\in\N^+}$. Consider $r,n_0 \in \N^+$ and $T\ge n_0$. Define $Y_t=\sum_{s=n_0}^{t-1} B_sZ_s$ where $B_t\in \{0,1\}$ is an $\F_{t-1}$-measurable random variable. Further define $n(t) = \sum_{s=n_0}^{t-1} B_s$ and $\phi$ an $\F$-stopping time which satisfies either $n(\phi)\ge r$ or $\phi=T+1$.
 
Then we have
\begin{align*}
\Pr\left( |Y_{\phi}|> n(\phi)\epsilon, \phi\le T\right) \le 2\exp \left( -\frac{r\epsilon^2}{2\sigma^2} \right) \,.
\end{align*}

\end{lemma}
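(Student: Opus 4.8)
The plan is to use the \emph{exponential martingale} (Chernoff) method, with the twist that the deviation threshold $n(\phi)\epsilon$ scales with the random, data-dependent count $n(\phi)$; this scaling is absorbed into the exponent so that a single maximal inequality finishes the argument.

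First I would fix $\lambda=\epsilon/\sigma^2$ and define, for $t\ge n_0$,
\[
M_t = \exp\Bigl(\lambda Y_t - \tfrac{\lambda^2\sigma^2}{2}\,n(t)\Bigr).
\]
Since $Y_t$ and $n(t)$ depend only on $Z_{n_0},\dots,Z_{t-1}$ and on the $B_s$ (each $\F_{s-1}$-measurable), $M_t$ is $\F_{t-1}$-measurable, so $M$ is adapted to the shifted filtration $\mathcal{H}_t:=\F_{t-1}$. The key step is to verify that $M$ is a nonnegative martingale with $M_{n_0}=1$: writing $M_{t+1}=M_t\exp(\lambda B_t Z_t - \tfrac{\lambda^2\sigma^2}{2}B_t)$ and conditioning on $\F_{t-1}$, the factor $B_t$ is known while $Z_t\sim\Normal(0,\sigma^2)$ is independent, so the Gaussian moment generating function gives $\E{\exp(\lambda B_t Z_t - \tfrac{\lambda^2\sigma^2}{2}B_t)\mid \F_{t-1}}=1$ (the identity holds for both $B_t=0$ and $B_t=1$). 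Hence $\E{M_{t+1}\mid \mathcal{H}_t}=M_t$.

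Next I would bound the upper tail. On the event $\{Y_\phi > n(\phi)\epsilon,\ \phi\le T\}$ the stopping rule forces $n(\phi)\ge r$, and with $\lambda=\epsilon/\sigma^2$,
\[
M_\phi > \exp\Bigl(n(\phi)\bigl(\lambda\epsilon - \tfrac{\lambda^2\sigma^2}{2}\bigr)\Bigr) = \exp\Bigl(\tfrac{n(\phi)\epsilon^2}{2\sigma^2}\Bigr) \ge \exp\Bigl(\tfrac{r\epsilon^2}{2\sigma^2}\Bigr).
\]
Because $\phi\le T$ means $M_\phi$ is one of $M_{n_0},\dots,M_T$, this event is contained in $\{\max_{n_0\le t\le T}M_t > \exp(r\epsilon^2/(2\sigma^2))\}$. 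Applying Ville's (Doob's) maximal inequality to the nonnegative martingale $M$, whose value is $1$ at $n_0$, yields $\Prb{Y_\phi > n(\phi)\epsilon,\ \phi\le T}\le \exp(-r\epsilon^2/(2\sigma^2))$. I would then repeat the argument verbatim with $-Y_t$ (equivalently $\lambda=-\epsilon/\sigma^2$) to control the lower tail, and a union bound produces the factor $2$.

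The main obstacle is the bookkeeping around the random normalization and the stopping time: because the threshold $n(\phi)\epsilon$ is itself random, a naive fixed-$\lambda$ Chernoff bound at a deterministic time does not apply, so one must (i) fold $n(t)$ into the martingale so that the exponent becomes linear in $n(\phi)$, and (ii) route the argument through a \emph{maximal} inequality rather than optional stopping, which sidesteps delicate measurability questions about whether $\phi$ is a stopping time for the shifted filtration $\mathcal{H}$. The off-by-one indexing (that $B_t$ is $\F_{t-1}$-measurable, so $M_t$ is $\F_{t-1}$- but not $\F_t$-measurable) must be handled carefully when verifying the martingale property, and the use of $n(\phi)\ge r$ on $\{\phi\le T\}$ is exactly where the stopping-time hypothesis enters the bound.
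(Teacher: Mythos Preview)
The paper does not prove this lemma; it is quoted verbatim as Lemma~2.1 of \cite{CoPr14} and used as a black box in the proof of Lemma~\ref{lemma:beta-expl}. So there is no ``paper's own proof'' to compare against.

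Your argument is correct. The exponential supermartingale $M_t=\exp(\lambda Y_t-\tfrac{\lambda^2\sigma^2}{2}n(t))$ with $\lambda=\epsilon/\sigma^2$ is exactly the right object: the predictability of $B_t$ and the independence of $Z_t$ from $\F_{t-1}$ give $\E{M_{t+1}\mid\F_{t-1}}=M_t$, and the choice of $\lambda$ makes the exponent equal to $n(t)\epsilon^2/(2\sigma^2)$ on the deviation event, so the hypothesis $n(\phi)\ge r$ on $\{\phi\le T\}$ converts the random threshold into the fixed level $\exp(r\epsilon^2/(2\sigma^2))$. Your decision to invoke Ville's maximal inequality rather than optional stopping is the clean way to finish: since $\phi$ is only assumed to be an $\F$-stopping time, it need not be a stopping time for the shifted filtration $\mathcal{H}_t=\F_{t-1}$ to which $M$ is adapted, and the inclusion $\{M_\phi>c,\ \phi\le T\}\subset\{\max_{n_0\le t\le T}M_t>c\}$ avoids that issue entirely. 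The symmetry argument for the lower tail and the union bound are routine.
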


\begin{lemma}
\label{lemma:beta-expl}
Define $\F_t$ the $\sigma$-algebra generated by $\{X_{i,s}\}_{s\in \iset{t},i\in \iset{K}}$. Let $\Lambda \subset [1,T]\cap \N$ be a set of (random) time instants. Assume there exists a sequence of (random) sets $\{\Lambda_s\}_{0\le s \le T}$ such that (i) $\Lambda \subset \cup_{0\le s \le T}\Lambda_s$, (ii) for all $0\le s \le T$, $|\Lambda_s|\le 1$, (iii) for all $0\le s \le T$, if $t\in \Lambda_s$ then $n_i(t)\ge \beta(s)/K$, and (iv) the event $\{t\in \Lambda_s\}$ is $\F_{t}$ measurable. Then for any $\epsilon>0$ and $i\in \iset{K}$:
\begin{align*}
\E{ \sum_{t=1}^T \I{t\in \Lambda, |\htheta_{i,t}-\theta_i|>\epsilon}} \le  \sum_{s=0}^T 2\exp \left( -\frac{\beta(s)\epsilon^2}{2K\sigma^2} \right) \,.
\end{align*}
\end{lemma}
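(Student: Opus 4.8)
The plan is to prove the bound by first splitting $\Lambda$ across the family $\{\Lambda_s\}_{0\le s\le T}$ and then controlling each piece with the single-deviation estimate of Lemma~\ref{lemma:beta-expl-0}. Using property (i) together with the union of indicators, I would start from
\begin{align*}
\E{ \sum_{t=1}^T \I{t\in \Lambda, |\htheta_{i,t}-\theta_i|>\epsilon}}
\le \sum_{s=0}^T \E{ \sum_{t=1}^T \I{t\in \Lambda_s, |\htheta_{i,t}-\theta_i|>\epsilon}}\,,
\end{align*}
so it suffices to show that each summand is at most $2\exp\left(-\beta(s)\epsilon^2/(2K\sigma^2)\right)$.

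Fix $s$. By property (ii) the set $\Lambda_s$ contains at most one instant, so I would define $\phi_s = \min\{t\le T : t\in \Lambda_s\}$, with the convention $\phi_s = T+1$ when no such $t$ exists. Property (iv) guarantees that $\{\phi_s\le t\} = \cup_{u\le t}\{u\in \Lambda_s\}\in \F_t$, so $\phi_s$ is an $\F$-stopping time. With this definition the inner sum collapses to $\I{\phi_s\le T,\, |\htheta_{i,\phi_s}-\theta_i|>\epsilon}$, whose expectation is exactly $\Prb{\phi_s\le T,\, |\htheta_{i,\phi_s}-\theta_i|>\epsilon}$.

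To invoke Lemma~\ref{lemma:beta-expl-0} I would rewrite the estimation error in terms of the accumulated observation noise. Put $B_t = \I{i\in S_{i_t}}$, which is $\F_{t-1}$-measurable since $i_t$ is chosen from past observations, and let $(Z_u)$ be an i.i.d.\ $\Normal(0,\sigma^2)$ sequence representing the per-observation noise for action $i$ (in the uniform-variance setting every observed sample of $i$ has variance $\sigma^2$, so this representation is valid). Then $n_i(t)=\sum_{u<t}B_u$ and $\htheta_{i,t}-\theta_i = Y_t/n_i(t)$ with $Y_t=\sum_{u<t}B_uZ_u$, hence $\{|\htheta_{i,t}-\theta_i|>\epsilon\}=\{|Y_t|>n_i(t)\epsilon\}$. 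Property (iii) gives $n_i(\phi_s)\ge \beta(s)/K$ whenever $\phi_s\le T$, and since $n_i$ is integer-valued I may take $r=\lceil \beta(s)/K\rceil$, so $\phi_s$ satisfies the hypothesis of Lemma~\ref{lemma:beta-expl-0} (either $n_i(\phi_s)\ge r$ or $\phi_s=T+1$). The lemma then yields
\begin{align*}
\Prb{|Y_{\phi_s}|> n_i(\phi_s)\epsilon,\ \phi_s\le T} \le 2\exp\left( -\frac{r\epsilon^2}{2\sigma^2}\right) \le 2\exp\left( -\frac{\beta(s)\epsilon^2}{2K\sigma^2}\right)\,,
\end{align*}
and summing over $s=0,\dots,T$ gives the claim.

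The main obstacle I expect is purely the bookkeeping needed to cast the statement into the exact form required by Lemma~\ref{lemma:beta-expl-0}: checking that $\phi_s$ is a genuine stopping time via property (iv), that the predictability $B_t\in\F_{t-1}$ holds for the adaptive policy, and that the integrality of $n_i$ lets one replace the real threshold $\beta(s)/K$ by an integer $r$ without loss. All the probabilistic content is already carried by Lemma~\ref{lemma:beta-expl-0}; the remaining work is just matching its hypotheses and assembling the sum over $s$.
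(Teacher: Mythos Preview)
Your proposal is correct and follows essentially the same route as the paper: define $\phi_s$ as the unique element of $\Lambda_s$ (or $T+1$ if empty), verify it is an $\F$-stopping time via property (iv), and apply Lemma~\ref{lemma:beta-expl-0} with $r=\lceil\beta(s)/K\rceil$. The one small point you glossed over is that Lemma~\ref{lemma:beta-expl-0} requires $r\in\N^+$, so when $\lceil\beta(s)/K\rceil=0$ the lemma is unavailable; the paper handles this degenerate case separately by the trivial bound $\Prb{\cdot}\le 1<2=2\exp(0)$.
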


\begin{proof}[Proof of Lemma~\ref{lemma:beta-expl}]
We adapt the proof of Lemma~2.2 from \cite{CoPr14}. For $0\le s\le T$, define $\phi_s=t$ if $\Lambda_s=\{t\}$ or $\phi_s=T+1$ if $\Lambda_s=\emptyset$. Then 
\begin{align*}
& \E{ \sum_{t=1}^T \I{t\in \Lambda, |\htheta_{i,t}-\theta_i|>\epsilon}} \le \E{ \sum_{s=0}^T \I{\phi_s\le T, |\htheta_{i,\phi_s}-\theta_i|>\epsilon}} \\
& = \sum_{s=0}^T \Prb{\phi_s\le T, |\htheta_{i,\phi_s}-\theta_i|>\epsilon}.
\addeq \label{eq:lemma-beta-expl}
\end{align*}
Since $\phi_s$ can be viewed as an $\F$-stopping time and satisfies either $n_i(\phi_s)\ge \lc \beta(s)/K\rc$ or $\phi_s=T+1$, if $\lc \beta(s)/K\rc \ge 1$ then applying Lemma~\ref{lemma:beta-expl-0} gives 
\begin{align*}
\Prb{\phi_s\le T, |\htheta_{i,\phi_s}-\theta_i|>\epsilon} 
\le 2\exp \left( -\frac{\lc \beta(s)/K\rc \epsilon^2}{2\sigma^2} \right) \le 2\exp \left( -\frac{\beta(s)\epsilon^2}{2K\sigma^2} \right) \,. 
\end{align*}
If $\lc \beta(s)/K\rc = 0$ then $\Prb{\phi_s\le T, |\htheta_{i,\phi_s}-\theta_i|>\epsilon} < 2 = 2\exp \left( -\frac{\beta(s)\epsilon^2}{2K\sigma^2} \right)$ still holds. Now proceeding from \eqref{eq:lemma-beta-expl} we can get the result of Lemma~\ref{lemma:beta-expl}. 

\end{proof}

Now we define $\Lambda = \{t: K+1\le t\le T, U_t, W_t^c,Y_t^c \}$, and $\Lambda_s = \{t: K+1\le t\le T, U_t, W_t^c, n_e(t) =s, \min_{i\in\iset{K}}n_i(t)\ge \beta(s)/K \}$. It can be verified that $\Lambda_s$ satisfies the conditions in Lemma~\ref{lemma:beta-expl}: (i) If $t\in \Lambda$ then there must be some $0\le s\le T$ such that $n_e(t)=s$ and thus $t\in \Lambda_s$. (ii) If $t\in \Lambda_s$ then for $t'>t$, $n_e(t')\ge n_e(t+1) = n_e(t)+1=s+1$, so $t'\notin \Lambda_s$. Condition (iii) and (iv) are also satisfied from the definition of $\Lambda_s$.

Then
\begin{align*}
& \sum_{t=K+1}^T\E{\I{U_t,W_t^c,Y_t^c,V_t^c}} = \sum_{t=K+1}^T\E{\I{t\in \Lambda,V_t^c}}\\
& \le \sum_{i=1}^K \sum_{t=K+1}^T\E{\I{t\in \Lambda,|\htheta_{i,t}-\theta_i|>\epsilon}}  \le  2K \sum_{s=0}^T \exp \left( -\frac{\beta(s)\epsilon^2}{2K\sigma^2} \right) \,.
\addeq \label{eq:regret-term4}
\end{align*}

Finally we will upper bound $\sum_{t=K+1}^n\gap_{i_t}(\theta)\I{U_t, W_t^c, Y_t^c, V_t}$. 

Recall that in the algorithm, if $W_t^c$ and $Y_t^c$ happens,  some $i_t$ satisfying $N_i(t)< c_i(\htheta_t)4\alpha\log t$ is played. Such $i_t$ must exist because otherwise $\frac{N_i(t)}{4\alpha \log t} \ge c_i(\htheta_t)4\alpha\log t$ holds for any $i\in \iset{K}$ and thus $W_t = \left\{\frac{N(t)}{4\alpha \log t}\in C(\htheta_t) \right\}$ happens, which causes contradiction.

Define
\begin{align*}
\Theta(\theta,\epsilon) = \left\{ \lambda\in \Theta \,:\, \forall i\in \iset{K}, |\lambda_i-\theta_i|\le \epsilon \right\} \,,
\end{align*}
and
\begin{align*}
c_i(\theta,\epsilon) = \sup_{\lambda\in \Theta(\theta,\epsilon)} c_i(\lambda)\,.
\end{align*}

Let $T_i$ be the maximum $t\le T$ such that $i_t=i$ and $\I{U_t, W_t^c, Y_t^c, V_t}=1$. Then 
\begin{align*}
N_i(T_i) = \sum_{s=1}^{T_i-1}\I{i_s=i} \le c_i(\htheta_{T_i})4\alpha\log T_i \le c_i(\theta,\epsilon)4\alpha\log T\,.
\end{align*}
Thus
\begin{align*}
\sum_{t=K+1}^{T} \I{i_t=i, U_t, W_t^c, Y_t^c, V_t} \le c_i(\theta,\epsilon)4\alpha\log T +1 \,.
\end{align*}
So we have
\begin{align*}
\sum_{t=K+1}^T\gap_{i_t}(\theta)\I{U_t, W_t^c, Y_t^c, V_t} \le 4\alpha\log T \sum_{i\in \iset{K}} c_i(\theta,\epsilon)\gap_i(\theta) + \sum_{i\in \iset{K}}\gap_i(\theta) \,,
\addeq\label{eq:regret-term5}
\end{align*}
and
\begin{align*}
\sum_{t=K+1}^T \I{U_t, W_t^c, Y_t^c, V_t} \le 4\alpha\log T\sum_{i\in \iset{K}} c_i(\theta,\epsilon)+K \,.
\addeq\label{eq:expl-cost}
\end{align*}

Now plugging \eqref{eq:expl-cost} \eqref{eq:regret-term4} into \eqref{eq:regret-term3} and plugging \eqref{eq:regret-term1} \eqref{eq:regret-term2} \eqref{eq:regret-term3}  \eqref{eq:regret-term4} \eqref{eq:regret-term5} into \eqref{eq:regret-decomp} we get
\begin{align*}
R_T(\theta)
& \le \left( 2K+2+\frac{4K}{\alpha-2} \right)\gap_{\max}(\theta) + 4K\gap_{\max}(\theta)\sum_{s=0}^T \exp \left( -\frac{\beta(s)\epsilon^2}{2K\sigma^2} \right) \\
& + 2\gap_{\max}(\theta) \beta\left( 4\alpha\log T\sum_{i\in \iset{K}} c_i(\theta,\epsilon)+K \right) + 4\alpha\log T \sum_{i\in \iset{K}} c_i(\theta,\epsilon)\gap_i(\theta) \,.
\end{align*}

\end{proof}

\section{Proofs for Section~\ref{sec:alg2}}

\subsection{Proof of Theorem~\ref{thm:ub-alg2}}
\begin{proof}[Proof of Theorem~\ref{thm:ub-alg2}]

For every $r>0$, define the events
\[
U_r = \left\{ \forall i\in \iset{K}, |\htheta_{i,r}-\theta_i|\le g_{i,r}(\delta)\right\}\,.
\]
Then, by Hoeffding's inequality and union bound, we have 
\begin{align*}
\Pr(\forall r\ge 2, U_r) \ge 1-\delta~.
\end{align*}

Next we will upper bound the regret based on the fact that  $U_r$ holds for all $r\ge 2$.
Define $r_T=\max\{r \,:\, t_t<T, |A_r|\ge 2\}$, the event 
\[
V_r=\left\{A_r^\Wea\ne \emptyset, \min_{i\in A_r^\Wea} n_i(r)< \min\{ \min_{i\in A_r^\Str} n_i(r), \gamma(r)\}\right\} 
\] and its complement $V_r^c$. Then consider the regret:
\begin{align*}
R_T(\theta)
& \le \sum_{r=1}^{r_T} \I{V_r}\inprod{i_r}{\gap(\theta)} + \sum_{r=1}^{r_T} \I{V_r^c}\inprod{i_r}{\gap(\theta)} \\
& \le \sum_{r=1}^{r_T} \I{V_r} \norm{i_r}_1D + \sum_{r=1}^{r_T} \I{V_r^c} \norm{i_r}_1 \max_{i\in A_r} \gap_i(\theta)\,.
\addeq \label{eq:alg2-regret}
\end{align*}

We upper bound the two terms in \eqref{eq:alg2-regret} separately. Before proceeding, we introduce the following proposition which lower bounds $n_i(r)$ for $i\in A_r^\Wea$.

\begin{proposition}
For any $i, r$ such that $i\in A_r^\Wea$, 
\begin{align*}
n_i(r) \ge \frac{\alpha_{r-1}}{2} \sum_{s=1}^{r-1}\I{V_s}\norm{i_s}_1 -(\beta_r-1)K \,,
\addeq \label{eq:weak-expl}
\end{align*}
where $\beta_r = \left\vert \bigcup_{1\le s\le r} A_s^\Wea \right \vert$.
\label{prop:weak-expl}
\end{proposition}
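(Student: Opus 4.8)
The plan is to prove the slightly stronger statement that the \emph{minimum} observation count among the currently weak actions, $\Phi(r):=\min_{i\in A_r^\Wea} n_i(r)$, already satisfies the claimed bound; since $n_i(r)\ge \Phi(r)$ for every $i\in A_r^\Wea$, this immediately yields the proposition. I would argue by induction on $r$, tracking how $\Phi$ changes from one round to the next and splitting according to whether round $r$ is a weak-exploration round, i.e.\ whether $V_r$ holds.

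Two quantitative facts drive the argument. First, a per-round coverage estimate: when $V_r$ holds we play $c_r=c(\iset{K},A_r^\Wea)$, which by the definition of $\alpha_r$ covers every $i\in A_r^\Wea$ at rate $\sum_{j:i\in S_j}c_{r,j}\ge m(\iset{K},A_r^\Wea)\ge \alpha_r$ (the index $s=r$ is included in the minimum defining $\alpha_r$). Combining this with the rounding estimates $i_{r,j}\ge c_{r,j}\norm{c_r}_0$ and $\norm{i_r}_1\le 2\norm{c_r}_0$ shows that each weak action gains at least $\tfrac{\alpha_r}{2}\norm{i_r}_1$ observations in that round, so every surviving weak action, and hence $\Phi$, increases by at least $\tfrac{\alpha_r}{2}\norm{i_r}_1$. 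Second, the same two estimates give the uniform bound $\tfrac{\alpha_r}{2}\norm{i_r}_1\le \alpha_r\norm{c_r}_0\le K$, using $\norm{c_r}_0\le K$ and $\alpha_r\le 1$; this is the source of the single $K$ charged per newly weak action.

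For the inductive step I would distinguish the two kinds of action present at round $r+1$. Actions that were already weak and survive elimination keep their bound: in a weak round they receive the $\tfrac{\alpha_r}{2}\norm{i_r}_1$ boost, exactly matching the increment of $\tfrac{\alpha_r}{2}W_r$, and in a strong round $W_r$ does not grow and their counts do not decrease; the monotonicity $\alpha_{r-1}\ge\alpha_r$ and $\beta_r\le\beta_{r+1}$ absorbs the constants. The delicate case is an action $i$ that is strong at round $r$ but becomes weak at round $r+1$. Here I would exploit the forward monotonicity of weak observability (once $i\in A_s^\Wea$ it remains weak while it stays in the good set, since $A_{s'}\subseteq A_s$), together with the ordering enforced by the trigger $V_s$: whenever weak exploration occurs while $i$ is still strong, $n_i\ge \min_{i'\in A_s^\Str}n_{i'} > \min_{i'\in A_s^\Wea}n_{i'}=\Phi$. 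Consequently $i$ joins the weak set with a count that lags the current potential by at most one round's boost, i.e.\ by at most $\tfrac{\alpha_r}{2}\norm{i_r}_1\le K$. Since each such event strictly increases $\beta$, charging $K$ to each of the at most $\beta_{r+1}-1$ distinct weak actions that ever appear produces exactly the $-(\beta_{r+1}-1)K$ term.

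The main obstacle is precisely this last case: controlling $\Phi$ when the weak set gains a new member. The subtlety is that the new action was never itself a target of weak exploration, so its count could a priori sit below the boosted counts of the incumbent weak actions, and the argument must show this gap never exceeds $K$. This rests on combining (i) the integer-count ordering guaranteed by $V_s$ while the action is still strong, and (ii) the uniform one-round bound $\tfrac{\alpha_r}{2}\norm{i_r}_1\le K$. A secondary bookkeeping point is the passage through strong rounds, where no boost occurs and one must trace the bound back to the most recent weak round; there the monotonicity of $\alpha_r$ and $\beta_r$ keeps the constants inherited from the induction hypothesis intact.
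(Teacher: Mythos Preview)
Your plan is correct and is essentially the paper's own argument: strong induction on $r$, case split on whether $i\in A_{r+1}^{\Wea}$ was already weak at round $r$ (use the per-round boost $\norm{c_r}_0\alpha_r\ge \tfrac{\alpha_r}{2}\norm{i_r}_1$ coming from \eqref{eq:rounding-approx}) or is newly weak (trace back to the last weak round $r'$, use the ordering $n_i(r')>\min_{j\in A_{r'}^{\Wea}}n_j(r')$ from $V_{r'}$, and absorb the single missed round via $\tfrac{\alpha_r}{2}\norm{i_{r'}}_1\le K$ together with $\beta_{r'}\le\beta_{r+1}-1$). Two small remarks: your ``lag'' should be indexed at $r'$, not $r$ (you wrote $\tfrac{\alpha_r}{2}\norm{i_r}_1$ where $\tfrac{\alpha_r}{2}\norm{i_{r'}}_1$ is what appears), though the uniform bound $\le K$ makes this harmless; and framing the induction through $\Phi(r)$ is merely a restatement, since the right-hand side is $i$-independent and the inductive step still requires checking each $i\in A_{r+1}^{\Wea}$ individually, exactly as the paper does.
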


\begin{proof}[Proof of Proposition~\ref{prop:weak-expl}]
The proof is done by induction. Let $W_r$ denote the event that for any $1\le s\le r$ and any $i\in A_s^\Wea$, \eqref{eq:weak-expl} holds. $W_1$ holds because $A_1^\Wea = \emptyset$. Now we assume $W_r$ holds and consider $W_{r+1}$.

If $A_{r+1}^\Wea = \emptyset$, then $W_{r+1}$ holds. If $A_{r+1}^\Wea \ne \emptyset$, for $i\in A_{r+1}^\Wea$, consider $n_i(r+1)$ in different cases:

If $i\in A_r^\Wea$, then $n_i(r) \ge \frac{\alpha_{r-1}}{2} \sum_{s=1}^{r-1}\I{V_s}\norm{i_s}_1 -(\beta_r-1)K$. Recall that $\alpha_r = \min_{1\le s\le r, A_s^\Wea\ne \emptyset} m(\iset{K}, A_s^\Wea )$. So we have 
\begin{align*}
n_i(r+1) 
\ge n_i(r) + \I{V_r} \norm{c_r}_0\alpha_r \ge \frac{\alpha_{r}}{2} \sum_{s=1}^{r}\I{V_s}\norm{i_s}_1 -(\beta_{r+1}-1)K \,,
\end{align*}
where we use the fact that $\alpha_r$ is non-increasing, $\beta_r$ is non-decreasing as well as 
\begin{align*}
\norm{i_r}_1 
= \norm{\lc c_r\cdot\norm{c_r}_0 \rc}_1 
\le \norm{c_r}_0 + \norm{c_r}_0 \cdot \norm{c_r}_1 
= 2\norm{c_r}_0 \,.
\addeq \label{eq:rounding-approx}
\end{align*}

If $i\notin A_r^\Wea$, then $i \in A_s^\Str$ for all $1\le s \le r$ and thus $\beta_{r+1}\ge \beta_r+1$. Let $r' = \max \{s\le r \,:\, V_s \}$. If such $r'$ does not exist, then 
\begin{align*}
n_i(r+1) 
\ge 0 \ge \frac{\alpha_{r}}{2} \sum_{s=1}^{r}\I{V_s}\norm{i_s}_1 -(\beta_{r+1}-1)K \,.
\end{align*}
If such $r'$ exists
\begin{align*}
n_i(r+1) 
& \ge n_i(r') 
> \min_{j\in A_{r'}^\Wea} n_j(r')  \ge \frac{\alpha_{r'-1}}{2} \sum_{s=1}^{r'-1}\I{V_s}\norm{i_s}_1 -(\beta_{r'}-1)K \\
& \ge \frac{\alpha_{r}}{2} \sum_{s=1}^{r}\I{V_s}\norm{i_s}_1 - \frac{\alpha_{r}}{2}\norm{i_{r'}}_1 -(\beta_{r'}-1)K  \ge   \frac{\alpha_{r}}{2} \sum_{s=1}^{r}\I{V_s}\norm{i_s}_1 -\beta_{r'}K \\
& \ge \frac{\alpha_{r}}{2} \sum_{s=1}^{r}\I{V_s}\norm{i_s}_1 -(\beta_{r+1}-1)K \,,
\end{align*}
where the facts $\alpha_r \le 1$, $\norm{i_{r'}}_1\le 2K$ and $\beta_{r'} \le \beta_{r+1}-1$ are used.

Now we have proved that $W_{r+1}$ holds based on the assumption of $W_r$, hence $W_r$ holds for any $r$, which gives the result of Proposition~\ref{prop:weak-expl}.

\end{proof}

Based on Proposition~\ref{prop:weak-expl}, $\sum_{s=1}^{r}\I{V_s}\norm{i_s}_1$ can be upper bounded by the following fact:

\begin{proposition}
For any $r\ge 1$, $\sum_{s=1}^{r}\I{V_s}\norm{i_s}_1 \le \frac{2\gamma(r)+2K\beta_r}{\alpha_r}$.
\label{prop:weak-expl-2}
\end{proposition}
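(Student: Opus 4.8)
The plan is to reduce the whole sum to its final weak-exploration round and then feed that round into Proposition~\ref{prop:weak-expl}. Concretely, I would set $r' = \max\{s\le r : V_s\}$, the largest index up to $r$ at which the weak-exploration event $V_s$ holds. If no such $r'$ exists, then $\I{V_s}=0$ for all $s\le r$, so the left-hand side is $0$ while the right-hand side is nonnegative, and the claim is immediate. Otherwise, since $A_1^\Wea=\emptyset$ (as noted in the proof of Proposition~\ref{prop:weak-expl}), we have $r'\ge 2$ and $A_{r'}^\Wea\ne\emptyset$.

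The second step is to split the sum as $\sum_{s=1}^{r}\I{V_s}\norm{i_s}_1 = \sum_{s=1}^{r'-1}\I{V_s}\norm{i_s}_1 + \norm{i_{r'}}_1$, where the rounds strictly between $r'$ and $r$ drop out by maximality of $r'$. For the tail term I would use the rounding estimate \eqref{eq:rounding-approx} to get $\norm{i_{r'}}_1\le 2\norm{c_{r'}}_0\le 2K$. For the head term, the event $V_{r'}$ supplies a minimizing action $i^\ast\in A_{r'}^\Wea$ with $n_{i^\ast}(r')<\gamma(r')$; applying Proposition~\ref{prop:weak-expl} to the pair $(i^\ast,r')$ and rearranging yields $\sum_{s=1}^{r'-1}\I{V_s}\norm{i_s}_1 < \frac{2\gamma(r')+2(\beta_{r'}-1)K}{\alpha_{r'-1}}$.

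The final step is a monotonicity comparison showing that this bound plus the $2K$ tail is at most $\frac{2\gamma(r)+2K\beta_r}{\alpha_r}$. The key observation is that $\frac{\gamma(r)}{\alpha_r} = (\sigma/D)^{2/3}\alpha_r^{-1/3}t_r^{2/3}$ is non-decreasing in $r$, because $\alpha_r$ is non-increasing (so $\alpha_r^{-1/3}$ is non-decreasing) and $t_r$ is non-decreasing; together with $\alpha_{r'-1}\ge\alpha_{r'}\ge\alpha_r$ this gives $\frac{\gamma(r')}{\alpha_{r'-1}}\le\frac{\gamma(r')}{\alpha_{r'}}\le\frac{\gamma(r)}{\alpha_r}$, handling the $\gamma$-term. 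The $K$-terms are then controlled using $\alpha_r\le 1$ (to absorb $+2K$ into $\frac{2K}{\alpha_r}$) and $\beta_{r'}\le\beta_r$, so that $\frac{2(\beta_{r'}-1)K}{\alpha_{r'-1}}+2K \le \frac{2(\beta_{r'}-1)K}{\alpha_r}+\frac{2K}{\alpha_r} = \frac{2K\beta_{r'}}{\alpha_r}\le\frac{2K\beta_r}{\alpha_r}$.

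The main obstacle is the monotonicity bookkeeping in this last step: one must notice that it is $\gamma(r)/\alpha_r$, rather than $\gamma(r)$ alone, that is monotone (since $\gamma(r)$ need not be), and one must verify that the index mismatch between $\alpha_{r'-1}$ inherited from Proposition~\ref{prop:weak-expl} and the target denominator $\alpha_r$ points in the favorable direction. Everything else is a direct application of the already-established Proposition~\ref{prop:weak-expl} and the rounding bound \eqref{eq:rounding-approx}.
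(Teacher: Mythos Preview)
Your proposal is correct and follows essentially the same argument as the paper: define $r'=\max\{s\le r: V_s\}$, combine Proposition~\ref{prop:weak-expl} with the threshold $\gamma(r')$ from $V_{r'}$ and the bound $\norm{i_{r'}}_1\le 2K$, then pass from $r'$ to $r$ via the monotonicity of $\gamma(r)/\alpha_r$, $\alpha_r^{-1}$, and $\beta_r$ together with $\alpha_r\le 1$. Your handling of the $\alpha_{r'-1}$ versus $\alpha_r$ index mismatch and the empty case is in fact slightly more explicit than the paper's writeup.
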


\begin{proof}[Proof of Proposition~\ref{prop:weak-expl-2}]

Let $r' = \max \{s\le r \,:\, V_s \}$. Then
\begin{align*}
\gamma(r') > \min_{i\in A_{r'}^\Wea} n_i(r')  \ge \frac{\alpha_{r'-1}}{2} \sum_{s=1}^{r'-1}\I{V_s}\norm{i_s}_1 -(\beta_{r'}-1)K \,.
\end{align*}
Hence
\begin{align*}
\sum_{s=1}^{r}\I{V_s}\norm{i_s}_1 
& \le \sum_{s=1}^{r'-1}\I{V_s}\norm{i_s}_1 + \norm{i_{r'}}_1 
\le \frac{2\gamma(r')+2K(\beta_{r'}-1)}{\alpha_{r'}} + 2K \\
& \le \frac{2\gamma(r')+2K\beta_{r'}}{\alpha_{r'}}\,.
\end{align*}
Since $\alpha_r$ is non-increasing, $\beta_r$ is non-decreasing and $\gamma(r)/\alpha_r = \alpha_r^{-1/3}(\sigma t_r/D)^{2/3}$ is non-decreasing, we have  $\sum_{s=1}^{r}\I{V_s}\norm{i_s}_1 \le \frac{2\gamma(r)+2K\beta_r}{\alpha_r}$.

\end{proof}

Now we are ready to upper bound the first term in \eqref{eq:alg2-regret}:
\begin{align*}
\sum_{r=1}^{r_T} \I{V_r} \norm{i_r}_1D 
\le \frac{2\gamma(r_T)+2K\beta_{r_T}}{\alpha_{r_T}}D
= 2\alpha_{r_T}^{-1/3}D^{1/3}(\sigma T)^{2/3} + 2KD\frac{\beta_{r_T}}{\alpha_{r_T}} \,.
\addeq\label{eq:alg2-regret-term1}
\end{align*}

Next consider the second term in \eqref{eq:alg2-regret}: $\sum_{r=1}^{r_T} \I{V_r^c} \norm{i_r}_1 \max_{i\in A_r} \gap_i(\theta)$. Given $U_r$ holds for all $r$ we know that $i_1(\theta)$ is never eliminated. Then for any $i\in A_r$, we have $|\htheta_{i,r}-\theta_i|\le g_{i,r}(\delta)$ and $\htheta_{i,r}+g_{i,r}(\delta)\ge \htheta_{i_1(\theta)} - g_{i_1(\theta),r}(\delta)$. Therefore, 
\begin{align*}
\gap_i(\theta) \le \min \left\{ D, 2g_{i,r}(\delta)+2g_{i_1(\theta),r}(\delta) \right\} \le \min \left\{ D, 4\sigma\sqrt{6\log\frac{2KT}{\delta}}\left( \min_{i\in A_r}n_i(r) \right)^{-1/2} \right\} \,.
\end{align*}
So
\begin{align*}
\sum_{r=1}^{r_T} \I{V_r^c} \norm{i_r}_1 \max_{i\in A_r} \gap_i(\theta) \le \sum_{r=1}^{r_T} \I{V_r^c} \norm{i_r}_1 \min \left\{ D,C ( \min_{i\in A_r}n_i(r) )^{-1/2} \right\} \,,
\addeq\label{eq:alg2-term2-0}
\end{align*}
where $C=4\sigma\sqrt{6\log\frac{2KT}{\delta}}$. 

The next step is to lower bound $\min_{i\in A_r}n_i(r)$ when $V_r^c$ happens. Define $\eta_{\min}=\min_{A\in \iset{K},|A|\ge 2}m(A,A^\Str)$. For $i\in A_r^\Str$,
\begin{align*}
n_i(r) \ge \sum_{s=1}^{r-1} \I{V_s^c}\norm{c_s}_0 m(A_s,A_s^\Str) 
\ge \frac{\eta_{\min}}{2} \sum_{s=1}^{r-1} \I{V_s^c}\norm{i_s}_1 \,.
\addeq\label{eq:alg2-str-obsv}
\end{align*}

For $i \in A_r^\Wea$, since $V_r^c$ happens and $A_r^\Wea \ne \emptyset$, we have 
\begin{align*}
n_i(r) \ge \min\{ \min_{i\in A_r^\Str} n_i(r), \gamma(r)\} \ge \min\left\{ \frac{\eta_{\min}}{2} \sum_{s=1}^{r-1} \I{V_s^c}\norm{i_s}_1, \gamma(r) \right\} \,.
\end{align*}
By Proposition~\ref{prop:weak-expl-2}, 
\begin{align*}
 \frac{\eta_{\min}}{2} \sum_{s=1}^{r-1} \I{V_s^c}\norm{i_s}_1 
& \ge \frac{1}{2K} \left( t_r-\sum_{s=1}^{r} \I{V_s}\norm{i_s}_1 \right) \ge \frac{1}{2K} \left( t_r-\frac{2\gamma(r)+2K\beta_r}{\alpha_r} \right) \\
& =  \frac{1}{2K} \left( t_r - 2 \alpha_r^{-1/3}\left(\frac{\sigma t_r}{D}\right)^{2/3}-2K\beta_r/\alpha_r  \right)\\
& \ge \frac{1}{2K} t_r - \left(\frac{\sigma t_r}{D}\right)^{2/3} -K^2 \,,
\end{align*}
where we used $\alpha_r, \eta_{\min} \ge 1/K$ and $\beta_r\le K$.

For $t_r\ge \frac{125\sigma^2}{D^2}K^3+10K^3$, we have $\frac{4}{5}t_r \ge 4K\left(\frac{\sigma t_r}{D}\right)^{2/3}$ and $\frac{1}{5}t_r \ge 2K^3$, so
\begin{align*}
 \frac{\eta_{\min}}{2} \sum_{s=1}^{r-1} \I{V_s^c}\norm{i_s}_1 
& \ge \frac{1}{2K} t_r - \left(\frac{\sigma t_r}{D}\right)^{2/3} -K^2 \\
& \ge 2\left(\frac{\sigma t_r}{D}\right)^{2/3} + K^2 - \left(\frac{\sigma t_r}{D}\right)^{2/3} -K^2 \\
& = \left(\frac{\sigma t_r}{D}\right)^{2/3} \ge \left(\frac{\sigma \alpha_r t_r}{D}\right)^{2/3} = \gamma(r) \,.
\end{align*}

So we have proved that for any $r\le r_T$ such that $t_r\ge T_0 = \frac{125\sigma^2}{D^2}K^3+10K^3$ and $V_r^c$ happens, $\min_{i\in A_r}n_i(r)\ge \gamma(r)\ge (\sigma\alpha_{r_T} t_r/D)^{2/3}$. Therefore, following \eqref{eq:alg2-term2-0} gives
\begin{align*}
& \sum_{r=1}^{r_T} \I{V_r^c} \norm{i_r}_1 \max_{i\in A_r} \gap_i(\theta) \\
& \le \sum_{r=1}^{r_T} \I{V_r^c} \norm{i_r}_1 \min \left\{ D,C ( \min_{i\in A_r}n_i(r) )^{-1/2} \right\} \\
& \le \sum_{r\ge 1: t_r<T_0} \norm{i_r}_1 D + \sum_{r\le r_T: t_r\ge T_0} \norm{i_r}_1 C\left(\frac{\sigma \alpha_{r_T}}{D}\right)^{-1/3} t_r^{-1/3} \\
& \le (T_0+2K)D +  C\left(\frac{\sigma \alpha_{r_T}}{D}\right)^{-1/3}  \sum_{r\le r_T: t_r\ge T_0} (t_{r+1}-t_r)(t_{r+1}-2K)^{-1/3} \\
& \le (T_0+2K)D +  C\left(\frac{\sigma \alpha_{r_T}}{D}\right)^{-1/3} \int_{T_0}^{t_{r_T+1}} (x-2K)^{-1/3} dx \\
& \le (T_0+2K)D +  C\left(\frac{\sigma \alpha_{r_T}}{D}\right)^{-1/3} \int_{T_0-2K}^{t_{r_T}} x^{-1/3} dx \\
& \le (T_0+2K)D +  \frac{3}{2}C\left(\frac{\sigma \alpha_{r_T}}{D}\right)^{-1/3} T^{2/3} \\
& =  \frac{125\sigma^2 K^3}{D}+ (10K^3+2K)D +  \alpha_{r_T}^{-1/3}D^{1/3}(\sigma T)^{2/3} \cdot 6\sqrt{6\log\frac{2KT}{\delta}} \,. 
\addeq \label{eq:alg2-regret-term2}
\end{align*}
Now plugging \eqref{eq:alg2-regret-term1} and \eqref{eq:alg2-regret-term2} into \eqref{eq:alg2-regret} gives
\begin{align*}
R_T(\theta) \le \alpha_{r_T}^{-1/3}D^{1/3}(\sigma T)^{2/3} \cdot 7\sqrt{6\log\frac{2KT}{\delta}}+\frac{125\sigma^2 K^3}{D}+ 13K^3D \,. 
\end{align*}

If $\Sigma$ is strongly observable, then $A_r^\Wea$ is always empty and $V_r^c$ always happens. According to \eqref{eq:alg2-regret} \eqref{eq:alg2-term2-0} and \eqref{eq:alg2-str-obsv} we have 
\begin{align*}
R_T(\theta)
& \le  \sum_{r=1}^{r_T} \norm{i_r}_1 \max_{i\in A_r} \gap_i(\theta) \\
& \le \sum_{r=1}^{r_T}  (t_{r+1}-t_r) \min \left\{ D, C\left(\frac{\eta_{\min}}{2}\right)^{-1/2} t_r^{-1/2} \right\} \\
& \le 2KD + C\left(\frac{\eta_{\min}}{2}\right)^{-1/2} \int_{0}^{t_{r_T}} x^{-1/2} dx \\
& \le 2KD + 8\sigma \sqrt{\frac{T}{\eta_{\min}} \cdot 12\log\frac{2KT}{\delta}} \,.
\end{align*}

To finish the proof, it suffices to show that $\frac{1}{\alpha_{r_T}}\le \dom(\Sigma)$ and 
$\frac{1}{\eta_{\min}}\le \indp(\Sigma)50 \log K $, which is based on the following fact:

\begin{proposition}
For any $A,A'\subset \iset{K}$ Let $\domlp(A,A')$ denote the minimum fractional cover number from $A$ to $A'$, that is 
\begin{align*}
\domlp(A,A') = & \min_{b\in [0,\infty)^A} \sum_{i\in A} b_i \\
\text{ s.t. } & \sum_{i:j\in S_i} b_i \ge 1 \,\, \forall j\in A' \,.
\end{align*}
Then $m(A, A') = \frac{1}{\domlp(A,A')}$.
\label{prop:domlp}
\end{proposition}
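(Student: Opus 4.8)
The plan is to recognize both quantities as optimal values of a pair of linear programs built from the same bipartite ``coverage'' structure, and to relate them by a direct rescaling argument, so that no appeal to strong LP duality or the minimax theorem is required. Write the incidence coefficients $M_{kj}=\I{k\in S_j}$ for $k\in A'$ and $j\in A$, so that $M_{kj}=1$ exactly when playing $j$ reveals the parameter of $k$. By the definitions of $c(A,A')$ and $m(A,A')$,
\[
m(A,A') = \max_{c\in\simplex^{|A|}} \ \min_{k\in A'}\ \sum_{j\in A} M_{kj}\,c_j \,,
\]
while $\domlp(A,A')$ is the value of the covering program $\min\{\sum_{j\in A} b_j : b\in[0,\infty)^A,\ \sum_{j\in A} M_{kj} b_j \ge 1\ \forall k\in A'\}$. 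Thus $m(A,A')$ is a fractional max--min packing value and $\domlp(A,A')$ the associated fractional covering number over the same matrix $M$.

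Before relating them, I would first dispatch feasibility and strict positivity, which is the only point requiring care. In every use of the proposition either $A=\iset{K}$, in which case assumption~(iv) guarantees that each action is observed by some action, or $A'=A^\Str$, which by its definition consists precisely of the actions of $A$ that are observable from inside $A$; in both cases every $k\in A'$ has some $j\in A$ with $M_{kj}=1$. Consequently the covering program is feasible, so its optimum is attained and $\domlp(A,A')<\infty$; and evaluating the max--min at the uniform $c=\tfrac1{|A|}\mathbf 1$ gives $\min_{k\in A'}\sum_{j\in A} M_{kj}c_j \ge 1/|A|>0$, so $m(A,A')>0$ (the maximum over the compact simplex of the continuous concave objective being attained).

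The two inequalities then follow by rescaling feasible points between the simplex and the covering polytope. For $m(A,A')\ge 1/\domlp(A,A')$, take an optimal $b^*$ for the covering program and set $c=b^*/\domlp(A,A')$; then $c\in\simplex^{|A|}$ and $\sum_{j} M_{kj}c_j=(\sum_{j} M_{kj}b^*_j)/\domlp(A,A')\ge 1/\domlp(A,A')$ for every $k\in A'$, so the max--min value is at least $1/\domlp(A,A')$. For the reverse inequality, take an optimal $c^*$ for the max--min problem (with value $m(A,A')>0$) and set $b=c^*/m(A,A')$; then $b\ge 0$, each covering constraint reads $\sum_{j} M_{kj}b_j=(\sum_{j} M_{kj}c^*_j)/m(A,A')\ge 1$, and the objective equals $\sum_{j} c^*_j/m(A,A')=1/m(A,A')$, whence $\domlp(A,A')\le 1/m(A,A')$. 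Combining the two yields $m(A,A')=1/\domlp(A,A')$.

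I expect no genuine obstacle here: the statement is the standard reciprocal relationship between a fractional max--min packing value and the corresponding fractional covering number, and the explicit scaling maps above make it elementary. The only subtlety is ensuring $m(A,A')>0$ so that the division in the second direction is legitimate, which is exactly what the feasibility step secures; in degenerate cases where some $k\in A'$ is unobservable from $A$ one has $m(A,A')=0$ and $\domlp(A,A')=\infty$, so the identity still holds under the convention $1/\infty=0$.
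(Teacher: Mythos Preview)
Your argument is correct and is essentially the same as the paper's: both identify the reciprocal relation via the rescaling $b=c/a$ (equivalently $c=b\cdot m$) between a point on the simplex and a feasible cover. The paper does this as a single change-of-variables in the optimization, while you split it into the two inequalities and add an explicit feasibility/positivity check; the extra care is fine but not needed for the statement as used.
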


\begin{proof}[Proof of Proposition~\ref{prop:domlp}]

Recall that 
\begin{align*}
m(A, A')
& =  \max_{c\in \simplex^A} \min_{i\in A'} \sum_{j:i\in S_j} c_j \\
& =  \max_{c\in \simplex^A , a} a 
\text{ s.t. } \sum_{i:j\in S_i} c_i \ge a \,\, \forall j\in A' \,.
\end{align*}
Let $b = c/a$, then
\begin{align*}
m(A, A')
& =  \max_{b \in [0,\infty)^A , a} a \text{ s.t. }  \sum_{i:j\in S_i} b_i \ge 1 \,\, \forall j\in A' \text{ and } \sum_{i\in A} b_i = \frac{1}{a} \\
& = \max_{b \in [0,\infty)^A} \frac{1}{\sum_{i\in A} b_i} \text{ s.t. } \sum_{i:j\in S_i} b_i \ge 1 \,\, \forall j\in A' \\
& = \frac{1}{\domlp(A,A')} \,.
\end{align*}

\end{proof}

To lower bound $\alpha_{r_T}$, let $\dom(A,A')$ be the integer version of $\domlp(A,A')$ by restricting $b\in \N^A$. Then we have $\dom(\Sigma) = \dom(\iset{K}, \Wea(\Sigma))$ and 
\begin{align*}
\alpha_{r_T} \ge m(\iset{K}, \Wea(\Sigma)) = \frac{1}{\domlp(\iset{K}, \Wea(\Sigma))} \ge \frac{1}{\dom(\Sigma)} \,,
\end{align*}
where we used the fact that $A_r^\Wea \subset \Wea(\Sigma)$ for any $r\le r_T$.

To lower bound $\eta_{\min}$, we use
\begin{align*}
\eta_{\min} =\min_{A\in \iset{K},|A|\ge 2}m(A,A^\Str) = \min_{A\in \iset{K},|A|\ge 2}m(A,A) =  \frac{1}{\max_{A\in \iset{K},|A|\ge 2}\domlp(A,A)}
\end{align*}
($A^\Str=A$ for strongly observable $\Sigma$), thus
\begin{align*}
\max_{A\in \iset{K},|A|\ge 2} \dom(A,A) \ge \frac{1}{\eta_{\min}} \,.
\end{align*}

For any $A\in \iset{K},|A|\ge 2$, let $\Sigma_A$ be the subgraph of $\Sigma$ on  $A$. We apply Lemma~\ref{lemma:dom-indp} on $\Sigma_A$ with the subset $W=A$. Then the lemma states that $A$ contains an independent set $U$ of size at least $\frac{\dom(A,A)}{50\log |A|}$. Since an independent set of $\Sigma_A$ is also an independent set of $\Sigma$, for each subset $A$ there exists an independent set of $\Sigma$  with size at least $ \frac{\dom(A,A)}{50\log |A|}$. So the independence number 
\begin{align*}
\indp(\Sigma)\ge \max_{A\in \iset{K},|A|\ge 2}  \frac{\dom(A,A)}{50\log |A|} \ge \frac{1}{50\log K}\max_{A\in \iset{K},|A|\ge 2} \dom(A,A) \ge \frac{1}{\eta_{\min}50\log K} \,,
\end{align*}
which indicates $\frac{1}{\eta_{\min}}\le \indp(\Sigma)50 \log K $.

\end{proof}

\subsection{Proof of Theorem~\ref{thm:ub2-alg2}}

\begin{proof}[Proof of Theorem~\ref{thm:ub2-alg2}]

Similarly to the proof of Theorem~\ref{thm:ub2-alg2}, we define high probability events
\[
U_r = \left\{ \forall i\in \iset{K}, |\htheta_{i,r}-\theta_i|\le g_{i,r}(\delta)\right\}\,.
\]
and upper bound the regret based on the fact that for all $r\ge 2$, $U_r$ holds. The rest of the proof will be based on upper bounding the number of round before all sub-optimal actions are eliminated.

Define $r_T=\max\{r \,:\, t_t<T, |A_r|\ge 2\}$, event 
\[
V_r=\left\{A_r^\Wea\ne \emptyset, \min_{i\in A_r^\Wea} n_i(r)< \min\{ \min_{i\in A_r^\Str} n_i(r), \gamma(r)\}\right\} 
\] and $V_r^c$ be its complement. 

For any $r\le r_T$ and any $i\in A_r$, $i\ne i_1(\theta)$, we have $2g_{i,r}(\delta)+ 2g_{i_1(\theta),r}(\delta)\ge \gap_i(\theta) \ge \gap_{\min}(\theta)$, where $\gap_{\min}(\theta)$ denotes $\gap_{i_2(\theta)}(\theta)$. From $g_{i,r}(\delta)=\sigma\sqrt{\frac{2\log(8K^2r^3/\delta)}{n_i(r)}}$ we get
\begin{align*}
\gap_{\min}(\theta)
\le 2\sigma\sqrt{2\log(8K^2r^3/\delta)} \left(\frac{1}{\sqrt{n_i(r)}} + \frac{1}{\sqrt{n_{i_1(\theta)}(r)}} \right) \le C_r\left(\min_{i\in A_r} n_i(r) \right)^{-1/2} \,,
\end{align*}
where $C_r=4\sigma \sqrt{6\log\frac{2Kr}{\delta}}$, and thus 
\begin{align*}
\min_{i\in A_r} n_i(r) \le \frac{C_r^2}{\gap_{\min}^2(\theta)} \,.
\addeq\label{eq:expl-needed}
\end{align*}

Then consider the regret:
\begin{align*}
R_T(\theta)
& \le \sum_{r=1}^{r_T} \I{V_r}\inprod{i_r}{\gap(\theta)} + \sum_{r=1}^{r_T} \I{V_r^c}\inprod{i_r}{\gap(\theta)} \\
& \le \sum_{r=1}^{r_V} \I{V_r} \norm{i_r}_1\gap_{\max}(\theta) + \sum_{r=1}^{r_W} \I{V_r^c} \norm{i_r}_1 \max_{i\in A_r} \gap_i(\theta)\,.
\addeq \label{eq:alg2-ub2-regret}
\end{align*} 
where $r_V=\max\{r\le r_T \,:\, V_r\}$ and $r_W=\max\{r\le r_T \,:\, V_r^c\}$. 

Since $\min_{i\in A_{r_V}^\Wea} n_i(r_V)<  \min_{i\in A_{r_V}^\Str} n_i(r_V)$ we have 
\begin{align*}
\min_{i\in A_{r_V}} n_i(r_V)=\min_{i\in A_{r_V}^\Wea} n_i(r_V) \ge \frac{1}{2\dom(\Sigma)} \sum_{s=1}^{r_V-1} \I{V_s}\norm{i_s}_1 - K^2 
\end{align*}
by applying Proposition~\ref{prop:weak-expl}. Then we can upper bound the first term in \eqref{eq:alg2-ub2-regret} by
\begin{align*}
\sum_{r=1}^{r_V} \I{V_r} \norm{i_r}_1 \le \frac{2\dom(\Sigma)C_{r_V}^2}{\gap_{\min}^2(\theta)} +2\dom(\Sigma)K^2+2K \,.
\addeq \label{eq:alg2-ub2-weak-expl}
\end{align*}
Regarding the second term in \eqref{eq:alg2-ub2-regret}, recall that for any $r\le r_T$ such that $t_r\ge T_0 = \frac{125\sigma^2}{D^2}K^3+10K^3$ and $V_r^c$ happens, $\min_{i\in A_r}n_i(r)\ge \gamma(r)\ge (\sigma\alpha_{r_T} t_r/D)^{2/3}\ge \left( \frac{\sigma t_r}{\dom(\Sigma)D} \right)^{2/3}$. Using the fact that $\max_{i\in A_r} \gap_i(\theta) \le \min \left\{ \gap_{\max}(\theta) , C_r\left( \min_{i\in A_r}n_i(r) \right)^{-1/2} \right\}$ gives
\begin{align*}
& \sum_{r=1}^{r_W} \I{V_r^c} \norm{i_r}_1 \max_{i\in A_r} \gap_i(\theta) \\
& \le \sum_{r=1}^{r_W} \I{V_r^c} \norm{i_r}_1 \min \left\{ \gap_{\max}(\theta),C_r ( \min_{i\in A_r}n_i(r) )^{-1/2} \right\} \\
& \le \sum_{r\ge 1: t_r<T_0} \norm{i_r}_1 \gap_{\max}(\theta) + \sum_{r\le r_W: t_r\ge T_0} \norm{i_r}_1 C_{r_W} \left(\frac{\sigma}{\dom(\Sigma)D}\right)^{-1/3} t_r^{-1/3} \\
& \le (T_0+2K)\gap_{\max}(\theta) +  C_{r_W} \left(\frac{\sigma }{\dom(\Sigma)D}\right)^{-1/3}  \sum_{r\le r_W: t_r\ge T_0} (t_{r+1}-t_r)(t_{r+1}-2K)^{-1/3} \\
& \le (T_0+2K)\gap_{\max}(\theta) +  C_{r_W}\left(\frac{\sigma }{\dom(\Sigma)D}\right)^{-1/3} \int_{T_0}^{t_{r_W+1}} (x-2K)^{-1/3} dx \\
& \le (T_0+2K)\gap_{\max}(\theta) +  C_{r_W} \left(\frac{\sigma }{\dom(\Sigma)D}\right)^{-1/3} \int_{T_0-2K}^{t_{r_W}} x^{-1/3} dx \\
& \le (T_0+2K)\gap_{\max}(\theta) +  \frac{3}{2}C_{r_W} \left(\frac{\sigma }{\dom(\Sigma)D}\right)^{-1/3} t_{r_W}^{2/3} \,.
\addeq \label{eq:alg2-regret-ub2-term2}
\end{align*}

Now we upper bound $t_{r_W}$. If $t_{r_W}\ge T_0$ then $\frac{C_{r_W}^2}{\gap_{\min}^2(\theta)} \ge \min_{i\in A_{r_W}}n_i(r_W) \ge \left( \frac{\sigma t_{r_W}}{\dom(\Sigma)D} \right)^{2/3}$. Hence
\begin{align*}
t_{r_W}^{2/3} \le \left(\frac{\sigma }{\dom(\Sigma)D}\right)^{-2/3} \frac{C_{r_W}^2}{\gap_{\min}^2(\theta)}+T_0^{2/3} 
\addeq\label{eq:alg2-ub2-trw} \,.
\end{align*}
Combining \eqref{eq:alg2-ub2-regret} \eqref{eq:alg2-ub2-weak-expl} \eqref{eq:alg2-regret-ub2-term2} and \eqref{eq:alg2-ub2-trw} with $C_{r_W} \le C_{r_T}$ gives
\begin{align*}
R_T(\theta)
& \le  \frac{1603\dom(\Sigma)D\sigma^2}{\gap_{\min}^2(\theta)} \left( \log\frac{2Kr_T}{\delta} \right)^{3/2} 
+ 14K^3D + \frac{125\sigma^2K^3}{D}\\
& +  15\left( \dom(\Sigma)D\sigma^2\right)^{1/3}\left(\frac{125\sigma^2}{D^2}+10 \right)K^2 \left( \log\frac{2Kr_T}{\delta} \right)^{1/2}
 \,. 
\addeq\label{eq:alg2-ub2-final}
\end{align*}
Applying $r_T\le T$ gives the result of Theorem~\ref{thm:ub2-alg2}. 

Note that using $r_T\le T$ here is only for simplicity, actually $r_T$ can be upper bounded by some constant by more careful analysis. This is because, according to Proposition~\ref{prop:weak-expl-2},  $\sum_{s=1}^{r_T}\I{V_s}\norm{i_s}_1 = O\left( t_{r_T}^{2/3} \right)$, and $t_{r_W} = O\left( (\log t_{r_T})^{3/2} \right)$, we have 
\begin{align*}
t_{r_T} \le t_{r_W} + \sum_{s=1}^{r_T}\I{V_s}\norm{i_s}_1 = O\left( t_{r_T}^{2/3} \right) + O\left( (\log t_{r_T})^{3/2} \right) \,,
\end{align*}
which mean $t_{r_T}$ must be upper bounded by some constant independent with $T$.

\end{proof}
%%% Local Variables: 
%%% mode: latex
%%% TeX-master: "main"
%%% End: 

\end{document}